\newcommand{\diag}{\operatornamewithlimits{diag}}
\newtheorem{thm}{Theorem}
\newtheorem{lemma}{Lemma}
\newtheorem{col}{Corollary}
\newtheorem{prop}{Proposition}
\newtheorem{cond}{Condition}
\DeclareMathOperator*{\argmin}{arg\,min}
\DeclareMathOperator*{\E}{\mathbb{E}}
\date{} 
\author{Takumi Saegusa\thanks{tsaegusa@math.umd.edu.}} 
\author{Ali Shojaie\thanks{ashojaie@u.washington.edu.}}
\affil{University of Maryland and University of Washington}
\title{Joint Estimation of Precision Matrices in Heterogeneous Populations}
\begin{document}

\maketitle
\def\spacingset#1{\renewcommand{\baselinestretch}%
{#1}\normalsize} \spacingset{1}
\abstract{
We introduce a general framework for estimation of inverse covariance, or precision, matrices from heterogeneous populations.
The proposed framework uses a Laplacian shrinkage penalty to encourage similarity among estimates from disparate, but related, subpopulations, while allowing for differences among matrices. We propose an efficient alternating direction method of multipliers (ADMM) algorithm for parameter estimation, as well as its extension for faster computation in high dimensions by thresholding the empirical covariance matrix to identify the joint block diagonal structure in the estimated precision matrices.
We establish both variable selection and norm consistency of the proposed estimator for distributions with exponential or polynomial tails. Further, to extend the applicability of the method to the settings  with unknown populations structure, we propose a Laplacian penalty based on hierarchical clustering, and discuss conditions under which this data-driven choice results in consistent estimation of precision matrices in heterogenous populations. Extensive numerical studies and applications to gene expression data from subtypes of cancer with distinct clinical outcomes indicate the potential advantages of the proposed method over existing approaches.

\textbf{Keywords}: Hierarchical clustering; Graph Laplacian; High-dimensional estimation; Precision matrix; Heterogeneous populations; Sparsity.
}


\spacingset{1.5}

\section{Introduction}\label{sec:intro}
Estimation of large inverse covariance, or precision, matrices has received considerable attention in recent years. This interest is in part driven by the advent of high-dimensional data in many scientific areas, including high throughput \textit{omics} measurements, functional magnetic resonance images (fMRI), and applications in finance and industry. 
Applications of various statistical methods in such settings require an estimate of the (inverse) covariance matrix. Examples include  dimension reduction using principal component analysis (PCA), classification using linear or quadratic discriminant analysis (LDA/QDA), and discovering conditional independence relations in Gaussian graphical models (GGM).

In high-dimensional settings, where the data dimension $p$ is often comparable or larger than the sample size $n$, regularized estimation procedures often result in more reliable estimates. Of particular interest is the use of sparsity inducing penalties, specifically the $\ell_1$ or lasso penalty \citep{tibshirani1996}, which encourages sparsity in off-diagonal elements of the precision matrix \citep{MR2367824,MR2399568,FHTglasso,MR2719856}.
Theoretical properties of $\ell_1$-penalized precision matrix estimation have been studied under both multivariate normality, as well as some relaxations of this assumption \citep{MR2278363,MR2417391,MR2847973,MR2836766}.

Sparse estimation is particularly relevant in the setting of GGMs, where conditional independencies among variables correspond to zero off-diagonal elements of the precision matrix \citep{lauritzen1996}. 
The majority of existing approaches for estimation of high-dimensional precision matrices, including those cited in the previous paragraph, assume that the observations are identically distributed, and correspond to a single population.
However, data sets in many application areas include observations from several distinct subpopulations.
For instance, gene expression measurements are often collected for both healthy subjects, as well as patients diagnosed with different subtypes of cancer.
Despite increasing evidence for differences among genetic networks of cancer and healthy subjects \citep{ideker:krogan2012,sedaghat2014}, the networks are also expected to share many common edges. Separate estimation of graphical models for each of the subpopulations would ignore the common structure of the precision matrices, and may thus be inefficient; this inefficiency can be particularly significant in high-dimensional low sample settings, where $p \gg n$.

To address the need for estimation of graphical models in related subpopulations, few methods have been recently proposed for joint estimation of $K$ precision matrices $\Omega^{(k)}=(\omega^{(k)}_{ij})_{i,j=1}^p\in\mathbb{R}^{p\times p},k=1,\ldots,K$ \citep{MR2804206,Danaher}. 
These methods extend the penalized maximum likelihood approach by combining the Gaussian likelihoods for the $K$ subpopulations
\begin{equation}
\label{eqn:lik}
\ell_n(\Omega)
= \frac{1}{n}\sum_{k=1}^K  n_k\left(
\log \mbox{det}(\Omega^{(k)})-\mbox{tr}\left(\hat{\Sigma}_n^{(k)}\Omega^{(k)}\right)\right).
\end{equation}
Here, $n_k$ and $\hat{\Sigma}_n^{(k)}$ are the number of observations and the sample covariance matrix for the $k$th subpopulation, respectively, $n=\sum_{k=1}^K n_k$ is the total sample size and $\mbox{tr}(\cdot)$ and $\mbox{det}(\cdot)$ denote matrix trace and determinant.

To encourage similarity among estimated precision matrices, \citet{MR2804206} modeled the $(i,j)$-element of $\Omega^{(k)}$ as  product of a common factor $\theta_{ij}$ and group-specific parameters $\gamma^{(k)}_{ij}$, i.e. $\omega^{(k)}_{ij}=\delta_{ij}\gamma_{ij}^{(k)}$. 
Identifiability of the estimates is ensured by assuming $\delta_{ij} \ge 0$.
A zero common factor $\delta_{ij}=0$ induces sparsity across all subpopulations, whereas $\gamma_{ij}^{(k)}=0$ results in condition-specific sparsity for $\omega_{ij}^{(k)}$.
This reparametrization results in a non-convex optimization problem based on the Gaussian likelihood with $\ell_1$-penalties $\sum_{i\neq j}\delta_{ij}$ and $\sum_{i\neq j}\sum_{k=1}^K |\gamma_{ij}^{(k)}|$.
\citet{Danaher} proposed two alternative estimators by adding an additional convex penalty to the graphical lasso objective function: either a fused lasso penalty $\sum_{i\neq j}\sum_{k\neq k'}|\omega_{ij}^{(k)}-\omega_{ij}^{k'}|$ (FGL), or a group lasso penalty
$\sum_{i\neq j}\sqrt{\sum_{k=1}^K(\omega_{ij}^{(k)})^2}$
(GGL).
The fused lasso penalty has also been used by \citet{kolar2009}, for joint estimation of multiple graphical models in multiple time points. 
The fused lasso penalty strongly encourages the values of $\omega_{ij}^{(k)}$ to be similar across all subpopulations, both in values as well as sparsity patterns. On the other hand, the group lasso penalty results in similar estimates by shrinking all $\omega_{ij}^{(k)}$ across subpopulations to zero if $\sum_{k=1}^K (\omega_{ij}^{(k)})^2$ is small.

Despite their differences, methods of \citet{MR2804206} and \citet{Danaher} inherently assume that precision matrices in $K$ subpopulations are equally similar to each other, in that they encourage $\omega_{ij}^{(k)}$ and $\omega_{ij}^{(k')}$ and $\omega_{ij}^{(k)}$ and $\omega_{ij}^{(k'')}$ to be equally similar. 
However, when $K > 2$, some subpopulations are expected to be more similar to each other than others. For instance, it is expected that genetic networks of two subtypes of cancer be more similar to each other than to the network of normal cells.
Similarly, differences among genetic networks of various strains of a virus or bacterium are expected to correspond to the evolutionary lineages of their phylogenetic trees.
Unfortunately, existing methods for joint estimation of multiple graphical models ignore this heterogeneity in multiple subpopulations. 
Furthermore, existing methods assume subpopulation memberships are known, which limits their applicability in settings with complex but \emph{unknown} population structures; an important example is estimation of genetic networks of cancer cells with unknown subtypes.

In this paper, we propose a general framework for joint estimation of multiple precision matrices by capturing the heterogeneity among subpopulations.
In this framework, similarities among disparate subpopulations are presented using a \emph{subpopulation network} $G(V,E,W)$, a weighted graph whose node set $V$ is the set of subpopulations. The edges in $E$ and the weights $W_{kk'}$ for $(k,k')\in E$ represent the degree of similarity between any two subpopulations $k,k'$.
In the special case where $W_{kk'} = 1$ for all $k,k'$, the subpopulation similarities are only captured by the structure of the graph $G$. 
An example of such a subpopulation network is the line graph corresponding to observations over multiple time points, which is used in estimation of time-varying graphical models \citep{kolar2009}. 
As we will show in Section~\ref{sec:others}, other existing methods for joint estimation of multiple graphical models, e.g. proposals of \citet{Danaher}, can also be seen as special cases of this general framework. 

Our proposed estimator is the solution to a convex optimization problem based on the Gaussian likelihood with both $\ell_1$ and graph Laplacian \citep{MR2758338} penalties.
The graph Laplacian has been used in other applications for incorporating \emph{a priori} knowledge in classification \citep{RapaportZDBV07}, for principal component analysis on network data \citep{ShojaieM10}, and for penalized linear regression with correlated covariates \citep{MR2758338,MR2893860, WeinbergerEtal2006,LiuEtal2014,LiuEtal2011,ZhaoShojaie2015}.
The Laplacian penalty encourages similarity among estimated precision matrices according to the subpopulation network $G$. The $\ell_1$-penalty, on the other hand, encourages sparsity in the estimated precision matrices. Together, these two penalties capture both unique patterns specific to each subpopulation, as well as common patterns shared among different subpopulations.

We first discuss the setting where $G(V,E,W)$ is known from external information, e.g. known phylogenetic trees (Section~\ref{sec:method}), and later discuss the estimation of the subpopulation memberships and similarities using hierarchical clustering (Section~\ref{sec:HC}).
We propose an alternating methods of multipliers (ADMM) algorithm \citep{BoydPCPE11} for parameter estimation, as well as its extension for efficient computation in high dimensions by decomposing the  problem into block-diagonal matrices. 
Although we use the Gaussian likelihood, our theoretical results also hold for non-Gaussian distributions. 
We establish model selection and norm consistency of the proposed estimator under different model assumptions (Section~\ref{sec:theory}), with improved rates of convergence over existing methods based on penalized likelihood. 
We also establish the consistency of the proposed algorithm for the estimation of multiple precision matrices, in settings where the subpopulation network $G$ or subpopulation memberships are unknown. 
To achieve this, we establish the consistency of hierarchical clustering in high dimensions, by generalizing recent results of \citet{borysov2014} to the setting of arbitrary covariance matrices, which is of independent interest.

The rest of the paper is organized as follows.
In Section~\ref{sec:method} we describe the formal setup of the problem and present our estimator.
Theoretical properties of the proposed estimator are studied in Section~\ref{sec:theory}, and Section~\ref{sec:HC} discusses the extension of the method to the setting where the subpopulation network is unknown.
The ADMM algorithm for parameter estimation and its extension for efficient computation in high dimensions are presented in Section~\ref{sec:alg}.
Results of the numerical studies, using both simulated and real data examples, are presented in Section~\ref{sec:NumRes}.
Section~\ref{sec:conc} concludes the paper with a discussion. Technical proofs are collected in the Appendix.

\section{Model and Estimator}\label{sec:method}
\subsection{Problem Setup}\label{sec:setup}
Consider $K$ subpopulations with distributions $\mathcal{P}^{(k)}$, $k=1,\ldots,K$.
Let $X^{(k)} = (X^{(k),1},\ldots,X^{(k),p})^T\in \mathbb{R}^p$ be a random vector from the $k$th subpopulation with mean $\mu_k$ and the covariance matrix $\Sigma_0^{(k)}=(\sigma_{ij}^{(k)})_{i,j=1}^p$.
Suppose that an observation comes from the $k$th subpopulation with probability $\pi_k>0$.

Our goal is to estimate the precision matrices $\Omega_0^{(k)}\equiv (\Sigma_0^{(k)})^{-1}\equiv (\omega^{(k)}_{ij})_{i,j=1}^p$, $k=1,\ldots,K$. To this end, we use the Gaussian log-likelihood based on the \textit{correlation matrix} (see \citet{MR2417391}) as a working model for estimation of true $\Omega_0^{(k)}, k=1, \ldots, K$.
Let $X^{(k)}_i,i=1,\ldots,n_k,$ be independent and identically distributed (i.i.d.) copies from $\mathcal{P}^{(k)},k=1,\ldots,K$.
We denote the correlation matrices and their inverse by $\Theta^{(k)}=(\theta^{(k)}_{ij})_{i,j=1}^p$, and $\Psi^{(k)}=(\psi^{(k)}_{ij})_{i,j=1}^p, k = 1, \ldots, K$, respectively.
The Gaussian log-likelihood based on the correlation matrix can then be written as
\begin{equation}
\label{eqn:corrlik}
\tilde{\ell}_n(\Theta)
= \frac{1}{n}\sum_{k=1}^K  n_k\left(
\log \mbox{det}(\Theta^{(k)})-\mbox{tr}\left(\Psi_n^{(k)}\Theta^{(k)}\right)\right),
\end{equation}
where $\Psi_n^{(k)}, k=1, \ldots, K$ is the sample correlation matrix for subpopulation $k$.

Examining the derivative of \eqref{eqn:corrlik}, which consists of $\Psi_0^{(k)} - \Psi_n^{(k)}, k=1, \ldots, K$,
justifies its use as a working model for non-Gaussian data: the stationary points of \eqref{eqn:corrlik} is $\Psi_n^{(k)}$, which gives a consistent estimate of $\Psi^{(k)}_0$. Thus we do not, in general, need to assume multivariate normality. However, in certain applications, for instance LDA/QDA and GGM, the resulting estimate is useful only if the data follows a multivariate normal distribution.

\subsection{The Laplacian Shrinkage Estimator}
Let $\Theta = (\Theta^{(1)},\ldots,\Theta^{(K)})$ and write $\Theta_{ij} = (\theta_{ij}^{(1)},\ldots,\theta_{ij}^{(K)})^T \in \mathbb{R}^K, i,j=1,\ldots, p$ for a vector of $(i,j)$-elements across subpopulations.
Our proposed estimator, Laplacian Shrinkage for Inverse Covariance matrices from Heterogeneous populations (LASICH), first estimates the inverse of the correlation matrices for each of the $K$ subpopulations, and then transforms them into the estimator of inverse covariance matrices, as in \citet{MR2417391}.
In particular, we first obtain the estimate $\hat{\Theta}$ of the true inverse correlation matrix by solving the following optimization problem
\begin{eqnarray}
\hat{\Theta}_{\rho_n}
&&\equiv\argmin_{\Theta = \Theta^T,\Theta \succ 0}-\tilde{\ell}_n(\Theta) + \rho_n\lVert \Theta \rVert_1 + \rho_n\rho_2 \lVert \Theta\rVert_{L}\nonumber\\
&&\equiv\argmin_{\Theta = \Theta^T,\Theta \succ 0}-\tilde{\ell}_n(\Theta) + \rho_n \sum_{k=1}^K\sum_{i\neq j}\left| \Theta_{ij}^{(k)}\right|
+  \rho_{n}\rho_2\sum_{i\neq j} \lVert \Theta_{ij}\rVert_L,
\label{eqn:originallasso}
\end{eqnarray}
where $\Theta=\Theta^T$ enforces the symmetry of individual inverse correlation matrices, i.e. $\Theta^{(k)} = (\Theta^{(k)})^T$, and $\Theta \succ 0$ requires that $\Theta^{(k)}$ is positive definite for $k=1,\ldots,K$.
The $\ell_1$-penalty $\lVert \Theta\rVert_1=\sum_{k=1}^K\lVert \Theta^{(k)} \rVert_1$ in \eqref{eqn:originallasso} encourages sparsity in estimated inverse correlation matrices.
The graph Laplacian penalty, on the other hand, exploits the information in the subpopulation network $G$ to encourage similarity among values of $\theta_{ij}^{(k)}$ and $\theta_{ij}^{(k')}$. 
The tuning parameters $\rho_n$ and $\rho_n\rho_2$ control the size of each penalty term.
\begin{figure}[htbp]
  \begin{center}
    \includegraphics[height=0.175\textheight]{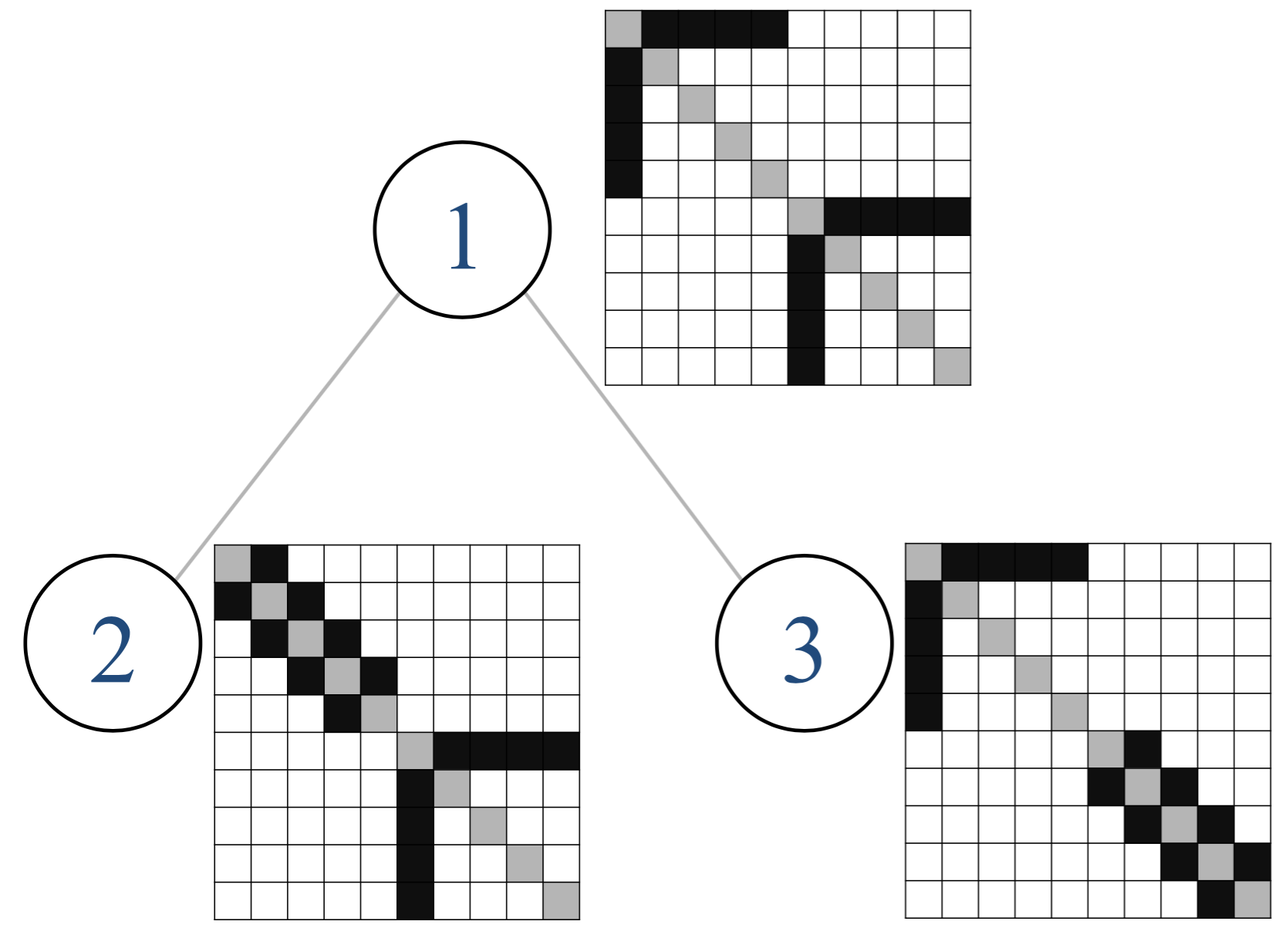}
  \end{center}
\caption{Illustration of similarities in the sparsity patterns of precision matrices $\Omega^{(1)}, \Omega^{(2)}$ and $\Omega^{(3)}$. Nonzero and zero off-diagonal entries are colored in black and white, respectively, while diagonal entires are colored in gray.  
The associated subpopulation network $G$ reflects the similarities between precision matrices of subpopulations 1 and 2 and 1 and 3. The simulation experiments in Section~\ref{sec:sims} use a similar subpopulation network in a high-dimensional setting.}
\label{fig:graphgroup}
\end{figure}

Figure~\ref{fig:graphgroup} illustrates the motivation for the graph Laplacian penalty $\lVert \Theta_{ij}\rVert_L$ in \eqref{eqn:originallasso}. The gray-scale images in the figure show the hypothetical sparsity patterns of precision matrices $\Theta^{(1)}, \Theta^{(2)}, \Theta^{(3)}$ for three related subpopulations. Here, $\Theta^{(1)}$ consists of two blocks with one ``hub'' node in each block; in $\Theta^{(2)}$ and $\Theta^{(3)}$ one of the blocks is changed into a ``banded'' structure. It can be seen that one of the two blocks in both $\Theta^{(2)}$ and $\Theta^{(3)}$ have a similar sparsity pattern as $\Theta^{(1)}$. However, $\Theta^{(2)}$ and $\Theta^{(3)}$ are not similar. The subpopulation network $G$ in this figure captures the relationship among precision matrices of the three subpopulations. Such complex relationships cannot be captured using the existing approaches, e.g. \citet{MR2804206,Danaher}, which encourage all precision matrices to be equally similar to each other. 
More generally, $G$ can be a weighted graph, $G(V,E,W)$, whose nodes represent the subpopulations $1, \ldots, K$. The edge weights $W:E\rightarrow \mathbb{R}_+$ represent the similarity among pairs of subpopulations, with larger values of $W_{kk'} \equiv W(k,k')>0$ corresponding to more similarity between precision matrices of subpopulations $k$ and $k'$.

In this section, we assume that the weighted graph $G$ is externally available, and defer the discussion of data-driven choices of $G$, based on hierarchical clustering, to Section~\ref{sec:HC}.
Given $G$, the (unnormalized) graph Laplacian penalty $\lVert \Theta_{ij}\rVert_L$ is defined as 
\begin{equation}\label{eq:lappen}
\lVert \Theta_{ij} \rVert_L = \left\{ \sum_{k,k'=1}^K W_{kk'}\left(\theta^{(k)}_{ij} - \theta^{(k')}_{ij} \right)^2 \right\}^{1/2}
\end{equation}
where $W_{kk'} = 0$ if $k$ and $k'$ are not connected.
The Laplacian shrinkage penalty can be alternatively written as $\lVert \Theta_{ij}\rVert_L = \Theta_{ij}^T L \Theta_{ij}$, where $L = (l_{kk'})_{k,k'=1}^K\in \mathbb{R}^{K\times K}$ is the Laplacian matrix \citep{chung1997} of the subpopulation network $G$ defined as
\begin{eqnarray*}
l_{kk'} =\left\{
\begin{array}{ll}
d_k -W_{kk} , & k=k',d_k\neq 0,\\
-W_{kk'},& k\neq k',\\
0, & \mbox{ otherwise,}
\end{array}
\right.
\end{eqnarray*}
where $d_k = \sum_{k' \ne k} W_{kk'}$ is the degree of node $k$ in $G$ with $W_{kk'} = 0$ if $k$ and $k'$ are not connected.
The Laplacian shrinkage penalty can also be defined in terms of the \emph{normalized} graph Laplacian, $I - D^{-1/2}WD^{-1/2}$, where $D = \diag(d_1, \ldots, d_K)$ is the diagonal degree matrix. The normalized Laplacian penalty,  
\begin{equation*}
\lVert \Theta_{ij} \rVert_L = \left\{ \sum_{k,k'=1}^K W_{kk'}\left(
\frac{\theta^{(k)}_{ij}}{\sqrt{d_k}} - \frac{\theta^{(k')}_{ij}}{\sqrt{d_{k'}}} \right)^2 \right\}^{1/2}, 
\end{equation*}
which we also denote as $\lVert \Theta_{ij}\rVert_L$, 
imposes smaller shrinkage on coefficients associated with highly connected subpopulations. We henceforth primarily focus on the normalized penalty. 

Given estimates of the inverse correlation matrices $\hat{\Theta}^{(1)},\ldots,\hat{\Theta}^{(K)}$ from \eqref{eqn:originallasso}, we obtain estimates of precision matrices $\Omega^{(k)}$ by noting that  $\Omega^{(k)} = \Xi^{(k)}\Theta^{(k)}\Xi^{(k)}$, where $\Xi^{(k)}$ is the diagonal matrix of reciprocals of the standard deviations $\Xi^{(k)}=\diag(\{\sigma_{11}^{(k)}\}^{-1/2},\ldots,\{\sigma_{pp}^{(k)}\}^{-1/2})$. Our estimator $\hat{\Omega}_{\rho_n} = (\hat{\Omega}_{\rho_n}^{(1)},\ldots,\hat{\Omega}_{\rho_n}^{(K)})$ of precision matrices $\Omega$ is thus defined as
\begin{equation*}
\hat{\Omega}_{\rho_n}^{(k)} = \{\hat{\Xi}^{(k)}\}^{-1} \hat{\Theta}_{\rho_n}^{(k)}\{\hat{\Xi}^{(k)}\}^{-1}, \quad k=1,\ldots, K,
\end{equation*}
where $\hat{\Xi}^{(k)}=\diag(1/ \{\hat{\sigma}_{11}^{(k)}\}^{1/2},\ldots,1/ \{\hat{\sigma}_{pp}^{(k)}\}^{1/2})$ with sample variance $\hat{\sigma}_{ii}^{(k)}$ for the $i$th element in the $k$th subpopulation.

A number of alternative strategies can be used instead of the graph Laplacian penalty in \eqref{eqn:originallasso}. 
First, similarity among coefficients of precision matrices can also be imposed using a ridge-type penalty, $\lVert \Theta_{ij}\rVert^2_L$. The main difference is that our  penalty $\lVert \Theta_{ij}\rVert_L$ discourages the inclusion of edges $\theta^{(1)}_{ij}, \ldots, \theta^{(K)}_{ij}$ if they are very different across the $K$ subpopulations. 
Another option is to use the graph trend filtering \citep{WangEtal2014},  which impose a fused lasso penalty over the subpopulation graph $G$. 
Finally, ignoring the weights $W_{kk'}$ in \eqref{eq:lappen}, the Laplacian shrinkage penalty resembles the Markov random field (MRF) prior used in Bayesian variable selection with structured covariates \cite{LiZhang2010}. Our penalized estimation framework can thus be seen as an alternative to using an MRF prior to estimate the precision matrices in a mixture of Gaussian distributions.

\subsection{Connections to Other Estimators}\label{sec:others}
To connect our proposed estimator to existing methods for joint estimation of multiple graphical models, we first give an alternative interpretation of the graph Laplacian penalty $\lVert \Theta_{ij}\rVert_L = \left(\Theta_{ij}^T L \Theta_{ij}\right)^{1/2}$ as a norm for a transformed version of $\theta^{(k)}_{ij}$s.
More specifically, consider the mapping $g_G: \mathbb{R}^K \rightarrow \mathbb{R}^K$ defined based on the Laplacian matrix for graph $G$ 
\begin{eqnarray*}
g_G(\Theta_{ij}) = \left\{\begin{array}{ll}
0, & k=k',\\
\sqrt{W_{kk'}}\,\left(\frac{\theta_{ij}^{(k)}}{\sqrt{2 d_k}}-\frac{\theta_{ij}^{(k')}}{\sqrt{2 d_{k'}}}\right), & k\neq k',\\
\end{array}\right.
\end{eqnarray*}
if $G$ has at least one edge. 
For a graph with no edges, define $g_G(\Theta_{ij}) = I_K \otimes \Theta_{ij} = \diag(\Theta_{ij})$, where $I_K$ is the $K$-identity matrix, and $\otimes$ denotes the Kronecker product. 
It can then be seen that the graph Laplacian penalty can be rewritten as
\begin{equation*}
\lVert \Theta_{ij}\rVert_L = \lVert g_G(\Theta_{ij})\rVert_F.
\end{equation*}
where $\lVert \cdot \rVert_F$ is the Frobenius norm.

Using the above interpretation, other methods for joint estimation of multiple graphical models can be seen as penalties on transformations $g_G(\Theta_{ij})$ corresponding to different graphs $G$. We illustrate this connection using the hypothetical subpopulation network shown in Figure~\ref{fig:compMethods}a. 
\begin{figure}[htbp]
\centering 
\includegraphics[width=.75\textwidth]{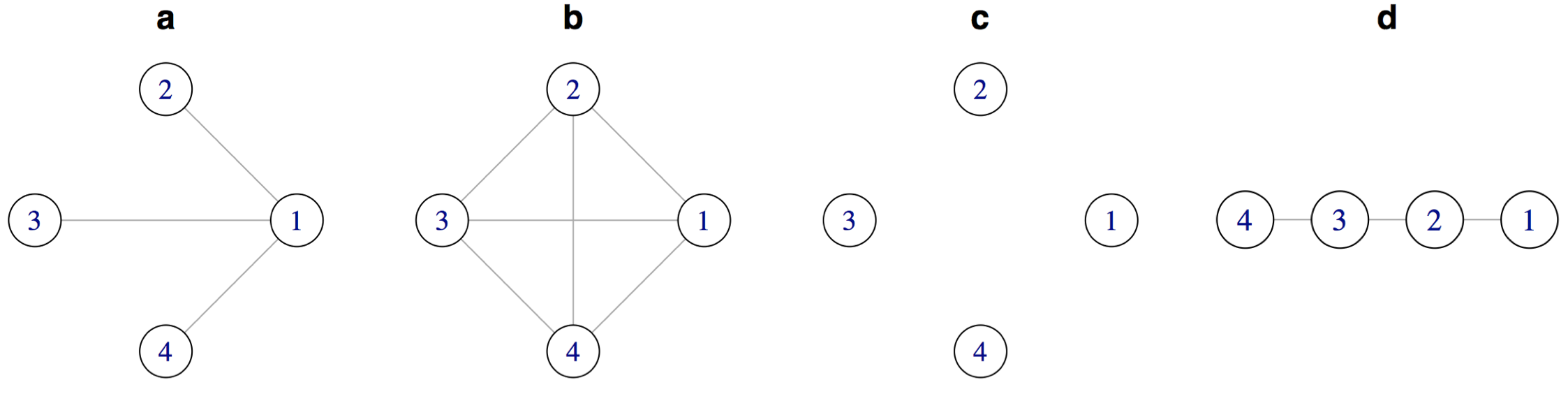} 
\caption{Comparison of subpopulation networks used in the penalty for different methods for joint estimation of multiple precision matrices: \textbf{a}) the true network, modeled by LASICH; \textbf{b}) FGL; \textbf{c}) GGL \& Guo et al; and \textbf{d}) estimation of time-varying networks (Kolar \& Xing, 2009); see Section~\ref{sec:others} for details.}
\label{fig:compMethods}
\end{figure}

Consider first the FGL penalty of \citet{Danaher}, applied to elements of the inverse correlation matrix $| \theta_{ij}^{(k)} - \theta_{ij}^{(k')} |$. Let $G_C$ be a complete unweighted graph ($W_{kk'}=1 \, \forall k \ne k'$), in which all ${K \choose 2}$ node-pairs are connected to each other (Figure~\ref{fig:compMethods}b). It is then easy to see that 
\begin{equation*}
\sum_{k\neq l} |\theta_{ij}^{(k)} - \theta_{ij}^{(l)}|
= \sqrt{2(K-1)}\lVert g_{G_C}(\Theta_{ij})\rVert_1,
\end{equation*}
where the factor of $\sqrt{2(K-1)}$ can be absorbed into the tuning parameter for the FGL penalty.
A similar argument can also be applied to the GGL penalty of \citet{Danaher}, $\rVert \Theta_{ij} \lVert$, by considering instead an empty graph $G_e$ with no edges between nodes (Figure~\ref{fig:compMethods}c). In this case, the mapping $g_G$ would give a diagonal matrix with elements $\theta^{(k)}_{ij}$, and hence $\rVert \Theta_{ij} \lVert = \lVert g_{G_e}(\Theta_{ij})\rVert_F$. 

Unlike proposals of \citet{Danaher}, the estimator of \citet{MR2804206} is based on a non-convex penalty, and does not naturally fit into the above framework. However, Lemma 2 in \citet{MR2804206} establishes a connection between the optimal solutions of the original optimization problem, with those obtained by considering a single penalty of the form $\left\{\sum_{k=1}^K|\theta_{ij}^{(k)}|\right\}^{1/2} \equiv \rVert \Theta_{ij} \lVert_{1,2}$. Similar to GGL, the connection with the method of \citet{MR2804206} can be build based on the above alternative formulation, by considering again the empty graph $G_e$ (Figure~\ref{fig:compMethods}c), but instead the $\rVert . \lVert_{1,2}$ penalty, which is a member of the CAP family of penalties~\cite{zhao2009composite}. More specifically, 
\begin{equation*}
\left\{\sum_{k=1}^K|\omega_{ij}^{(k)}|\right\}^{1/2} = \lVert g_{G_e}(\Theta_{ij}) \rVert_{1,2}.
\end{equation*}

Using the above framework, it is also easy to see the connection between our proposed estimator and the proposal of \citet{kolar2009}: the total variation penalty in \citet{kolar2009} is closely related to FGL, with summation over differences in consecutive time points. It is therefore clear that the penalty of \citet{kolar2009} (up to constant multipliers) can be obtained by applying the graph Laplacian penalty defined for a line graph connecting the time points (Figure~\ref{fig:compMethods}d).

The above discussion highlights the generality of the proposed estimator, and its connection to existing methods. In particular, while FGL and GGL/\citet{MR2804206} consider extreme cases with isolated, or fully connected nodes, one can obtain more flexibility in estimation of multiple precision matrices by defining the penalty based on the known subpopulation network, e.g. based on phylogenetic trees or spatio-temporal similarities between fMRI samples.
The clustering-based approach of Section~\ref{sec:HC} further extends the applicability of the proposed estimator to the settings where the  subpopulation network in not known \emph{a priori}. 
The simulation results in Section~\ref{sec:NumRes} show that the additional flexibility of the proposed estimator can result in significant improvements in estimation of multiple precision matrices, when $K > 2$.
The above discussion also suggests that other variants of the proposed estimator can be defined, by considering other norms. We leave such extensions to future work.

\section{Theoretical Properties}\label{sec:theory}
In this section, we establish norm and model selection consistency of the LASICH estimator.
We consider a high-dimensional setting $p\gg n_k,k=1,\ldots,K$, where both $n$ and $p$ go to infinity.
As mentioned in the Introduction, the normality assumption is not required for establishing these results.
We instead require conditions on tails of random vectors $X^{(k)}$ for each $k=1,\ldots,K$.
We consider two cases, exponential tails and polynomial tails, which both allow for distributions other than multivariate normal. 
\begin{cond}[Exponential Tails]
\label{cond:exptail}
There exists a constant $c_1\in (0,\infty)$  such that
\begin{equation*}
\mathbb{E}\left[ \exp \left\{t(X_j^{(k)}-\mu_j^{(k)})/(\sigma_{jj}^{(k)})^{1/2}\right\} \right] \leq e^{c_1^2t^2/2}, \, \forall t\in\mathbb{R}, k=1,\ldots,K,j=1,\ldots,p.
\end{equation*}
\end{cond}
\begin{cond}[Polynomial Tails]
\label{cond:polytail}
There exist constants $c_2,c_3> 0$ and $c_4$ such that
\begin{equation*}
\mathbb{E}\left[ \left\{X_j^{(k)}/(\sigma_{jj}^{(k)})^{1/2}\right\}^{4(c_2+c_3+1)} \right] \leq c_4, \quad k=1,\ldots,K, j=1,\ldots,p.
\end{equation*}
\end{cond}

Since we adopt the correlation-based Gaussian log-likelihood, we require the boundedness of the true variances to control the error between true and sample correlation matrices.
\begin{cond}[Bounded variance]
\label{cond:eigen}
There exist constants $c_5>0$ and $c_6<\infty$ such that $c_5\leq \min_{k,j}\sigma^{(k)}_{jj}$ and $\max_{k,j}\sigma^{(k)}_{jj}\leq c_6$.
\end{cond}

\begin{cond}[Sample size]
\label{cond:samplesize1}
Let $\lambda_{\Theta} \equiv \max_k\lVert \Theta_0^{(k)}\rVert_2$.
Let 
\begin{equation*}
C_1\equiv \left\{2c_5^{-2}+c_5 + c_6^{-3/2} +2c_5^{-5/2}c_6 + (c_5^{-4}+2c_5^{-5}c_6)^{1/2}\right\}^{-1}.
\end{equation*}
(i) (Exponential tails). 
It holds that 
\begin{equation*}
n\geq  \max\left\{\frac{12}{\min_k\pi_k},2^{18}3^{3}C_1^2(1+4c_1^2)^2c_6^2 \lambda_{\Theta}^4\left(1+\lVert L\rVert_2^{1/2}\right)^2 s \right\}\log p,
\end{equation*}
and $\log p/n\rightarrow 0.$

(ii) (Polynomial tails). 
Let $C_2 = \sup_n\{ \rho_n\sqrt{n/\log p}\}=O(1)$ where $\rho_n$ is given in Lemma~\ref{lemma:l1const} in the Appendix and $c_7>0$ be some constant.
It holds that 
\begin{equation*}
n\geq \max\left\{ \frac{p^{1/c_2}}{c_7^{1/c_2}},2^73^2C_1^2C_2^2K\min_k\pi_k \lambda_{\Theta}^4\left(1+\lVert L\rVert_2^{1/2}\right)^2 s
\log p\right\}.
\end{equation*}
\end{cond}
Condition~\ref{cond:samplesize1} determines the sufficient sample size $n = \sum_{k}$ for consistent estimation of precision matrices $\Theta^{(1)},\ldots,\Theta^{(K)}$ in relation to, among other quantities, the number of variables $p$, the sparsity pattern $s$ and the spectral norm of the Laplacian matrix $\|L\|_2$ of the subpopulation network $G$. While a general characterization of $\|L\|_2$ is difficult, investigating its value in special cases provides insight into the effect of the underlying population structure on the required sample size. Consider, for instance, two extreme cases: for a fully connected graph $G$ associated with $K$ subpopulations, $\|L\|_2 = 1/(K-1)$; for a minimally connected ``line'' graph, corresponding to e.g. multiple time points, $\|L\|_2 = 2$: with $K=5$, 30\% more samples are needed for the line graph, compared to a fully connected network. 
The above calculations match our intuition that fewer samples are needed to consistently estimate precision matrices of $K$ subpopulations that share greater similarities.  
This, of course, makes sense, as information can be better shared when estimating parameters of similar subpopulations. 
Note that, here $L$ represents the Laplacian matrix of the \emph{true} subpopulation network capturing the underlying population structure. The above conditions thus do not provide any insight into the effect of misspecifying the relationship between subpopulations, i.e., when an incorrect $L$ is used. 
This is indeed an important issue that garners additional investigation; see \citet{ZhaoShojaie2015} for some insight in the context of inference for high dimensional regression. In Section~\ref{sec:HC}, 
 we will discuss a data-driven choice of $L$ that results in consistent estimation of precision matrices.

Before presenting the asymptotic results, we introduce some additional notations. 
For a matrix $A = (a_{ij})_{i,j=1}^p\in\mathbb{R}^{p\times p}$, we denote the spectral norm  $\lVert A \rVert_2 = \max_{x\in\mathbb{R}^p,\lVert x\rVert =1}\lVert Ax\rVert$, and the element-wise $\ell_\infty$-norm $\lVert A\rVert_\infty=\max_{i,j}|a_{i,j}|$ where $\lVert x \rVert$ is the Euclidean norm for a vector $x$.
We also write the induced $\ell_\infty$-norm $\lVert A\rVert_{\infty/\infty} = \sup_{\lVert x\rVert_\infty=1}\lVert Ax\rVert_\infty$ where $\lVert x\rVert_\infty = \max_{i}|x_i|$ for $x=(x_1,\ldots,x_p)$.
For the ease of presentation, the results in this section are presented in asymptotic form; non-asymptotic results and proofs are deferred to the Appendix.

\subsection{Consistency in Spectral Norm}\label{sec:spectral}
Let $s\equiv \#\{(i,j):\omega_{0,ij}^{(k)}\neq 0,i,j=1,\ldots,p,i\neq j,k=1,\ldots,K \}$, and $d=\max_{k,i}\#\{(i,j):\omega_{0,ij}^{(k)}\neq 0,j=1,\ldots,p,i\neq j\}$.
The following theorem establishes the rate of convergence of the LASICH estimator, in spectral norm, under either exponential or polynomial tail conditions (Condition \ref{cond:exptail} or \ref{cond:polytail}).
Convergence rates for LASICH in $\ell_\infty$-and Frobenius norm are discussed in Section~\ref{sec:cors}.

\begin{thm}
\label{thm:rate}
Suppose Conditions \ref{cond:eigen} and \ref{cond:samplesize1} hold.
\noindent Under Condition \ref{cond:exptail} or \ref{cond:polytail},
\begin{eqnarray*}
&&\sum_{k=1}^K\lVert \hat{\Omega}_{\rho_n}^{(k)}-\Omega^{(k)}_0\rVert_2 = O_P\left(\sqrt{\frac{\lambda_{\Theta}^4(s+1)\log p}{n}}\right),
\end{eqnarray*}
as $n,p\rightarrow \infty$ where $\rho_n$ is given in Lemma~\ref{lemma:l1const} in the Appendix with $\gamma=\min_k\pi_k/2$.
\end{thm}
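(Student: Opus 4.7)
The plan is to first prove the bound at the level of inverse correlation matrices and then transfer to precision matrices via the relation $\hat{\Omega}^{(k)} = \hat{\Xi}^{(k)} \hat{\Theta}^{(k)} \hat{\Xi}^{(k)}$. Let $\Delta^{(k)} = \hat{\Theta}_{\rho_n}^{(k)} - \Theta_0^{(k)}$. The starting point is the basic inequality from optimality of $\hat{\Theta}_{\rho_n}$: evaluating the objective at $\hat{\Theta}_{\rho_n}$ and at $\Theta_0$ and subtracting gives
\begin{equation*}
-\tilde{\ell}_n(\hat{\Theta}_{\rho_n}) + \tilde{\ell}_n(\Theta_0) \le \rho_n\bigl(\|\Theta_0\|_1 - \|\hat{\Theta}_{\rho_n}\|_1\bigr) + \rho_n\rho_2\bigl(\|\Theta_0\|_L - \|\hat{\Theta}_{\rho_n}\|_L\bigr).
\end{equation*}

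The next step is to Taylor-expand the Gaussian log-likelihood around $\Theta_0$. The linear term produces $\sum_k (n_k/n)\,\mathrm{tr}\{(\Psi_n^{(k)}-\Psi_0^{(k)})\Delta^{(k)}\}$, which by H\"older's inequality is bounded by $\max_k\|\Psi_n^{(k)}-\Psi_0^{(k)}\|_\infty \sum_k\|\Delta^{(k)}\|_1$. A concentration lemma for sample correlation matrices (Lemma~\ref{lemma:l1const}, invoked under either Condition~\ref{cond:exptail} or Condition~\ref{cond:polytail} together with Condition~\ref{cond:eigen}) yields $\max_k\|\Psi_n^{(k)}-\Psi_0^{(k)}\|_\infty = O_P(\sqrt{\log p/n})$, justifying the scale $\rho_n\asymp\sqrt{\log p/n}$. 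This reduces the linear term to $\tfrac{1}{2}\rho_n\sum_k\|\Delta^{(k)}\|_1$ with high probability, which can then be absorbed on the right.

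The third step is to restrict $\Delta$ to a cone. Let $S$ be the off-diagonal support of $\Omega_0$, so $|S|=s$. Decomposability of the $\ell_1$ penalty gives $\|\hat{\Theta}\|_1 - \|\Theta_0\|_1 \ge \|\Delta_{S^c}\|_1 - \|\Delta_S\|_1$, while the triangle inequality for the Laplacian seminorm yields $\|\Theta_0\|_L - \|\hat{\Theta}\|_L \le \sum_{i\ne j}\|\Delta_{ij}\|_L \le \|L\|_2^{1/2}\sum_{i\ne j}\|\Delta_{ij}\|_2$. Combining with the reduced linear term, $\|\Delta_{S^c}\|_1$ is bounded by a constant times $\|\Delta_S\|_1$ plus a Laplacian correction of size $\|L\|_2^{1/2}\sqrt{s}\|\Delta\|_F$, which is why $(1+\|L\|_2^{1/2})$ appears in the sample-size Condition~\ref{cond:samplesize1}. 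Within this cone one has $\sum_k\|\Delta^{(k)}\|_1 \lesssim (1+\|L\|_2^{1/2})\sqrt{s}\,\|\Delta\|_F$ where $\|\Delta\|_F^2 = \sum_k\|\Delta^{(k)}\|_F^2$.

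The final step is the quadratic lower bound. A second-order Taylor expansion of $-\log\det$ on a neighborhood of $\Theta_0$ gives, for $\|\Delta\|_F$ sufficiently small, the curvature lower bound $-\tilde{\ell}_n(\hat{\Theta}) + \tilde{\ell}_n(\Theta_0) \ge c\,\lambda_\Theta^{-2}\|\Delta\|_F^2$ up to the linear term already controlled; the sample-size condition is tuned to ensure $\|\Delta\|_F$ stays in the radius where this expansion is valid. Combining the quadratic lower bound with the cone-restricted linear upper bound $\lesssim\rho_n(1+\|L\|_2^{1/2})\sqrt{s}\,\|\Delta\|_F$ and solving yields $\|\Delta\|_F = O_P(\lambda_\Theta^2\sqrt{(s+1)\log p/n})$. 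Since $\|\cdot\|_2\le\|\cdot\|_F$, the same bound transfers to $\sum_k\|\hat{\Theta}^{(k)}-\Theta_0^{(k)}\|_2$, and the mapping $\Theta\mapsto\Xi\Theta\Xi$ with bounded $\Xi$ (Condition~\ref{cond:eigen}) together with the standard $\sqrt{\log p/n}$ consistency of $\hat{\Xi}$ delivers the claimed rate for $\sum_k\|\hat{\Omega}^{(k)}_{\rho_n}-\Omega_0^{(k)}\|_2$.

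I expect the main obstacle to be handling the graph Laplacian penalty. Unlike the $\ell_1$ norm, $\|\cdot\|_L$ is not decomposable across the support $S$ versus $S^c$ and couples entries across the $K$ populations through the edge weights. The careful step is to bound its contribution to the basic inequality by the spectral norm $\|L\|_2$ via $\|\Theta_{ij}\|_L^2 = \Theta_{ij}^{\top}L\,\Theta_{ij} \le \|L\|_2\|\Theta_{ij}\|_2^2$ (and an analogous bound for the normalized Laplacian), and to verify that the resulting cone-type constraint still admits the clean bound $\sum_k\|\Delta^{(k)}\|_1 \lesssim \sqrt{s}\,\|\Delta\|_F$ with constants depending only on $\|L\|_2^{1/2}$. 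Once this is established, the rest follows the familiar penalized-likelihood template.
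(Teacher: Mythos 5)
Your overall route is the same as the paper's: a Negahban-style argument with a basic inequality, a deviation bound on the score justifying $\rho_n\asymp\sqrt{\log p/n}$ (via concentration for sample correlations under either tail condition), a cone restriction, restricted strong convexity of $-\log\det$ near $\Theta_0$ with curvature constant $\asymp\lambda_\Theta^{-2}$, a compatibility factor $(1+\rho_2\lVert L\rVert_2^{1/2})\sqrt{s+1}$, and finally the transfer $\Theta\mapsto\Xi\Theta\Xi$ under Condition~\ref{cond:eigen}. This is exactly the skeleton of Lemmas~\ref{lemma:NegLemma4}--\ref{lemma:const6} and the proof of Lemma~\ref{lemma:l1const}.

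The one place your reasoning does not support your conclusion is the treatment of the Laplacian term in the cone step. You assert that $\lVert\cdot\rVert_L$ ``is not decomposable across $S$ versus $S^c$'' and propose instead the plain triangle bound $\lVert\Theta_0\rVert_L-\lVert\hat\Theta\rVert_L\le\sum_{i\neq j}\lVert\Delta_{ij}\rVert_L$. Taken literally, that sum runs over \emph{all} $p(p-1)$ off-diagonal pairs and is only controlled by $\sqrt{p(p-1)}\,\lVert\Delta\rVert_F$, which would destroy the rate; the ``correction of size $\lVert L\rVert_2^{1/2}\sqrt{s}\,\lVert\Delta\rVert_F$'' you then write down is not a consequence of that bound. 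What rescues the argument --- and what the paper uses --- is that the Laplacian penalty \emph{is} decomposable with respect to the union support $S=\cup_k S^{(k)}$: since $\sum_{i\neq j}\lVert\Theta_{ij}\rVert_L$ is additive over index pairs $(i,j)$ and each term depends only on the $K$-vector $\Theta_{ij}$, the penalty splits exactly into its restrictions to $\{(i,j)\in S\}$ and $\{(i,j)\notin S\}$; the coupling across the $K$ populations is confined within a single $(i,j)$ and is therefore harmless once the model subspace $\mathcal{M}$ is defined by the union support. This is Lemma~\ref{lemma:const1}(iii), which treats $r(\cdot)=\lVert\cdot\rVert_1+\rho_2\lVert\cdot\rVert_L$ as a single decomposable seminorm, yields the cone $r(\Delta_{\mathcal{M}^\perp})\le 3r(\Delta_{\mathcal{M}})$, and then gives the compatibility bound $r(\Delta_{\mathcal{M}})\le(s+1)^{1/2}(1+\rho_2\lVert L\rVert_2^{1/2})\lVert\Delta_{\mathcal{M}}\rVert_F$ via $\Theta_{ij}^TL\Theta_{ij}\le\lVert L\rVert_2\lVert\Theta_{ij}\rVert^2$. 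With that replacement your argument closes; the only remaining bookkeeping you omit is that $n_k/n\ge\gamma$ must be established with high probability (Hoeffding, Lemma~\ref{lemma:const6}) since the $n_k$ are random.
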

Theorem~\ref{thm:rate} is proved in the Appendix. The proof builds on tools from \citet{Negahban}. However, our estimation procedure does not match their general framework: First, we do not penalize the diagonal elements of the inverse correlation matrices; our penalty is thus not a norm. Second, the Laplacian matrix is nonpositive definite. Thus, the Laplacian shrinkage penalty is not strictly convex. 
The results from \citet{Negahban} are thus not directly applicable to our problem.
To establish the estimation consistency, we first show, in Lemma~\ref{lemma:const1}, that the function $r(\cdot) = \| \cdot \|_1 + \rho_2 \| \cdot \|_L$ is a seminorm, and is, moreover, convex and decomposable. We also characterize the subdifferential of this  seminorm in Lemma~\ref{lemma:sbdiff}, based on the spectral decomposition of the graph Laplacian $L$. 
The rest of the proof uses tools from \citet{Negahban}, \citet{MR2417391} and \citet{MR2836766}, as well as new inequalities and concentration  bounds. In particular, in Lemma~\ref{lemma:const4} we establish a new $\ell_\infty$ bound for the empirical covariance matrix for random variables with polynomial tails, which is used to established the consistency in the spectral norm  under Condition~\ref{cond:polytail}.

The convergence rate in Theorem~\ref{thm:rate} compares favorably to several other methods based on penalized likelihood.
Few results are currently available for estimation of multiple precision matrices. 
An exception is \citet{MR2804206}, who obtained a slower rate of convergence $O_{p}(\{(s+p)\log p/n\}^{1/2})$ under the normality assumption and based on a bound on the Frobenius norm. 
Our rates of convergence are comparable to the results of \citet{MR2417391} for spectral norm convergence of a single precision matrix, obtained under the normality assumption. 
\citet{MR2836766}, on the other hand, assumed the irrepresentability condition to obtain the rate $O_{p}(\{\min\{s+p,d^2\}\log p/n\}^{1/2})$ and $O_{p}(\{\min\{s+p,d^2\}p^{\tau/(c_2+c_3+1)}/n\}^{1/2})$, under exponential and polynomial tail conditions, respectively, where $\tau>2$ is some scalar.
The rate in Theorem~\ref{thm:rate} is obtained without assuming the irrepresentability condition. In fact, our rates of convergence are faster than those of \citet{MR2836766} given the irrepresentability condition~\ref{cond:irrep} (see Corollary \ref{col:rate}).
\citet{MR2847973} obtained improved rates of convergence under both tail conditions for an estimator that is not found by minimizing the  penalized likelihood objective function, and may be nonpositive definite. 
Finally, note that the results in \cite{MR2417391, MR2836766, MR2847973} are for separate estimation of precision matrices and hold for the minimum sample size across subpopulations, $\min_k n_k$, whereas our results hold for the total samples size $\sum_k n_k$. 

\subsection{Model Selection Consistency}\label{sec:other}
Let $S^{(k)} = \{(i,j): \omega_{0,ij}^{(k)} \neq 0,i,j=1,\ldots,p\}$ be the support of $\Omega_0^{(k)}$, and denote by $d$ the maximum number of nonzero elements in any rows of $\Omega^{(k)}_0,k=1,\ldots,K$.
Define the event
\begin{equation}
\mathcal{M}(\hat{\Omega}_{\rho_n},\Omega_0)
\equiv \left\{\mbox{sign}(\hat{\omega}_{\rho_n,ij}^{(k)}) = \mbox{sign}(\omega_{0,ij}^{(k)}),i,j=1,\ldots,p, k=1,\ldots,K\right\},
\end{equation}
where $\mbox{sign}(a)$ is $1$ if $a>0$, $0$ if $a=0$ and $-1$ if $a<0$.
We say that an estimator $\hat{\Omega}_{\rho_n}$ of $\Omega_0$ is
model-selection consistent if $P\{\mathcal{M}(\hat{\Omega}_{\rho_n},\Omega_0)\} \rightarrow 1$.

We begin by discussing an irrepresentability condition for estimation of multiple graphical models. This restrictive condition is commonly assumed to establish model selection consistency of lasso-type estimators, and is known to be almost necessary \citep{MR2278363,ZhaoYu2006}. 
For the graphical lasso, \citet{MR2836766} showed that the irrepresentability condition amounts to a constraint on the correlation between entries of the Hessian matrix $\Gamma = \Omega^{-1} \otimes \Omega^{-1}$ in the set $S$ corresponding to nonzero elements of $\Omega$, and those outside this set. 
Our irrepresentability condition is motivated by that in \citet{MR2836766}, however, we adjust the index set $S$ to also account for covariances of ``non-edge variables'' that are correlated with each other. 
More specifically, the description of irrepresentability condition in \citet{MR2836766} involves $\Gamma_{SS}$ consisting only of elements $\sigma_{ij}\sigma_{kl}$ with $(i,j)\in S$ and $(k,l)\in S$. 
However, $\sigma_{ij}\neq 0$ for $(i,j)\notin S$ is not taken into account by this definition.
We thus adjust the index set $S$ so that $\Gamma_{SS}$ also includes elements $\sigma_{ij}\sigma_{kl}$ if $(i,k)\in S$ and $(j,l)\in S$. 
This definition is based on the crucial observations that $\Gamma = \Sigma\otimes\Sigma$ involves the covariance matrix $\Sigma$ instead of the precision matrix $\Omega$, and that some variables are correlated (i.e., $\sigma_{ij}\neq 0$) even though they may be
conditionally independent (i.e., $\omega_{ij}=0$).
Defining $S^{(k)}$ for $k = 1, \ldots, K$ as above, we assume the following condition. 
\begin{cond}[Irrepresentability condition]
\label{cond:irrep}
The inverse $\Theta_0^{(k)}$ of the correlation matrix $\Psi^{(k)}_0$ satisfies the irrepresentability condition for $S^{(k)}$ with parameter $\alpha$:
(a) $(\Theta_0^{(k)}\otimes \Theta_0^{(k)})_{S^{(k)}S^{(k)}}$ and $(\Psi^{(k)}_0\otimes \Psi^{(k)}_0)_{S^{(k)}S^{(k)}}$ are invertible, 
and (b) there exists some $\alpha\in (0,1]$ such that
\begin{equation}
\label{eqn:irrep}
\max_{(i,j)\in (S^{(k)})^c} \lVert \Gamma^{(k)}_{\{(i,j)\}\times S^{(k)}} \{\Gamma^{(k)}_{S^{(k)}S^{(k)}}\}^{-1}\rVert_1 \leq 1-\alpha,
\end{equation}
for $k=1,\ldots,K$ where $\Gamma^{(k)} \equiv \Psi^{(k)}_0\otimes \Psi^{(k)}_0$.
\end{cond}

In addition to the irrepresentability condition, we require bounds on the magnitude of $\theta_{ij}^{(k)}\neq 0$ and their normalized difference.
\begin{cond}[Lower bounds for the inverse correlation matrices]
\label{cond:nullL}
There exists a constant $c_8\in\mathbb{R}$ such that
\begin{equation*}
\theta_{\min}\equiv \min_{k=1,\ldots,K,i\neq j}|\theta_{0,ij}^{(k)}| \geq c_8>0.
\end{equation*}
Moreover, for $\Omega_{0,ij}\neq 0$, $L\Omega_{0,ij}\neq 0$ and there exists a constant $c_9>0$ such that
\begin{equation*}
\min_{l_{kk'}\neq 0, \, \frac{\omega_{0,ij}^{(k)}}{\sqrt{d_k}}-\frac{\omega_{0,ij}^{(k')}}{\sqrt{d_{k'}}}\neq 0}\left|\frac{\theta_{0,ij}^{(k)}}{\sqrt{d_k}} -\frac{\theta_{0,ij}^{(k')}}{\sqrt{d_{k'}}}\right|\geq c_9.
\end{equation*}
\end{cond}
The first lower bound in Condition~\ref{cond:nullL} is the usual ``min-beta'' condition for model selection consistency of lasso-type estimators. The second lower bound, which is represented here for the normalized Laplacian penalty, is a mild condition which ensures estimates based on inverse correlation matrices can be mapped to precision matrices. 
For any pair of subpopulations $k$ and $k'$ connected in $G$ it requires that if the difference in (normalized) entries of the entires of the precision matrices are nonzero, the difference in (normalized) entries of inverse correlation matrices are bounded away from zero. In other words, the bound guarantees that $\Theta_{0,ij}$ is not in the null space of $L$, whenever $\Omega_{0,ij}$ is outside of the null space. 
This bound can be relaxed if we use a positive definite matrix $L_\epsilon =L+\epsilon I$ for $\epsilon >0$ small. 

Our last condition for establishing the model selection consistency  concerns the minimum sample size and the tuning parameter for the graph Laplacian penalty. 
This condition is necessary to control the $\ell_\infty$-bound of the error $\hat{\Theta}_{\rho_n} -\Theta_0$, as in \citet{MR2836766}. 
Our minimum sample size requirement is related to the irrepresentability condition. 
Let $\kappa_{\Gamma}$ be the maximum of the absolute column sums of the matrices $\{(\Gamma^{(k)})^{-1}\}_{S^{(k)}S^{(k)}},k=1,\ldots,K$, and  $\kappa_{\Psi}$ be the maximum of the absolute column sums of the matrices $\Psi_0^{(k)},k=1,\ldots,K$. 
The minimum sample size in \citet{MR2836766} is also a function of the irrepresentability constant, in particular, their $\kappa_{\Gamma}$ involves $\{(\Gamma^{(k)}_{S^{(k)}S^{(k)}})\}^{-1}$. 
There is, therefore, a subtle difference between our definition and theirs: in our definition, the matrix is first inverted and then partitioned, while in \citet{MR2836766}, the matrix is first partitioned and then  inverted.
Corollary~\ref{col:modelconst} establishes the model selection consistency under a weaker sample size requirement, by exploiting instead the control of the spectral norm in Theorem~\ref{thm:rate}.
\begin{cond}[Sample size and regularization parameters]
\label{cond:rhos}
Let 
\begin{equation*}
C_3 = \max\left\{\frac{2^63^4\kappa_{\Psi}^2\kappa^2_{\Gamma}}{\min_k\pi_k^2}\max\left\{1,\frac{2^67^2\kappa_{\Psi}^4\kappa_{\Gamma}^2}{\alpha^2\min_k\pi_k^2}\right\},\frac{36}{c_8^2},\frac{2^43^2}{c_9^2\min_kd_k } \right\}
\end{equation*}

(i) (Exponential tails).
It holds 
\begin{equation*}
n> \frac{12\log p}{\min_k\pi_k}\max\left\{1,2^63^2C_1^2(1+c_1^2)^2c_6^2C_3d^2\right\}
\end{equation*}

(ii) (Polynomial tails).
It holds 
$n> \max\{p^{1/c_2}c_7^{-1/c_2}, C_1^2C_2^2C_3d^2\log p\}$.

(iii) It holds that $\rho_2 \leq \alpha^2/\{4\lVert L\rVert_2^{1/2}(2-\alpha)\}.$
\end{cond}

With these condition, we obtain
\begin{thm}
\label{thm:modelconst}
Suppose that Conditions \ref{cond:eigen}, \ref{cond:irrep}, \ref{cond:nullL} and \ref{cond:rhos} hold.
Under Condition~\ref{cond:exptail} or \ref{cond:polytail}, $P(\mathcal{M}(\hat{\Omega}_{\rho_n},\Omega_0)) \rightarrow 1$ as $n,p\rightarrow \infty$ where $\rho_n$ is given in Lemma~\ref{lemma:l1const} in the Appendix with $\gamma=\min_k\pi_k/2$.
\end{thm}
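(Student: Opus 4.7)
The plan is to establish model selection consistency via a primal--dual witness construction, adapted from \citet{MR2836766} to accommodate the extra Laplacian penalty. First I would define an oracle estimator $\check{\Theta}$ as the solution of the restricted program
\begin{equation*}
\check{\Theta} \in \argmin\Big\{-\tilde{\ell}_n(\Theta) + \rho_n\lVert\Theta\rVert_1 + \rho_n\rho_2\lVert\Theta\rVert_L : \Theta=\Theta^T,\ \Theta\succ 0,\ \theta_{ij}^{(k)}=0 \text{ for }(i,j)\in (S^{(k)})^c\Big\},
\end{equation*}
together with subgradients $\check{z}\in \partial\lVert\check{\Theta}\rVert_1$ and $\check{y}\in\partial\lVert\check{\Theta}\rVert_L$ satisfying the restricted KKT conditions. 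I would then extend $\check{z}$ to the off-support by letting its entries absorb the residual of the unrestricted stationarity equation, and verify \emph{strict} dual feasibility, $\lVert\check{z}^{(k)}_{(S^{(k)})^c}\rVert_\infty<1$. Convexity plus uniqueness then forces $\hat{\Theta}_{\rho_n}=\check{\Theta}$ on this high-probability event, so that $\hat{\Theta}_{\rho_n}$ already has the correct sparsity pattern, reducing sign consistency to an entrywise magnitude bound.

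Next I would convert the stationarity equation $-(\check{\Theta})^{-1}+\Psi_n+\rho_n\check{z}+\rho_n\rho_2\check{y}=0$ into an equation for $\Delta\equiv\check{\Theta}-\Theta_0$. A second-order Taylor expansion of the matrix inverse about $\Theta_0$ gives $(\check{\Theta})^{-1}-\Theta_0^{-1}=-\Psi_0\Delta\Psi_0+R(\Delta)$, and the linear part vectorises into the action of $\Gamma^{(k)}=\Psi_0^{(k)}\otimes\Psi_0^{(k)}$. Restricting to coordinates in $S^{(k)}$, invertibility of $\Gamma^{(k)}_{S^{(k)}S^{(k)}}$ from Condition~\ref{cond:irrep} together with the $\ell_\infty$-bound $\lVert\Psi_n-\Psi_0\rVert_\infty=O_P(\sqrt{\log p/n})$ from Lemma~\ref{lemma:l1const} (which covers both tail regimes via the polynomial-tail Lemma~\ref{lemma:const4}) yields $\lVert\Delta\rVert_\infty=O_P(\rho_n)$, provided the lower bound on $n$ in Condition~\ref{cond:rhos} is large enough for the linear term to dominate the quadratic remainder, which scales like $\kappa_\Psi^3 d\lVert\Delta\rVert_\infty^2$; this is precisely where the factor $d^2$ in that condition enters.

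For strict dual feasibility I would solve for $\check{z}^{(k)}_{ij}$ on the off-support from
\begin{equation*}
\rho_n\check{z}^{(k)}_{ij} = [\Psi_0\Delta\Psi_0]^{(k)}_{ij} - [R(\Delta)]^{(k)}_{ij} - [\Psi_n-\Psi_0]^{(k)}_{ij} - \rho_n\rho_2\check{y}^{(k)}_{ij}.
\end{equation*}
Writing $\Delta_{S^{(k)}}$ in terms of the on-support residual and applying \eqref{eqn:irrep}, the first term contributes at most $(1-\alpha)$ in magnitude; the second and third terms are $o_P(\rho_n)$ by the bounds above; and Condition~\ref{cond:rhos}(iii), combined with $\lVert\check{y}\rVert_\infty\lesssim \lVert L\rVert_2^{1/2}$ from the subdifferential characterization in Lemma~\ref{lemma:sbdiff}, keeps the Laplacian contribution below $\alpha^2/\{4(2-\alpha)\}\le \alpha/4$. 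Summing gives $\lVert\check{z}^{(k)}_{(S^{(k)})^c}\rVert_\infty\le 1-\alpha/4<1$, establishing strict dual feasibility.

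Finally, the entrywise bound $\lVert\Delta\rVert_\infty=O_P(\rho_n)=o_P(c_8)$ together with the min-beta condition $\theta_{\min}\ge c_8$ of Condition~\ref{cond:nullL} yields sign consistency for $\check{\Theta}^{(k)}$, and hence for $\hat{\Theta}^{(k)}_{\rho_n}$. Sign consistency then transfers to $\hat{\Omega}^{(k)}_{\rho_n}=\hat{\Xi}^{(k)}\hat{\Theta}^{(k)}_{\rho_n}\hat{\Xi}^{(k)}$, since the diagonal rescaling has positive entries (under Condition~\ref{cond:eigen} and standard concentration of sample variances) and therefore preserves the signs of off-diagonal elements. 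The main obstacle is the Laplacian subgradient $\check{y}$: unlike $\lVert\cdot\rVert_1$, the penalty $\lVert\cdot\rVert_L$ is not separable across index pairs $(i,j)$ and its subdifferential degenerates on the kernel of $L$. The second lower bound in Condition~\ref{cond:nullL} is exactly what guarantees that $\Theta_{0,ij}$ stays off the null space of $L$ whenever $\Omega_{0,ij}\neq 0$; combined with Lemma~\ref{lemma:sbdiff} this produces a valid, bounded $\check{y}$, while Condition~\ref{cond:rhos}(iii) ensures its contribution does not erode the irrepresentability margin.
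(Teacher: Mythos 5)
Your overall architecture --- an oracle estimator on the restricted support, explicit construction of subgradients for both penalties, strict dual feasibility via the irrepresentability condition, an $\ell_\infty$ bound on $\check{\Theta}-\Theta_0$ combined with the min-beta condition, and the kernel-avoidance argument for the Laplacian subgradient --- is exactly the paper's primal--dual witness proof (Lemmas~\ref{lemma:sbdiff}, \ref{lemma:RWRY3}, \ref{lemma:strictDual} and the proof of Theorem~\ref{thm:modelconst}). However, there is a genuine gap at the step you dispatch in one sentence: the claim that solving the vectorized stationarity equation on $S^{(k)}$ ``yields $\lVert\Delta\rVert_\infty=O_P(\rho_n)$ provided the linear term dominates the quadratic remainder.'' The equation for $\Delta_{S^{(k)}}$ is implicit --- the remainder $R^{(k)}(\Delta)$ and the subgradients both depend on $\Delta$ --- so one cannot simply invert $\Gamma^{(k)}_{S^{(k)}S^{(k)}}$ and read off the bound; a fixed-point argument is required. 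The paper explicitly notes that the standard route here (Brouwer's fixed point theorem applied as in Eq.~(70) of \citet{MR2836766}) is flawed because the relevant map involves a matrix inverse and is not continuous on its range, and the entire content of Lemma~\ref{lemma:rvkmr6} is a repaired construction: a continuous self-map $f_k$ of a compact convex set, built with a rescaling factor $h_k(A)$ and a perturbation $\delta I$, to which Schauder's theorem applies. This is also precisely why Condition~\ref{cond:irrep} assumes invertibility of $(\Theta_0^{(k)}\otimes\Theta_0^{(k)})_{S^{(k)}S^{(k)}}$ in addition to that of $\Gamma^{(k)}_{S^{(k)}S^{(k)}}$ --- an assumption your sketch never uses. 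Executed naively, your plan reproduces the flawed argument.

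Two smaller inaccuracies. First, the sampling error $\lVert\hat{\Psi}_n-\Psi_0\rVert_\infty$ is of the \emph{same} order as $\rho_n$ (that is how $\rho_n$ is chosen), not $o_P(\rho_n)$; strict dual feasibility goes through only because this term is forced below $\alpha\rho_n/8$ by the specific choice of constants, as in the hypothesis of Lemma~\ref{lemma:strictDual}, so the constants cannot be absorbed into an $o_P(\cdot)$. Second, the Laplacian penalty \emph{is} separable across index pairs $(i,j)$; what is non-separable is the coupling across the $K$ subpopulations at a fixed $(i,j)$, which is why the paper must relate the subdifferential of the restricted functional $\sqrt{x^TL_{S_{ij}S_{ij}}x}$ to that of the full $\sqrt{y^TLy}$ (Lemma~\ref{lemma:sbdiff}(ii)) when promoting the restricted KKT system to the unrestricted one, and why the witness $\tilde{U}_1$ must absorb part of the Laplacian gradient at pairs $(i,j)$ where only some of the $\omega_{0,ij}^{(k)}$ vanish.
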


\subsection{Additional Results}\label{sec:cors}
In this section, we establish norm and variable selection consistency of LASICH under alternative assumptions.
Our first result gives better rates of convergence for consistency in the $\ell_\infty$-, spectral and Frobenius norms, under the condition for model selection consistency. 
Our rates in Corollary~\ref{col:rate} improve the previous results by \citet{MR2836766}, and are comparable to that of \citet{MR2847973} in the $\ell_\infty$- and spectral norms under both tail conditions. 
\begin{col}
\label{col:rate}
Suppose the conditions in Theorem~\ref{thm:modelconst} hold.
Then, under Condition~\ref{cond:exptail} or \ref{cond:polytail},
\begin{eqnarray*}
&&\sum_{k=1}^K\lVert \hat{\Omega}_{\rho_n}^{(k)}-\Omega^{(k)}_0\rVert_F = O_P\left(\sqrt{\frac{\min\{\lambda_{\Theta}^4p(s+1),\kappa_{\Gamma}^2(s+p)\}\log p}{n}}\right),\\
&&\sum_{k=1}^K\lVert \hat{\Omega}_{\rho_n}^{(k)}-\Omega^{(k)}_0\rVert_2 = O_P\left(\sqrt{\frac{\min\{\lambda_{\Theta}^4(s+1),\kappa_{\Gamma}^2d^2\}\log p}{n}}\right),\\
&&\sum_{k=1}^K\lVert \hat{\Omega}_{\rho_n}^{(k)}-\Omega^{(k)}_0\rVert_\infty = O_P\left(\sqrt{\frac{\kappa_{\Gamma}^2\log p}{n}}\right).
\end{eqnarray*}
\end{col}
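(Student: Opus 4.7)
The plan is to derive the three norm bounds by combining the model selection consistency established in Theorem~\ref{thm:modelconst} with the spectral-norm rate already proved in Theorem~\ref{thm:rate}. The key observation is that, under the hypotheses of Theorem~\ref{thm:modelconst}, the primal-dual witness construction used to prove sign-consistency also yields an elementwise $\ell_\infty$ bound on $\hat{\Theta}_{\rho_n}-\Theta_0$ of order $\kappa_\Gamma\sqrt{\log p/n}$, and moreover $\hat{\Omega}_{\rho_n}^{(k)}$ has support contained in $S^{(k)}\cup\{(i,i):1\le i\le p\}$, with at most $d$ nonzeros per row.

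First I would revisit the proof of Theorem~\ref{thm:modelconst}. The standard primal-dual witness argument (in the spirit of \citet{MR2836766}, adapted to the seminorm $r(\cdot)=\|\cdot\|_1+\rho_2\|\cdot\|_L$ via Lemmas \ref{lemma:const1} and \ref{lemma:sbdiff}) proceeds by solving the restricted problem on the true support, then verifying the dual feasibility of the subgradient for the off-support entries. The verification step relies on a Taylor expansion of the gradient of the negative log-likelihood around $\Theta_0^{(k)}$, controlled by $\kappa_\Gamma$, $\kappa_\Psi$, and $\|\Psi_n^{(k)}-\Psi_0^{(k)}\|_\infty$. This analysis yields, as a byproduct, the inequality
\begin{equation*}
\max_k\|\hat{\Theta}_{\rho_n}^{(k)}-\Theta_0^{(k)}\|_\infty \;\le\; 2\kappa_\Gamma\bigl(\|\Psi_n^{(k)}-\Psi_0^{(k)}\|_\infty+\rho_n(1+\rho_2\|L\|_2^{1/2})\bigr)\;=\;O_P\!\left(\kappa_\Gamma\sqrt{\tfrac{\log p}{n}}\right),
\end{equation*}
using Lemma~\ref{lemma:l1const} for the concentration of the empirical correlation matrix under either tail condition, together with the tuning parameter $\rho_n\asymp\sqrt{\log p/n}$. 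Transferring this to $\hat{\Omega}_{\rho_n}^{(k)}-\Omega_0^{(k)}$ is immediate using $\hat{\Omega}_{\rho_n}^{(k)}=\hat{\Xi}^{(k)}\hat{\Theta}_{\rho_n}^{(k)}\hat{\Xi}^{(k)}$ together with the boundedness of variances in Condition~\ref{cond:eigen} and the triangle inequality; this gives the $\ell_\infty$ rate stated in the corollary.

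Next, I would convert the $\ell_\infty$ bound into the spectral and Frobenius bounds by exploiting sparsity. On the event $\mathcal{M}(\hat{\Omega}_{\rho_n},\Omega_0)$, which has probability tending to $1$, the support of $\hat{\Omega}_{\rho_n}^{(k)}-\Omega_0^{(k)}$ has at most $d$ nonzeros per row. For any symmetric matrix $A$ with at most $d$ nonzeros per row, one has $\|A\|_2\le\|A\|_{\infty/\infty}\le d\|A\|_\infty$, which yields the $\kappa_\Gamma d\sqrt{\log p/n}$ branch of the spectral norm bound. Similarly, the total support has at most $s+p$ elements, so $\|A\|_F\le\sqrt{s+p}\,\|A\|_\infty$, giving the $\kappa_\Gamma\sqrt{(s+p)\log p/n}$ branch of the Frobenius bound.

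Finally, I would combine these with the bounds from Theorem~\ref{thm:rate}. The spectral rate already supplies the $\lambda_\Theta^2\sqrt{(s+1)\log p/n}$ branch directly; using $\|A\|_F\le\sqrt{p}\,\|A\|_2$ yields the $\lambda_\Theta^2\sqrt{p(s+1)\log p/n}$ branch for Frobenius. Taking the minimum of the two available bounds in each case gives the statement of the corollary. The main obstacle is the first step, namely extracting the sharp $\ell_\infty$ bound on $\hat{\Theta}_{\rho_n}-\Theta_0$ from the primal-dual witness analysis; this requires carefully tracking the contribution of the Laplacian penalty through Lemma~\ref{lemma:sbdiff} and ensuring that the extra term $\rho_n\rho_2\|L\|_2^{1/2}$ is absorbed into constants under Condition~\ref{cond:rhos}(iii), so that it does not inflate the leading rate beyond $\kappa_\Gamma\sqrt{\log p/n}$.
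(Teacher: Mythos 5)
Your proposal is correct and follows essentially the same route as the paper: the paper's proof likewise extracts the $\ell_\infty$-bound $\lVert \hat{\Theta}_{\rho_n}^{(k)}-\Theta_0^{(k)}\rVert_\infty = O_P(\kappa_\Gamma\rho_n)$ from the fixed-point step (Lemma~\ref{lemma:rvkmr6}) in the proof of Theorem~\ref{thm:modelconst}, transfers it to $\hat{\Omega}_{\rho_n}^{(k)}$ via the diagonal rescaling as in Theorem~2 of \citet{MR2417391}, and then invokes the support-counting argument of Corollary~3 of \citet{MR2836766} (at most $d$ nonzeros per row, at most $s+p$ nonzeros total) before taking the minimum with the Theorem~\ref{thm:rate} rates. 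The only cosmetic difference is your explicit constant $2$ in the $\ell_\infty$ display versus the paper's $6/\min_k\pi_k$, which does not affect the stated order.
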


Our next result in Corollary~\ref{col:modelconst} establishes the model selection consistency under a weaker version of the irrepresentability condition (Condition~\ref{eqn:irrep}). 
Aside from the difference in the index sets $S^{(k)}$, the form of the Condition~\ref{eqn:irrep} and the assumption of invertibility of 
$(\Psi^{(k)}_0\otimes \Psi^{(k)}_0)_{S^{(k)}S^{(k)}}$ are similar to those in \citet{MR2836766}. 
On the other hand, \citet{MR2836766} do not require invertibility of $(\Theta_0^{(k)}\otimes \Theta_0^{(k)})_{S^{(k)}S^{(k)}}$. 
However, their proof is based on an application of Brouwer's fixed point theorem, which does not hold for the corresponding function (Eq. (70) in page 973) since it involves a matrix inverse, and is hence not continuous on its range. 
The additional inevitability assumption in Condition~\ref{eqn:irrep} is used to address this issue in Lemma~\ref{lemma:rvkmr6}.
The condition can be relaxed if we assume an alternative scaling of the sample size stated in Condition~\ref{cond:rhos2} below instead of  Condition~\ref{cond:rhos}.
\begin{cond}
\label{cond:rhos2}
Let $\lambda_{\Psi} = \max_k\lVert \Psi_0^{(k)}\rVert$. Suppose $\rho_2 \leq \alpha^2/\{4\lVert L\rVert_2^{1/2}(2-\alpha)\}$ and 

(i) (Exponential tails) 
\begin{equation*}
n> 2^{19}3^3\{\min_k\pi_k\}^{-3}C_1^2(1+4c_1^2)^2c_6^2\lambda_{\Theta}^4\left(1+\rho_2\lVert L\rVert_2^{1/2}\right)^2s\log p \max\{\lambda_{\Psi},4\lambda_{\Theta}^4\alpha^{-1}\}, 
\end{equation*}
or

(ii) (Polynomial tails) 
\begin{equation*}
n> 2^{12}3^3\{\min_k\pi_k\}^{-2}K^2C_1^2C_2^2\lambda_{\Theta}^4\left(1+\rho_2\lVert L\rVert_2^{1/2}\right)^2s\log p \max\{\lambda_{\Psi},4\lambda_{\Theta}^4\alpha^{-1}\}.
\end{equation*}
\end{cond}
\begin{col}
\label{col:modelconst}
Suppose that Conditions~\ref{cond:eigen}, 
\ref{cond:nullL} and \ref{cond:rhos2} hold.
Suppose also that Condition~\ref{cond:irrep} holds without requiring the invertibility of $(\Theta_0^{(k)}\otimes \Theta_0^{(k)})_{S^{(k)}S^{(k)}}$.
Then, under Condition~\ref{cond:exptail} or \ref{cond:polytail}, 
$P(\mathcal{M}(\hat{\Omega}_{\rho_n},\Omega_0)) \rightarrow 1$ as $n,p\rightarrow \infty$ where $\rho_n$ is given in Lemma \ref{lemma:l1const} in the Appendix with $\gamma=\min_k\pi_k/2$.
\end{col}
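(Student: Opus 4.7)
The plan is to adapt the proof of Theorem~\ref{thm:modelconst} by replacing the fixed-point step (Lemma~\ref{lemma:rvkmr6}), which relied on the invertibility of $(\Theta_0^{(k)}\otimes\Theta_0^{(k)})_{S^{(k)}S^{(k)}}$, with a direct analysis of the global minimizer $\hat{\Theta}_{\rho_n}$ of \eqref{eqn:originallasso}. The enabling device is the strengthened sample-size scaling in Condition~\ref{cond:rhos2}, which carries an extra factor $\max\{\lambda_{\Psi},4\lambda_{\Theta}^4\alpha^{-1}\}$ tuned precisely so that the spectral-norm rate from Theorem~\ref{thm:rate} is sharp enough to close a primal--dual witness argument without a restricted-support inversion.

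First, I would verify that Condition~\ref{cond:rhos2} implies Condition~\ref{cond:samplesize1}, so Theorem~\ref{thm:rate} applies and gives $\lVert\hat{\Theta}_{\rho_n}^{(k)}-\Theta_0^{(k)}\rVert_2 = O_P(\rho_n\sqrt{s})$; the bound $\lVert\cdot\rVert_\infty\le\lVert\cdot\rVert_2$ then yields an element-wise control. Combined with the first part of Condition~\ref{cond:nullL}, this immediately gives sign agreement on each support $S^{(k)}$. For the off-support direction, I would run a primal--dual witness argument starting from the KKT condition
\[
-\Psi_n^{(k)}+(\hat{\Theta}^{(k)})^{-1}+\rho_n Z_1^{(k)}+\rho_n\rho_2 Z_L^{(k)}=0,
\]
with subgradients $Z_1^{(k)},Z_L^{(k)}$ characterized by Lemma~\ref{lemma:sbdiff}, and expand $(\hat{\Theta}^{(k)})^{-1}-\Psi_0^{(k)}=-\Psi_0^{(k)}(\hat{\Theta}^{(k)}-\Theta_0^{(k)})\Psi_0^{(k)}+R^{(k)}$ with an element-wise remainder satisfying $\lVert R^{(k)}\rVert_\infty \le C\,\lambda_{\Psi}\lVert\hat{\Theta}^{(k)}-\Theta_0^{(k)}\rVert_2^2$ for a constant $C$.

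Splitting the linearized equation into its $S^{(k)}$ and $(S^{(k)})^c$ blocks, the weakened version of Condition~\ref{cond:irrep} still provides invertibility of $(\Psi_0^{(k)}\otimes\Psi_0^{(k)})_{S^{(k)}S^{(k)}}$, which is exactly the Hessian block produced by the linearization. Solving on $S^{(k)}$ and substituting into the off-support equation, the inequality \eqref{eqn:irrep} contributes at most $1-\alpha$; the sampling error $\lVert\Psi_n^{(k)}-\Psi_0^{(k)}\rVert_\infty$ (controlled by the concentration inequalities used in Lemma~\ref{lemma:l1const}), the remainder $\lVert R^{(k)}\rVert_\infty$ (controlled by Theorem~\ref{thm:rate} together with the $\lambda_{\Psi}$-scaling in Condition~\ref{cond:rhos2}), and the Laplacian subgradient contribution (controlled by Lemma~\ref{lemma:sbdiff} together with Condition~\ref{cond:rhos2}(iii) which bounds $\rho_2$) each consume a further $\alpha/4$-size piece of the remaining margin. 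This yields the strict inequality $|Z_{1,ij}^{(k)}|<1$ on $(S^{(k)})^c$, forcing $\hat{\Theta}_{ij}^{(k)}=0$ there. The second part of Condition~\ref{cond:nullL} simultaneously prevents $\Theta_{0,ij}$ from falsely lying in the null space of $L$ whenever $\Omega_{0,ij}\neq 0$, so the Laplacian subdifferential characterization in Lemma~\ref{lemma:sbdiff} applies correctly across both the zero and nonzero entries of $L\Theta_{0,ij}$. Sign consistency for the inverse correlation matrices then transfers to the precision matrices via the positive-diagonal conjugation $\hat{\Omega}_{\rho_n}^{(k)}=\{\hat{\Xi}^{(k)}\}^{-1}\hat{\Theta}_{\rho_n}^{(k)}\{\hat{\Xi}^{(k)}\}^{-1}$, which preserves signs and zero patterns.

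The main obstacle will be controlling the matrix-inverse remainder $R^{(k)}$ in the element-wise norm without access to a restricted-support inversion. The standard Ravikumar-type argument (as in \citet{MR2836766}) invokes Brouwer on an oracle-restricted problem to ensure $(\Theta_{0,S}+\Delta_S)^{-1}_{S}$ is well defined, which here would require precisely the extra invertibility of $(\Theta_0^{(k)}\otimes\Theta_0^{(k)})_{S^{(k)}S^{(k)}}$ that we have dropped. Avoiding this forces us to use the spectral-norm bound from Theorem~\ref{thm:rate} directly, and the resulting squared error $\lVert\hat{\Theta}^{(k)}-\Theta_0^{(k)}\rVert_2^2 \asymp \lambda_{\Theta}^4 s\log p/n$, multiplied by $\lambda_{\Psi}$, is exactly the term absorbed by the $\max\{\lambda_{\Psi},4\lambda_{\Theta}^4\alpha^{-1}\}$ factor in Condition~\ref{cond:rhos2}. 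This trade-off---a stronger sample-size scaling in exchange for a weaker structural assumption---is the essential content of the corollary.
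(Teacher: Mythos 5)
You have correctly identified the role of Condition~\ref{cond:rhos2}: the extra factor $\max\{\lambda_{\Psi},4\lambda_{\Theta}^4\alpha^{-1}\}$ is there so that a spectral-norm bound on the estimation error, squared and multiplied by a $\lambda_\Psi$-type constant, controls the remainder $R^{(k)}$ in place of the $\ell_\infty$-bound that Lemma~\ref{lemma:rvkmr6} delivered via Brouwer's theorem. This is indeed the content of Lemma~\ref{lemma:rvkmr5-2}, and it is exactly how the paper proceeds. However, your plan to drop the oracle estimator and run the argument ``directly on the global minimizer $\hat{\Theta}_{\rho_n}$'' contains a circularity that the paper's proof avoids. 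The step where you split the linearized KKT system into its $S^{(k)}$ and $(S^{(k)})^c$ blocks, solve on $S^{(k)}$, and substitute into the off-support equation (the computation in Lemma~\ref{lemma:strictDual}) is only valid because $\check{\Delta}^{(k)}_{(S^{(k)})^c}=0$ \emph{by construction} of the restricted problem \eqref{eqn:oracle}: without that, the on-support block equation acquires a cross term $\Gamma^{(k)}_{S^{(k)}(S^{(k)})^c}\vec{\Delta}^{(k)}_{(S^{(k)})^c}$, and the irrepresentability condition \eqref{eqn:irrep}, which only controls $\Gamma^{(k)}_{(S^{(k)})^c S^{(k)}}\{\Gamma^{(k)}_{S^{(k)}S^{(k)}}\}^{-1}$, gives you no handle on it. In other words, you need $\hat{\Delta}_{(S^{(k)})^c}=0$ to run the very argument that is supposed to prove $\hat{\Delta}_{(S^{(k)})^c}=0$.

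The fix --- and the paper's actual route --- is to retain the primal--dual witness in full: define the oracle estimator $\check{\Theta}_{\rho_n}$ by \eqref{eqn:oracle}, note that the Frobenius/spectral-norm consistency of Lemma~\ref{lemma:l1const} applies equally to the restricted problem (since $\Theta_0$ is feasible for it), so that under Condition~\ref{cond:rhos2} one gets $\lVert \check{\Theta}_{\rho_n}^{(k)}-\Theta_0^{(k)}\rVert_2 \leq 1/(2\lambda_\Theta)$; then invoke Lemma~\ref{lemma:rvkmr5-2} to bound $\lVert R^{(k)}(\check{\Delta}^{(k)})\rVert_\infty$ by the squared spectral error, feed this into Lemma~\ref{lemma:strictDual} to get strict dual feasibility $|\tilde{U}^{(k)}_{1,ij}|<1$ off the support, and conclude $\hat{\Theta}_{\rho_n}=\check{\Theta}_{\rho_n}$ by uniqueness (Lemma~\ref{lemma:RWRY3}). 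The extra invertibility of $(\Theta_0^{(k)}\otimes\Theta_0^{(k)})_{S^{(k)}S^{(k)}}$ is dispensable precisely because it was only ever used inside the fixed-point construction of Lemma~\ref{lemma:rvkmr6}, not in the witness argument itself. Your remaining steps (sign agreement on the support via the min-beta bound, the role of the second part of Condition~\ref{cond:nullL} in keeping $\check{\Theta}_{\rho_n,ij}$ out of the null space of $L$, and the transfer of sign consistency to $\hat{\Omega}_{\rho_n}$ through the diagonal conjugation) are correct and match the paper.
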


\section{Laplacian Shrinkage based on Hierarchical Clustering}\label{sec:HC}
Our proposed LASICH approach utilizes the information in the subpopulation network $G$.
In practice, however, similarity between subpopulations may be difficult to ascertain or quantify.
In this section, we present a modified LASICH framework, called HC-LASICH, which utilizes hierarchical clustering to learn the relationships among subpopulations. The information from hierarchical clustering is then used to define the weighted subpopulation network. 
Importantly, HC-LASICH can even be used in settings where the subpopulation membership is unavailable, for instance, to learn the genetic network of cancer patients, where cancer subtypes may be unknown.

We use hierarchical clustering with a complete, single or average linkage to estimate both the subpopulation memberships and the weighted subpopulation network $G$.
Specifically, the length of a path between two subpopulations in the dendrogram is used as a measure of dissimilarity between two subpopulations; the weights for the subpopulation networks are simply defined by taking the inverse of these lengths.
Throughout this section, we assume that the number of subpopulations $K$ is known. While a number of methods have been proposed for estimating the number of subpopulations in hierarchical clustering (see e.g. \citet{borysov2014} and the references therein), the problem is beyond the scope of this paper.

Let $\mathcal{I}=(\mathcal{I}^{(1)},\ldots,\mathcal{I}^{(K)})$ be the subpopulation membership indicator such that $\mathcal{I}$ follows the multinomial distribution $\textrm{Mult}_K(1,(\pi_1,\ldots,\pi_K))$ with parameter 1 and subpopulation membership probabilities $(\pi_1,\ldots,\pi_K)\in(0,1)^K$. Note that $\mathcal{I}$ is missing and is to be estimated.
Let $\mathcal{I}_i,i=1,\ldots,n$ be i.i.d. copies of $\mathcal{I}$ and $\hat{\mathcal{I}}_i = (\hat{\mathcal{I}}_i^1,\ldots,\hat{\mathcal{I}}_i^K)$ be an estimated subpopulation indicator for the $i$th observation via hierarchical clustering.
Based on the estimated subpopulation membership and subpopulation network $\hat{G}$, we apply our method to obtain the estimator, HC-LASICH, $\hat{\Omega}_{HC,\rho_n} =(\hat{\Omega}_{HC,\rho_n}^{(1)},\ldots,\hat{\Omega}_{HC,\rho_n}^{(K)})$.
Interestingly, HC-LASICH enjoys the same theoretical properties as LASICH, under the normality assumption.
To show this, we first establish the consistency of hierarchical clustering in high dimensions, which is of independent interest. Our result is motivated by the recent work of \citep{borysov2014}, who study the consistency of hierarchical clustering for independent normal variables $X^{(k)} \sim N(\mu^{(k)},\sigma^{(k)}I)$; we establish similar results for multivariate normal distributions with arbitrary covariance structures. 
We make the following assumption.
\begin{cond}
\label{cond:clst}
For $k,k'=1,\ldots,K$, let
\begin{eqnarray*}
&&\overline{\lambda}^{(k)}= p^{-1}\sum_{j=1}^p\lambda^{(k),j},\\
&&\mu^{(k,k')} = p^{-1}\left\lVert \Lambda_{k,k'}^{1/2}Q_{k,k'}^T\left[\Sigma^{(k)}+\Sigma^{(k')}\right]^{1/2}\left[\mu^{(k)}-\mu^{(k')}\right]\right\rVert^2,
\end{eqnarray*}
where $\lambda^{(k),j}$ is the eigenvalues of $\Sigma^{(k)}$ with $\lambda^{(k),1}\leq \lambda^{(k),2}\leq \ldots\leq \lambda^{(k),p}$, and
the spectral decomposition of $\Sigma^{(k)}+\Sigma^{(k')}$ is $\Sigma^{(k)}+\Sigma^{(k')}=Q_{k,k'}\Lambda_{k,k'}Q_{k,k'}^T$.
It holds that
\begin{eqnarray*}
&&\mu^{(k,k')} >2\min\left\{\overline{\lambda}^{(k)},\overline{\lambda}^{(k')}\right\} - \lambda^{(k),p}-\lambda^{(k'),p}, \quad k\neq k', \  k,k'=1,\ldots,K,\\
&& 0<c_{10}\leq \lambda^{(k),j} \leq c_{11}<\infty,\quad
\lVert\mu^{(k)}\rVert\leq  c_{11}, \quad \quad   k=1,\ldots,K, j=1,\ldots,p.
\end{eqnarray*}
for constants $m$ and $M$.
\end{cond}
Under the normality assumption, the following results shows that the probability of successful clustering converges to 1, as $p, n \rightarrow \infty$.
\begin{thm}
\label{thm:clst}
Suppose that that $X^{(k)},k=1,\ldots,K$, is normally distributed.
Under Condition \ref{cond:clst},
\begin{equation}\label{eq:clusterconst}
P(\hat{\mathcal{I}}_i = \mathcal{I}_i,i=1,\ldots,n) \rightarrow 1,
\end{equation}
as $n,p\rightarrow \infty$. 
\end{thm}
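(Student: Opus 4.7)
The plan is to lift the argument of \citet{borysov2014}, which treats the spherical case $\Sigma^{(k)} = \sigma^{(k)} I$, to general covariance structures by proving sharp high-dimensional concentration of pairwise squared distances, verifying that Condition~\ref{cond:clst} forces every within-cluster expected distance to sit strictly below every between-cluster expected distance by a margin that beats the concentration error, and then invoking the elementary fact that agglomerative linkage clustering reconstructs the true partition whenever the empirical distances respect this ordering.

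First I would compute expectations. For $X_i, X_j$ both drawn from $\mathcal{P}^{(k)}$, we have $X_i - X_j \sim N(0, 2\Sigma^{(k)})$, so $E\|X_i - X_j\|^2/p = 2\bar{\lambda}^{(k)}$. For $X_i$ from $\mathcal{P}^{(k)}$ and $X_j$ from $\mathcal{P}^{(k')}$ with $k \neq k'$, write $X_i - X_j = \delta + Z$ with $\delta = \mu^{(k)} - \mu^{(k')}$ and $Z \sim N(0, \Sigma^{(k)} + \Sigma^{(k')})$, so that $E\|X_i - X_j\|^2/p = \bar{\lambda}^{(k)} + \bar{\lambda}^{(k')} + p^{-1}\|\delta\|^2$. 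The spectral form $\mu^{(k,k')}$ appearing in Condition~\ref{cond:clst} arises when this signal term is reexpressed through the eigendecomposition $\Sigma^{(k)} + \Sigma^{(k')} = Q_{k,k'}\Lambda_{k,k'}Q_{k,k'}^T$, which is the same decomposition used to diagonalize the quadratic-form fluctuations in the next step.

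Second, I would establish concentration uniformly over pairs. Expanding $\|X_i - X_j\|^2 = \|\delta\|^2 + 2\delta^T Z + Z^T Z$, the Hanson–Wright inequality controls $Z^T Z - E\,Z^T Z$ and a standard Gaussian tail controls $\delta^T Z$. Under the bounded-eigenvalue part of Condition~\ref{cond:clst}, $\|\Sigma^{(k)} + \Sigma^{(k')}\|_{\mathrm{op}} = O(1)$ and $\|\Sigma^{(k)} + \Sigma^{(k')}\|_F^2 = O(p)$, giving a sub-exponential tail
\[
P\!\left(\,\bigl| \|X_i - X_j\|^2/p - E\|X_i-X_j\|^2/p \bigr| > t\,\right) \le 2\exp(-c\,pt^2)
\]
for small $t$. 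A union bound over the $\binom{n}{2}$ pairs produces uniform control at rate $\sqrt{\log n/p}$, which vanishes in the HDLSS regime $p \to \infty$, $\log n/p \to 0$.

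Third, I would rearrange Condition~\ref{cond:clst} to read
\[
\bigl(\bar{\lambda}^{(k)} + \bar{\lambda}^{(k')} + \mu^{(k,k')}\bigr) - 2\min\{\bar{\lambda}^{(k)},\bar{\lambda}^{(k')}\} \;>\; \bar{\lambda}^{(k)} + \bar{\lambda}^{(k')} - \lambda^{(k),p} - \lambda^{(k'),p},
\]
so that the expected gap between any between-cluster and any within-cluster squared distance is bounded below by a positive quantity exceeding the uniform concentration error. On this high-probability event every within-cluster empirical distance lies strictly below every between-cluster empirical distance; hence the first $n-K$ agglomerative merges of single, complete, or average linkage occur strictly inside true clusters, so cutting the dendrogram at $K$ clusters recovers the true partition $\mathcal{I}_1,\ldots,\mathcal{I}_n$, which is what \eqref{eq:clusterconst} asserts (modulo the label identification absorbed into that statement). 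The hard part is matching the precise spectral form of the condition: Hanson–Wright is driven by $\|\Sigma\|_F^2$ and hence does not control leading-eigendirection fluctuations, and the corrections $\lambda^{(k),p} + \lambda^{(k'),p}$ in Condition~\ref{cond:clst} are exactly what absorbs those, while the factor $\Lambda^{1/2}Q^T[\Sigma^{(k)}+\Sigma^{(k')}]^{1/2}$ in the definition of $\mu^{(k,k')}$ is chosen so that the signal term aligns with the quadratic-form variance. Verifying this alignment carefully, with explicit rates that survive the union bound over $O(n^2)$ pairs in the general-covariance setting, is the main technical step beyond \citet{borysov2014}.
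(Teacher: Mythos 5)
Your skeleton matches the paper's: reduce correct recovery by single/complete/average linkage to the event that every within-cluster squared distance falls below every between-cluster squared distance, prove uniform concentration of the $O(n^2)$ pairwise distances after diagonalizing via the spectral decomposition, and close with a union bound under $\log n/p\to 0$. The paper implements the concentration step not with Hanson--Wright but with two bespoke tail bounds extending \citet{borysov2014}: a one-sided Bernstein inequality for sums of nonnegative independent variables (Lemma~\ref{lemma:BHM1}, giving Lemma~\ref{lemma:2} for the lower tail of between-cluster distances) and a Chernoff/moment-generating-function computation for noncentral quadratic forms using an inequality from \citet{boucheron2013concentration} (Lemma~\ref{lemma:3}, for the upper tail of within-cluster distances). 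Under the bounded-eigenvalue part of Condition~\ref{cond:clst} your Hanson--Wright route would deliver tails of the same $\exp(-cpt^2)$ order, so this substitution is legitimate and arguably cleaner; your worry that Hanson--Wright ``does not control leading-eigendirection fluctuations'' is misplaced here, since $\lVert\Sigma^{(k)}+\Sigma^{(k')}\rVert_2$ is bounded by assumption and the terms $\lambda^{(k),p}+\lambda^{(k'),p}$ in Condition~\ref{cond:clst} play no role in the concentration step at all.

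The genuine gap is in your third step, which is exactly where the paper's final paragraph does its work. The clustering argument needs a single threshold $a$ (the paper's $ap$ separating $E_3\cap E_4$ from $E_5$) with $2\max\{\bar{\lambda}^{(k)},\bar{\lambda}^{(k')}\} < a < \bar{\lambda}^{(k)}+\bar{\lambda}^{(k')}+\mu^{(k,k')}$, i.e.\ the between-cluster mean must exceed the \emph{larger} of the two within-cluster means. Your rearrangement of Condition~\ref{cond:clst} instead lower-bounds the between-cluster mean minus $2\min\{\bar{\lambda}^{(k)},\bar{\lambda}^{(k')}\}$ --- the gap to the \emph{smaller} within-cluster mean --- and bounds it below by $\bar{\lambda}^{(k)}+\bar{\lambda}^{(k')}-\lambda^{(k),p}-\lambda^{(k'),p}$, which is nonpositive (the largest eigenvalue dominates the average), not ``a positive quantity exceeding the uniform concentration error.'' So the inequality you display does not yield the separation you need, and the passage from the $2\min$-form of the condition to the required $2\max$-form is precisely the step you defer as ``the main technical step.'' A complete proof must supply this conversion (the paper does so via the comparisons $2\bar{\lambda}^{(k)}-\lambda^{(k),p}-\lambda^{(k'),p}\geq 2\bar{\lambda}^{(k)}-\bar{\lambda}^{(k)}-\bar{\lambda}^{(k')}$ and its analogue for $k'$, which is where the $\lambda^{(k),p}$ terms actually enter), and until it is supplied the proposal does not establish the existence of the separating threshold on which everything else rests.
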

To proof of Theorem~\ref{thm:clst} generalizes recent results of \citet{borysov2014} to the case of arbitrary covariance structures. A key component of the proof is a new bound on the $\ell_2$ norm of a multivariate normal random variable with arbitrary mean and covariance matrix established in Lemma~\ref{lemma:3}. The proof of the lemma uses new concentration inequalities for high-dimensional problems in \cite{boucheron2013concentration},  and may be of independent interest. 
 
Note that the consistent estimation of subpopulation memberships \eqref{eq:clusterconst} implies that the estimated hierarchy among clusters also matches the true hierarchy.
Thus, with successful clustering established in Theorem \ref{thm:clst}, theoretical properties of $\hat{\Omega}_{HC,\rho_n}$ naturally follow. 
\begin{thm}
\label{thm:ratemodelclust}
Suppose that $X^{(k)},k=1,\ldots,K$, is normally distributed and that Condition \ref{cond:clst} holds.
\noindent (i) Under the conditions of Theorem \ref{thm:rate},
\begin{eqnarray*}
&&\sum_{k=1}^K\lVert \hat{\Omega}_{HC,\rho_n}^{(k)}-\Omega^{(k)}_0\rVert_2 = O_P\left(\sqrt{\frac{\lambda_{\Theta}^4(s+1)\log p}{n}}\right).
\end{eqnarray*}
Suppose, moreover, that the conditions of Theorem \ref{thm:modelconst} holds. Then
\begin{eqnarray*}
&&\sum_{k=1}^K\lVert \hat{\Omega}_{HC,\rho_n}^{(k)}-\Omega^{(k)}_0\rVert_F = O_P\left(\sqrt{\frac{\min\{\lambda_{\Theta}^4p(s+1),\kappa_{\Gamma}^2(s+p)\}\log p}{n}}\right),\\
&&\sum_{k=1}^K\lVert \hat{\Omega}_{HC,\rho_n}^{(k)}-\Omega^{(k)}_0\rVert_2 = O_P\left(\sqrt{\frac{\min\{\lambda_{\Theta}^4(s+1),\kappa_{\Gamma}^2d^2\}\log p}{n}}\right),\\
&&\sum_{k=1}^K\lVert \hat{\Omega}_{HC,\rho_n}^{(k)}-\Omega^{(k)}_0\rVert_\infty = O_P\left(\sqrt{\frac{\kappa_{\Gamma}^2\log p}{n}}\right).
\end{eqnarray*}

\noindent (ii) Under the conditions of Theorem \ref{thm:modelconst},
\begin{equation*}
P(\mathcal{M}(\hat{\Omega}_{HC,\rho_n},\Omega_0)) \rightarrow 1,\quad  \mbox{as }n,p\rightarrow \infty.
\end{equation*}
\end{thm}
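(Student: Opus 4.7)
The plan is to couple the HC-LASICH estimator with the LASICH estimator on the event that hierarchical clustering correctly recovers all cluster labels, and then transfer the established LASICH consistency results. Let $A_n = \{\hat{\mathcal{I}}_i = \mathcal{I}_i : i=1,\ldots,n\}$ denote this event. By Theorem~\ref{thm:clst}, $P(A_n)\to 1$ as $n,p\to\infty$ under Condition~\ref{cond:clst} and normality, so we need only control the HC-LASICH estimator on $A_n$.

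First I would show that on $A_n$, $\hat{\Omega}_{HC,\rho_n}^{(k)} = \hat{\Omega}_{\rho_n}^{(k)}$ for every $k=1,\ldots,K$. When the estimated and true labels coincide, the sample correlation matrices used inside the data-fit term $\tilde{\ell}_n(\Theta)$ in \eqref{eqn:corrlik} agree exactly with those used by LASICH; in addition, the hierarchical clustering algorithm applied to correctly labeled data returns a dendrogram whose inter-cluster path-lengths depend only on the data in a way that preserves the qualitative hierarchy of the true subpopulation network, so the Laplacian $\hat{L}$ derived from $\hat{G}$ defines the same penalty contour as the $L$ assumed in Theorems~\ref{thm:rate} and~\ref{thm:modelconst}. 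Consequently, the two convex programs in \eqref{eqn:originallasso} are identical on $A_n$, and the rescaling from inverse correlations to precision matrices via $\hat{\Xi}^{(k)}$ is also identical.

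For part (i), a standard coupling then yields both the spectral, Frobenius, and $\ell_\infty$ rates. For any of the three norms $\|\cdot\|_\diamond \in \{\|\cdot\|_2, \|\cdot\|_F, \|\cdot\|_\infty\}$ and any rate $r_n$ from Theorem~\ref{thm:rate} or Corollary~\ref{col:rate}, and any constant $M>0$,
\begin{equation*}
P\Bigl(\sum_{k=1}^K\|\hat{\Omega}_{HC,\rho_n}^{(k)}-\Omega_0^{(k)}\|_\diamond > M r_n\Bigr) \le P(A_n^c) + P\Bigl(\sum_{k=1}^K\|\hat{\Omega}_{\rho_n}^{(k)}-\Omega_0^{(k)}\|_\diamond > M r_n\Bigr),
\end{equation*}
and both terms can be made arbitrarily small for large $M$ and $n,p$, by Theorem~\ref{thm:clst} and the corresponding LASICH bound, respectively. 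Part (ii) follows by the same coupling: on $A_n$ the events $\mathcal{M}(\hat{\Omega}_{HC,\rho_n},\Omega_0)$ and $\mathcal{M}(\hat{\Omega}_{\rho_n},\Omega_0)$ coincide, so
\begin{equation*}
P(\mathcal{M}(\hat{\Omega}_{HC,\rho_n},\Omega_0)) \ge P(\mathcal{M}(\hat{\Omega}_{\rho_n},\Omega_0)) - P(A_n^c) \to 1
\end{equation*}
by Theorem~\ref{thm:modelconst}.

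The main obstacle is the subtle point that the data-driven Laplacian $\hat{L}$ is itself random and need not equal the $L$ postulated in the LASICH assumptions even on $A_n$. A careful argument must either verify that Conditions~\ref{cond:samplesize1}, \ref{cond:nullL}, and \ref{cond:rhos}--\ref{cond:rhos2} (in particular the bounds involving $\|L\|_2$ and the null-space condition on $L\Omega_{0,ij}$) are stable under perturbations that preserve the hierarchy, or reinterpret the LASICH assumptions as conditions on $\hat{L}$ itself (which is natural since HC-LASICH plugs $\hat{L}$ directly into \eqref{eqn:originallasso}). Under Condition~\ref{cond:clst}, the dendrogram weights produced by complete, single, or average linkage remain uniformly bounded away from $0$ and $\infty$ with probability tending to one, so $\|\hat{L}\|_2$ and related quantities are bounded by constants, and the rates in Theorems~\ref{thm:rate}--\ref{thm:modelconst} are preserved up to multiplicative constants that are absorbed into the stated $O_P$ rates.
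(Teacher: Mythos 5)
Your proposal takes essentially the same route as the paper: the paper gives no explicit proof of this theorem, stating only that once Theorem~\ref{thm:clst} guarantees correct recovery of the memberships and hierarchy, the properties of $\hat{\Omega}_{HC,\rho_n}$ ``naturally follow'' from Theorems~\ref{thm:rate} and~\ref{thm:modelconst} via exactly the coupling on the event $A_n$ that you describe. Your closing discussion of the randomness of $\hat{L}$ (the dendrogram-derived weights being data-dependent even when the labels are correct) is a genuine subtlety that the paper does not address, and your proposed resolution --- bounding $\lVert \hat{L}\rVert_2$ and absorbing the perturbation into the $O_P$ constants --- is a reasonable way to close that gap.
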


\section{Algorithms}\label{sec:alg}
We develop an alternating directions method of multipliers (ADMM) to efficiently solve the convex optimization problem (\ref{eqn:originallasso}). 

Let $A^{(k)}=(a^{(k)}_{ij})_{i,j=1}^p\in\mathbb{R}^{p\times p}$, $B^{(k)}=(b^{(k)}_{ij})_{i,j=1}^p\in\mathbb{R}^{p\times p}$, $C^{(k)}=(c^{(k)}_{ij})_{i,j=1}^p\in\mathbb{R}^{p\times p}$, $D^{(k)}=(d^{(k)}_{ij})_{i,j=1}^p\in\mathbb{R}^{p\times p}$, $k=1,\ldots,K$.
Define $A = (A^{(1)},\ldots,A^{(K)})$, $B = (B^{(1)},\ldots,B^{(K)})$, $C = (C^{(1)},\ldots,C^{(K)})$, $D = (D^{(1)},\ldots,D^{(K)})$, and
$c_{ij} \equiv (c_{ij}^{(1)},\ldots,c_{ij}^{(K)})^T \in \mathbb{R}^K$,
$d_{ij} \equiv (d_{ij}^{(1)},\ldots,d_{ij}^{(K)})^T \in \mathbb{R}^K$,
$e_{C,ij} \equiv (e_{C,ij}^{(1)},\ldots,e_{C,ij}^{(K)})^T \in \mathbb{R}^K$
where $E_C^{(k)} = (e_{C,ij}^{(k)})_{i,j=1}^p$.

To facilitate the computation, we consider instead a perturbed graph Laplacian $L_\epsilon = L + \epsilon I$, where $I$ is the identity matrix and $\epsilon>0$ is a small perturbation.
The difference between solutions to the original and modified optimization problem is largely negligible for small $\epsilon$; however, the positive definiteness of $L_\epsilon$ results in more efficient computation. 
A similar idea was used in \citet{MR2804206} and \citet{MR2417391} to avoid dividision by zero.
The optimization problem (\ref{eqn:originallasso}) with $L$ replaced by $L_\epsilon$ can then be written as
\begin{eqnarray}
\mbox{minimize}&&
\sum_{k=1}^K  \frac{n_k}{n}\left(\mbox{tr}\left(\Psi_n^{(k)}A^{(k)}\right)-\log \mbox{det}(A^{(k)})\right)\nonumber
 + \rho_n\sum_{k=1}^K\lVert B^{(k)} \rVert_1
+  \rho_n\rho_2\sum_{i\neq j} (c_{ij}^TL_\epsilon c_{ij})^{1/2} \nonumber \\
\mbox{subject to } &&
A^{(k)} = D^{(k)},
B^{(k)} = D^{(k)},
L_\epsilon c_{ij} =L_\epsilon d_{ij}
\quad k=1,\ldots,K, i,j=1,\ldots,p.
\label{eqn:admmlasso}
\end{eqnarray}
Using Lagrange multipliers $E = (E_A,E_B,E_C)^T$, with $E_A = (E_A^{(1)},\ldots,E_A^{(K)})$ with $E_A^{(k)}\in\mathbb{R}^{p\times p},k=1,\ldots,K$, $E_B = (E_B^{(1)},\ldots,E_B^{(K)})$ with $E_B^{(k)}\in\mathbb{R}^{p\times p},k=1,\ldots,K$, and $E_C = (E_C^{(1)},\ldots,E_C^{(K)})$ with $E_C^{(k)}\in\mathbb{R}^{p\times p},k=1,\ldots,K$, the augmented Lagrangian in scaled form is given by
\begin{eqnarray*}
&&L_\varrho (A,B,C,D,E) \\
&&\equiv n^{-1}\sum_{k=1}^K  n_k\left(\mbox{tr}\left(\Psi_n^{(k)}A^{(k)}\right)-\log \mbox{det}(A^{(k)})\right)\nonumber
 + \rho_n\sum_{k=1}^K\lVert B^{(k)} \rVert_1
+  \rho_n\rho_2\sum_{i\neq j} (c_{ij}^TL_\epsilon c_{ij})^{1/2} \nonumber \\
&& \quad+ \frac{\varrho}{2}\sum_{k=1}^K\left\lVert A^{(k)}-D^{(k)} + E_A^{(k)}\right\rVert_{F}^2
 + \frac{\varrho}{2}\sum_{k=1}^K\left\lVert B^{(k)}-D^{(k)} + E_B^{(k)}\right\rVert_{F}^2\\
&& \quad+ \frac{\varrho}{2}\sum_{i,j}\left\lVert L_\epsilon^{1/2}c_{ij}-L_\epsilon^{1/2}d_{ij} + e_{C,ij}\right\rVert_{F}^2.
\end{eqnarray*}
Here $\varrho>0$ is a regularization parameter and $L_\epsilon^{1/2}$ is the square root of $L_\epsilon$ with $L_\epsilon = (L_\epsilon^{1/2} )^TL_\epsilon^{1/2}$.

The proposed ADMM algorithm is as follows.
\begin{itemize}
\item \emph{Step 0}. Initialize $A^{(k)} = A^{(k),0}$, $B^{(k)} = B^{(k),0}$, $C^{(k)} = C^{(k),0}$, $D^{(k)} = D^{(k),0}$, $E_A^{(k)} = E_A^{(k),0}$, $E_B^{(k)} = E_B^{(k),0}$, $E_C^{(k)} = E_C^{(k),0}$ and choose $\varrho>0$. Select a scalar $\varrho>0$.
\item \emph{Step $m$}. Given the $(m-1)$th estimates, 
\begin{itemize}
\item Update $A^{(k)}$) Find $A^{m}$ minimizing $-\ell_n(A) -(\varrho/2)\sum_{k=1}^K\lVert A^{(k)}-D^{(k),m-1} -E_A^{(k),m-1}\rVert$ (see pages 46-47 of \citet{BoydPCPE11} for details).
\item (Update $B^{(k)}$) Compute 
$B_{ij}^{(k),m} = S_{\rho_n/\varrho}(D^{(k),m-1}_{ij} -E_{B,ij}^{(k),m-1})$,
where
$S_y(x)$ is $x-y$ if $x>y$, is $0$ if $|x|\leq y$, and is $x+y$ if $x<-y$.
\item (Update $C^{(k)}$) For $(x)_+=\max\{x,0\}$, compute
\begin{equation*}
c_{ij}^{m} = \left(1-\frac{\rho_n\rho_2}{\varrho\lVert L_\epsilon^{1/2}d_{ij}^{m-1}-e_{C,ij}^{m-1}\rVert}\right)_+(d_{ij}^{m-1}-L_\epsilon^{-1/2}e_{C,ij}^{m-1}).
\end{equation*}
\item (Update $D^{(k)}$) Compute 
\begin{eqnarray*}
d_{ij}^{m} = (2I + L_\epsilon)^{-1}\{a_{ij}^m+e_{A,ij}^{m-1}+b_{ij}^m+e_{B,ij}^{m-1}
+L_\epsilon c_{ij}^m+(L_\epsilon^{1/2})^Te_{C,ij}^{m-1}\}.
 \end{eqnarray*}
\item (Update $E_A$) Compute $E_A^{(k),m} =  E_A^{(k)} +A^{(k),m} - D^{(k),m}$.
\item (Update $E_B$) Compute $E_B^{(k),m} =  E_B^{(k)} +B^{(k),m} - D^{(k),m}$,
\item (Update $E_C$) Compute $e_{C,ij}^{(k),m} =  e_{C,ij}^{(k)} +L^{1/2}(c^{(k),m}_{ij} - d^{(k),m}_{ij})$.
\end{itemize}
\item Repeat the iteration until the maximum of the errors $r_A^{(k)} = A^{(k)} -D^{(k)}$,
$r_B^{(k),m} = B^{(k),m} -D^{(k),m}$,
$r_C^{(k),m} = C^{(k),m} -D^{(k),m}$,
$s^{(k),m} = \varrho (D^{(k),m} - D^{(k),m-1})$ in the Frobenius norm is less than a specified tolerance level.
\end{itemize}

The proposed ADMM algorithm facilitates the estimation of  parameters of moderately large problems. However, parameter estimation in high dimensions can be computationally challenging. 
We next present a result that determines whether the solution to the optimization problem \eqref{eqn:originallasso}, for given values of tuning parameters $\rho_n, \rho_2$, is block diagonal. (Note that this result is an \emph{exact} statement about the \emph{solution} to \eqref{eqn:originallasso}, and does not assume block sparsity of the true precision matrices; see Theorems~1 and 2 of \citet{Danaher} for similar results.)
More specifically, the condition in Proposition~\ref{prob:block} provides a very fast check, based on the entries of the empirical correlation matrices $\Psi_{n}^{(k)},k=1,\ldots,K$, to identify the block sparsity pattern in $\hat{\Omega}_{\rho_n}^{(k)},k=1,\ldots,K$ after some permutation of the features. 

Let $U_L=[u_1 \ldots u_K]\in\mathbb{R}^{K\times K}$ where $u_1,\ldots,u_K$'s are eigenvectors of $L$ corresponding to $0,\lambda_{L,2},\ldots,\lambda_{L,K}$. 
Define $\Lambda^{-1/2}_L$ as the diagonal matrix with diagonal elements $0,\lambda_{L,2}^{-1/2},\ldots, \lambda_{L,K}^{-1/2}$.  
\begin{prop}\label{prob:block}
The solution $\hat{\Omega}_{\rho_n}^{(k)},k=1,\ldots,K$ to the optimization problem (\ref{eqn:originallasso}) consists of the block diagonal matrices with the same block structure $\diag(\Omega_1,\ldots,\Omega_B)$ among all groups if and only if for $\Psi_{n,ij}=(\psi_{n,ij}^{(1)},\ldots,\psi_{n,ij}^{(K)})^T$
\begin{equation}
\label{eqn:blck}
\min_{v \in [-1,1]^K} \left\lVert \Lambda^{-1/2}_LU_L\left(\frac{n_k}{n}\Psi_{n,ij}-\rho_n v \right)\right\rVert \leq \rho_n\rho_2,
\end{equation}
and for all $i,j$ such that the $(i,j)$ element is outside the blocks.
\end{prop}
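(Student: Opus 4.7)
My strategy is to exploit convexity of the LASICH objective: a feasible $\hat{\Theta}$ is optimal if and only if $0$ lies in the subdifferential of the objective at $\hat{\Theta}$. Differentiating $-\tilde{\ell}_n$, the smooth part contributes $(n_k/n)(\psi_{n,ij}^{(k)} - [(\hat{\Theta}^{(k)})^{-1}]_{ij})$ at entry $(i,j)$ of group $k$. The $\ell_1$ penalty contributes the subdifferential $\rho_n[-1,1]$ at $0$, and the Laplacian seminorm $\|\Theta_{ij}\|_L = \|L^{1/2}\Theta_{ij}\|_2$, whose subdifferential at $\Theta_{ij}=0$ was characterized in Lemma~\ref{lemma:sbdiff} via the spectral decomposition of $L$, contributes $\rho_n\rho_2\{L^{1/2}u : \|u\|_2 \leq 1\}$. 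I will then adapt the block-diagonal decomposition argument used in Witten--Friedman--Simon and in \citet{Danaher}, which verifies KKT on each block separately and uses \eqref{eqn:blck} to verify KKT off-block.

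\textbf{Necessity.} Assume $\hat{\Theta}^{(k)}$ is block-diagonal with common structure $\diag(\Omega_1,\ldots,\Omega_B)$ for every $k$. Then $(\hat{\Theta}^{(k)})^{-1}$ inherits the same block-diagonal structure, so at every off-block $(i,j)$ both $\hat{\theta}_{ij}^{(k)}=0$ and $[(\hat{\Theta}^{(k)})^{-1}]_{ij}=0$. The stationarity condition at such $(i,j)$ therefore collapses to the existence of $v\in[-1,1]^K$ and $u$ with $\|u\|_2\leq 1$ satisfying $\tfrac{n_k}{n}\Psi_{n,ij} + \rho_n v + \rho_n\rho_2 L^{1/2}u = 0$. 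Expanding $L^{1/2}=U_L\Lambda_L^{1/2}U_L^T$ and inverting on the range of $L^{1/2}$, the existence of such a pair $(v,u)$ is exactly \eqref{eqn:blck} (after the harmless sign flip $v\mapsto -v$ allowed by symmetry of $[-1,1]^K$).

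\textbf{Sufficiency.} Conversely, suppose \eqref{eqn:blck} holds at every off-block $(i,j)$. Construct a candidate $\check{\Theta}$ by solving the LASICH optimization problem separately on each block $\Omega_1,\ldots,\Omega_B$ (the within-block problem is well-defined because cross-block penalties vanish when the off-block entries are set to zero), and assembling the block-diagonal matrix. By construction, $\check{\Theta}$ satisfies the KKT conditions at every within-block entry. Since $\check{\Theta}^{(k)}$ is block-diagonal, so is its inverse, and the smooth gradient vanishes off-block. Condition \eqref{eqn:blck} then supplies the subgradient witnesses $(v,u)$ required for KKT at every off-block entry. Thus $\check{\Theta}$ satisfies the full KKT system; convexity forces $\check{\Theta}=\hat{\Theta}_{\rho_n}$, yielding the claimed block-diagonal solution.

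\textbf{Main obstacle.} The principal subtlety is that $L$ is singular: for a connected subpopulation network, $0$ is a simple eigenvalue with eigenvector proportional to $\mathbf{1}$, so the subgradient set $\{L^{1/2}u:\|u\|_2\leq 1\}$ lies inside $\mathrm{range}(L^{1/2})=\mathbf{1}^\perp$. Consequently, the KKT equality $\tfrac{n_k}{n}\Psi_{n,ij}+\rho_n v+\rho_n\rho_2 L^{1/2}u=0$ is feasible only when $\tfrac{n_k}{n}\Psi_{n,ij}+\rho_n v$ is orthogonal to the null space of $L$, and the minimal $\|u\|_2$ achieving it equals the pseudo-inverse norm $\|L^{+/2}(\tfrac{n_k}{n}\Psi_{n,ij}+\rho_n v)\|_2$. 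The diagonal matrix $\Lambda_L^{-1/2}$ in \eqref{eqn:blck}, with its zero entry in the null-eigenvalue position, is precisely the expression of this pseudo-inverse in the eigenbasis of $L$; it automatically annihilates the null-space component rather than enforcing it separately. The outer minimization over $v\in[-1,1]^K$ then exploits the flexibility in the $\ell_1$ subgradient to balance both the range compatibility and the norm bound simultaneously. Carefully translating between the pointwise KKT system and the compact form \eqref{eqn:blck}, while keeping track of the null space of $L$, is where the proof concentrates its effort.
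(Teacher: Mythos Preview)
Your proposal is correct and follows exactly the approach the paper indicates: the paper omits the proof, stating only that it is ``similar to Theorems~1 of \citet{Danaher}'', and the KKT/block-decomposition argument you outline---adapted to the Laplacian penalty via the subdifferential characterization in Lemma~\ref{lemma:sbdiff}---is precisely that argument. Your careful identification of the null-space subtlety (the zero eigenvalue of $L$ and the pseudo-inverse role of $\Lambda_L^{-1/2}$) goes beyond what the paper's terse reference spells out.
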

The proof of the Proposition is similar to Theorems 1 of \citet{Danaher} and is hence omitted. 
Condition~\ref{eqn:blck} can be easily verified by applying quadratic programming to the left hand side of the inequality.
The solution to \eqref{eqn:originallasso} can then be equivalently found by solving the optimization problem separately for each of the blocks; this can result in significant computational advantages for moderate to large values of $\rho_n \rho_2$.

\section{Numerical Results}\label{sec:NumRes}
\subsection{Simulation Experiments}\label{sec:sims}
We compare our method with four existing methods, graphical lasso, the method of \citet{MR2804206}, FGL and GGL of \citet{Danaher}.
For graphical lasso, estimation was carried out separately for each group
with the same regularization parameter.

Our simulation setting is motivated by estimation of gene networks
for healthy subjects and patients with two similar diseases caused by
inactivation of certain biological pathways.
We consider $K=3$ groups with sample sizes $n=(50,100,50)$ 
and dimension $p=100$. 
Data are generated from multivariate normal distributions 
$N(\mu^{(k)},(\Omega^{(k)}_0)^{-1}),k=1,2,3$; all precision matrices $\Omega^{(k)}_0$ are block diagonal with 4 blocks of equal size.

To create the precision matrices, we first generated a graph with 4 components of equal size, each as either an Erd\H{o}s-R\'{e}nyi or scale free graphs with $95$ total edges.
We randomly assigned $\textrm{Unif}((-.7,-.5)\cup (.5,.7))$ values to nonzero entries of the corresponding adjacency matrix $A$ and obtained a matrix $\tilde{A}$.
We then added $0.1$ to the diagonal of $\tilde{A}$ to obtain a positive definite matrix $\Omega_0^{(1)}$.
For each of subpopulations 2 and 3, we removed one of the components of the graph by setting the off diagonal entries of $\tilde{A}$ to zero, and added a perturbation from $\textrm{Unif}(-.2,.2)$ to nonzero entries in $\tilde{A}$.
Positive definite matrices $\Omega_0^{(2)}$ and $\Omega_0^{(3)}$ were obtained by adding $0.1$ to the diagonal elements. 
All partial correlations ranges from .28 to .54 in the absolute values.
A similar setting was considered in in \citet{Danaher}, where the graph included more components, but no perturbation was added. 
We consider two simulation settings, with \emph{known} and \emph{unknown} subpopulation network $G$.

\subsubsection{Known subpopulation network $G$}
In this case, we set $\mu^{(k)} = 0,k=1,2,3$ and use the graph in Figure~\ref{fig:graphgroup} as the subpopulation network. 

Figures~\ref{fig:nocluster}a,c show the average number of true positive edges versus the average number of detected edges over 50 simulated data sets.
Results for multiple choices of the second tuning parameter are presented for FGL, GGL and LASICH. 
It can be seen that in both cases, LASICH outperforms other methods, when using relatively large values of $\rho_2$. Smaller values of $\rho_2$, on the other hand, give similar results as other methods of joint estimation of multiple graphical models. 
These results indicate that, when the available subpopulation network is informative, the Laplacian shrinkage constraint can result in significant improvement  in estimation of the underlying network. 

\begin{figure}[htbp]
\centering 
\includegraphics[width=.85\textwidth]{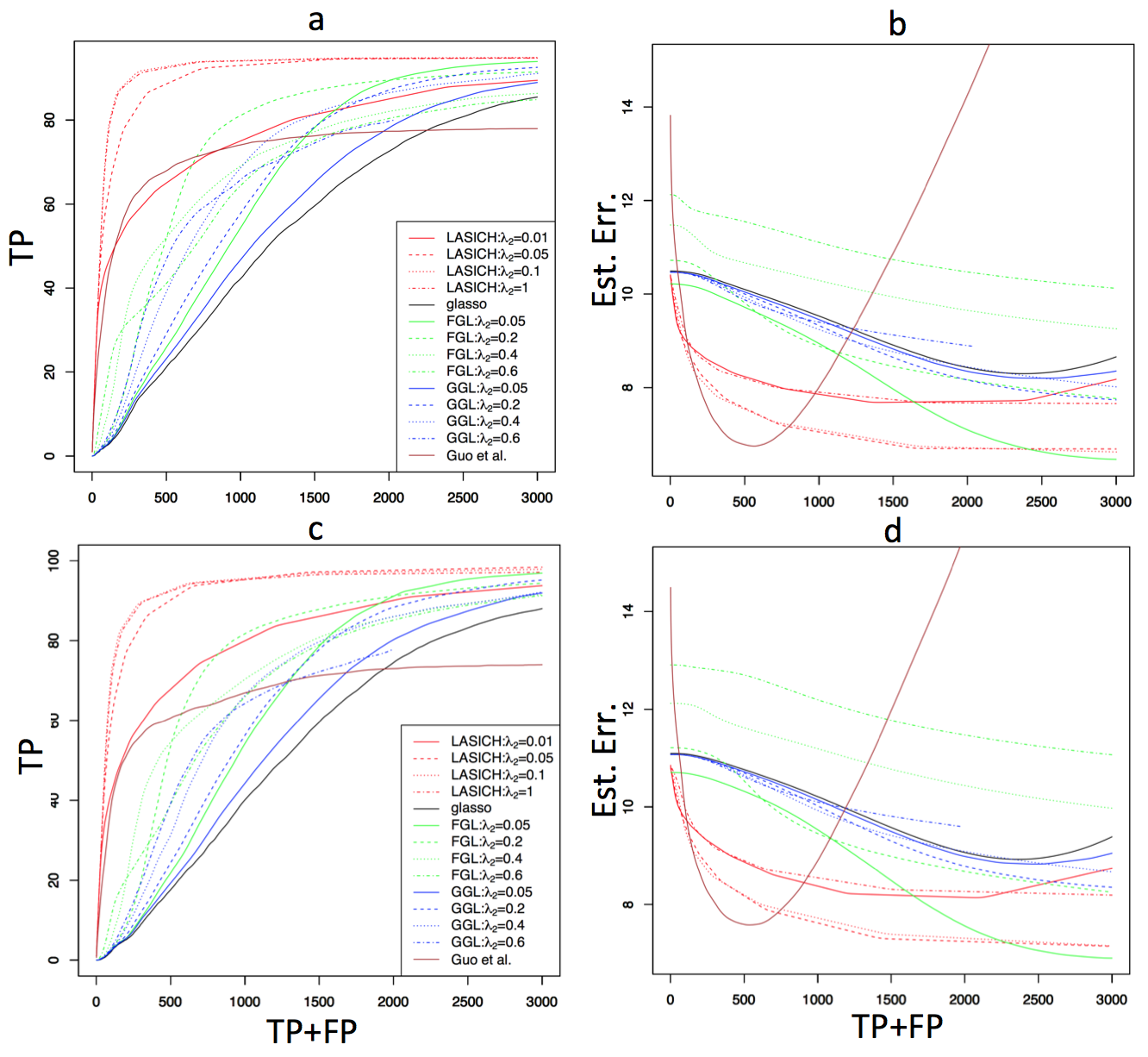}
\caption{Simulation results for joint estimation of multiple precision matrices with known subpopulation memberships. Results show the average number of true positive edges (\textbf{a} \& \textbf{c}) and estimation error, in Frobenius norm (\textbf{b} \& \textbf{d}) over 50 data sets with $n=200$ multivariate normal observations generated from a graphical model with $p=100$ features; results in top row (\textbf{a} \& \textbf{b}) are for an Erd\H{o}s-R\'{e}nyi graph and those in bottom row (\textbf{c} \& \textbf{d}) are for a scale free (power-law) graph.}
\label{fig:nocluster}
\end{figure}
Figures~\ref{fig:nocluster}b,d show the estimation error, in Frobenius norm, versus the number of detected edges. 
LASICH has larger errors when the estimated graphs have very few edges, but, its error decreases as the number of detected edges increase, eventually yielding smaller errors than other methods. 
The non-convex penalty of \citet{MR2804206} performs well in terms of estimation error, although determining the appropriate range of tuning parameter for this method may be difficult.

\subsubsection{Unknown subpopulation network $G$}
In this case, the subpopulation memberships and the subpopulation network $G$ are estimated based on hierarchical clustering. 
We randomly generated $\mu^{(1)}$ from a multivariate normal distribution with a covariance matrix $\sigma^2 I$.
For subpopulations 2 and 3, the elements of $\mu^{(1)}$ corresponding to the empty components of the graph were set to zero to obtain $\mu^{(2)}$ and $\mu^{(3)}$. 
Hierarchical clustering with complete linkage was applied to data to obtain the dendrogram; we took inverse of distances in the dendrogram to obtain similarity weights used in the graph Laplacian.

\begin{figure}[t]
  \begin{center}
    \includegraphics[width=.45\textwidth]{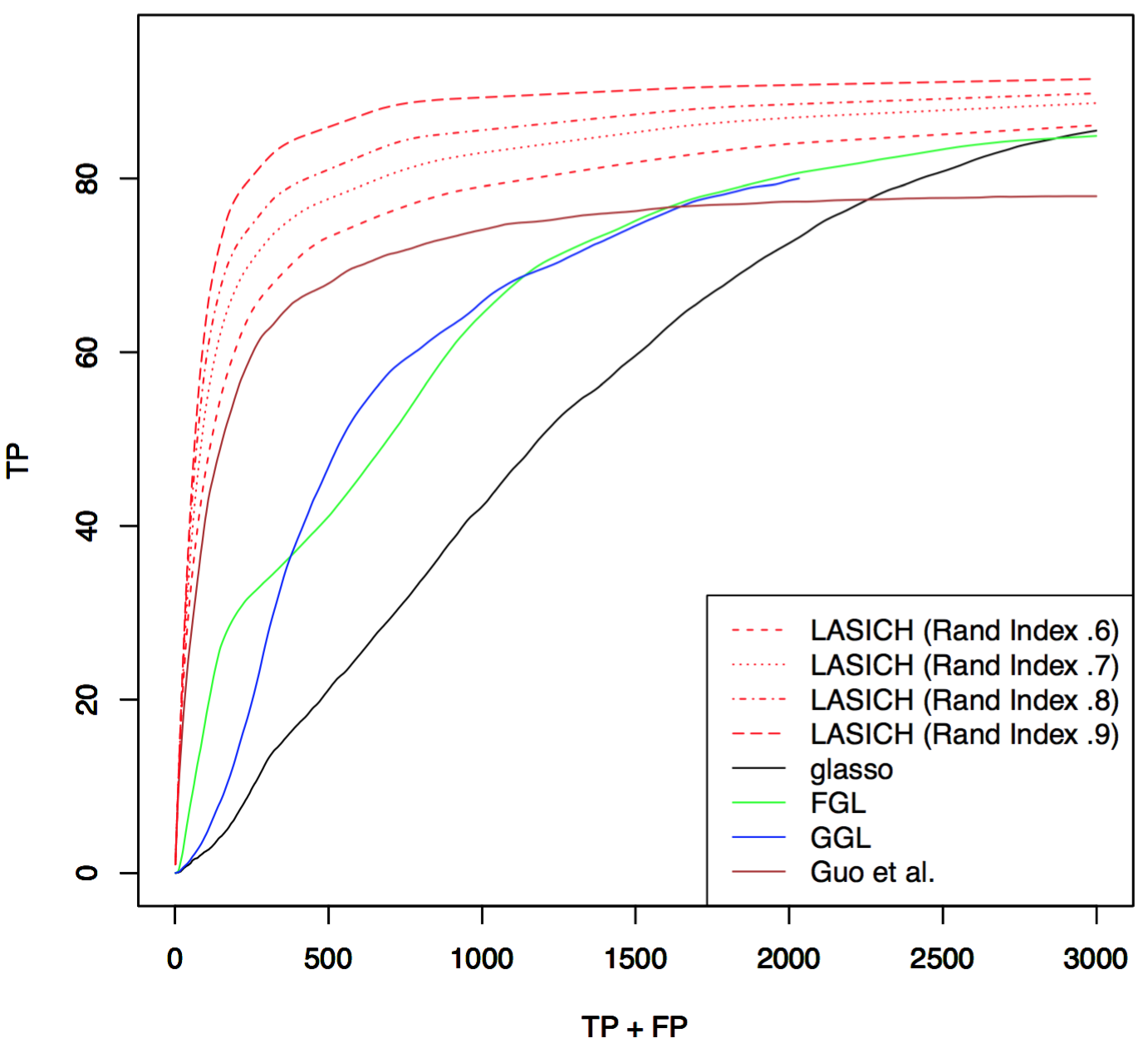}
  \end{center}
\caption{Simulation results for joint estimation of multiple precision matrices with unknown subpopulation memberships. Results show the average number of true positive edges over 50 data sets with $n=200$ multivariate normal observations generated from a graphical model with over an Erd\H{o}s-R\'{e}nyi graph with $p=100$ features. Results for HC-LASICH and FGL/GGL correspond to the best choice of the second tuning parameter among those in Figure~\ref{fig:nocluster}a. The Rand indices for HC-LASICH are averages over 50 generated data sets.}
\label{fig:cluster}
\end{figure}
Figures~\ref{fig:cluster} compares the performance of HC-LASICH, in terms of support recovery, to competing methods, in the setting where the subpopulation memberships and network are estimated from data (Section~\ref{sec:HC}). 
Here the differences in subpopulation means $\mu^{(k,k')}$ are set up to evaluate the effect of clustering accuracy. The four settings considered correspond to average Rand indices of .6 .7, .8 and .9 across 50 data sets, respectively.
Here the second tuning parameter for HC-LASICH, GGL and FGL is chosen according to the best performing model in Figure~\ref{fig:nocluster}. 
As expected, changing the mean structure, and correspondingly the Rand index, does not affect the performance of other methods. 
The results indicate that, as long as features can be clustered in a meaningful way, HC-LASICH can result in improved support recovery. 
Data-adaptive choices of the tuning parameter corresponding to the Laplacian shrinkage penalty may result in further 
improvements in the performance of the HC-LASICH. However, we do not pursue such choices here.

\subsection{Genetic Networks of Cancer Subtypes}
Breast cancer is heterogenous with multiple clinically verified subtypes \citep{perou2000}. 
\citet{pmid20576095} used copy number variation and gene expression measurements to identify new subtypes of breast cancer and showed that the identified subtypes have distinct clinical outcomes. 
The genetic networks of these different subtypes are expected to share similarities, but to also have unique features. Moreover, the similarities among the networks are expected to corroborate with the clustering of the subtypes based on their molecular profiles. 
We applied network estimation methods of Section~\ref{sec:sims} to a subset of the microarray gene expression data from \citet{pmid20576095}, containing data for 218 patients classified into three previously known subtypes of breast cancer: 46  Luminal-simple, 105 Luminal-complex and 67 Basal-complex samples. For ease of presentation, we focused on 50 genes with largest variances. The hierarchical clustering results of \citet{pmid20576095}, reproduced in Figure~\ref{fig:dendrogram} for the above three subtypes, were used to identify the subpopulation membership; reciprocals of distances in the dendrogram were used to define similarities among subtypes used in the graph Laplacian penalty. 

To facilitate the comparison, tuning parameters were selected such that the estimated networks of the three subtypes using each method contained a total of 150 edges. 
For methods with two tuning parameters, pairs of tuning parameters were determined using the Bayesian information criterion (BIC), as described in \citet{MR2804206}.
Estimated genetic networks of the three cancer subtypes are shown in Figure~\ref{fig:dendrogram}. For each method, edges common in all three subtypes, those common in Luminal subtypes and subtype specific edges are distinguished. 

In this example, results from separate graphical lasso estimation and FGL/GGL are two extremes.
Estimated network topologies from graphical lasso vary from subtype to subtype, and common structures are obscured; this variability may be because similarities among subtypes are not incorporated in the estimation.
In contrast, FGL and GGL give identical networks for all subtypes, perhaps because both methods encourage the estimated networks of all subtypes to be equally similar.
Intermediate results are obtained using LASICH and the method of \citet{MR2804206}.
The main difference between these two methods is that \citet{MR2804206} finds more edges common to all three subtypes, whereas LASICH finds more edges common to the Luminal subtypes. 
This difference is likely because LASICH prioritizes the similarity between the Luminal subtypes via graph Laplacian while the method of \citet{MR2804206} does not distinguish between the three subtypes. 
The above example highlights the potential advantages of LASICH in providing network estimates that better corroborate with the known hierarchy of subpopulations.
\begin{figure}
\centering
\includegraphics[width=0.65\textwidth]{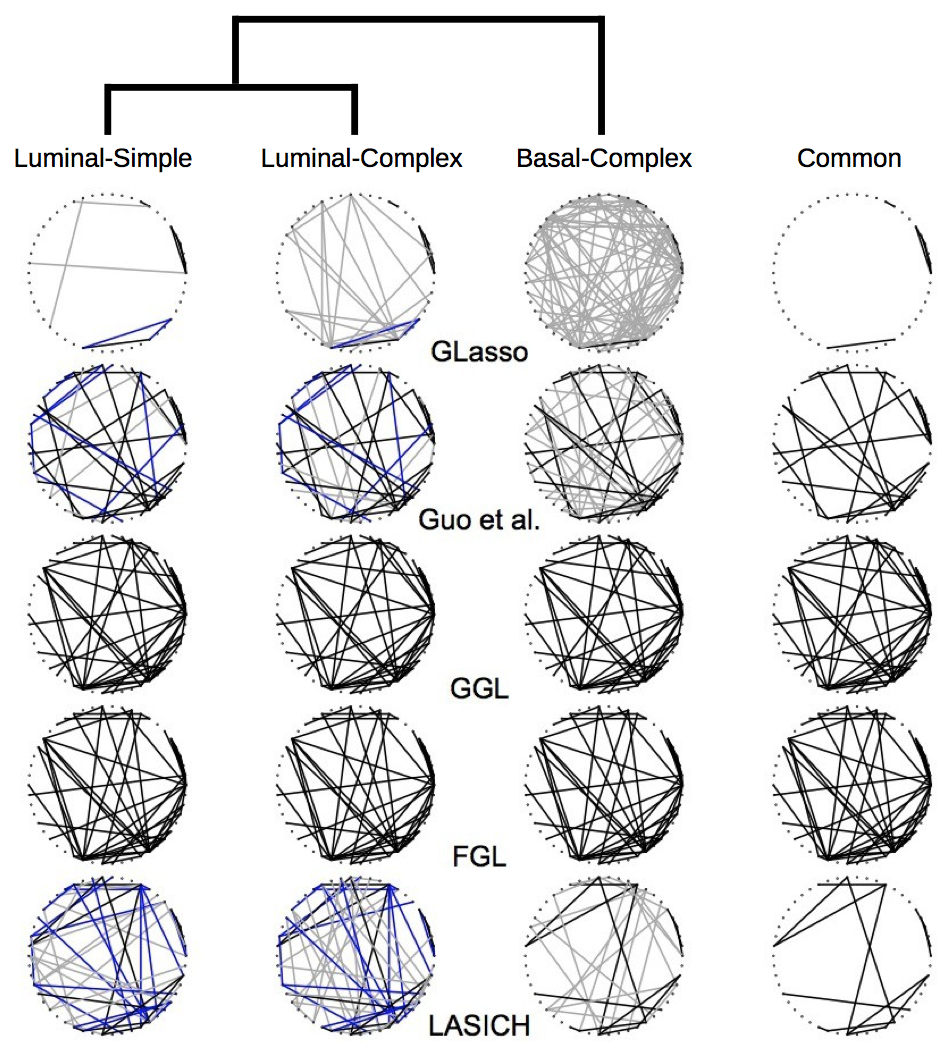}
\caption{Dendrogram of hierarchical clustering of three subtypes of breast cancer from J{\"o}nsson et al. (2010) along with estimated gene networks using graphical lasso (Glasso), method of Guo et al., FGL and GGL of Daneher et al. (2014) and LASICH. 
Blue edges are common to Luminal subtypes and black edges are shared by all three subtypes; condition specific edges are drawn in gray.} 
\label{fig:dendrogram}
\end{figure}

\section{Discussion}\label{sec:conc}
We introduced a flexible method for joint estimation of multiple precision matrices, called LASICH, which is particularly suited for settings where observations belong to three or more subpopulations. 
In the proposed method, the relationships among heterogenous subpopulations is captured by a weighted network, whose nodes correspond to subpopulations, and whose edges capture their similarities. 
As a result, LASICH can model complex relationships among subpopulations, defined, for example, based on hierarchical clustering of samples.

We established asymptotic properties of the proposed estimator in the setting where the relationship among subpopulations is externally defined. We also extended the method to the setting of unknown relationships among subpopulations, by showing that clusters estimated from the data can accurately capture the true relationships. 
The proposed method generalizes existing convex penalties for joint estimation of graphical models, and can be particularly advantageous in settings with multiple subpopulations. 

A particularly appealing feature of the proposed extension of LASICH is that it can also be applied in settings where the subpopulation memberships are unknown. The latter setting is closely related to estimation of precision matrices for mixture of Gaussian distributions. 
Both approaches have limitations and drawbacks: on the one hand, the extension of LASICH to unknown subpopulation memberships requires certain assumptions on differences of population means (Section~\ref{sec:HC}). On the other hand, estimation of precision matrices for mixture of Gaussians is computationally challenging, and known rates of convergence of parameter estimation in mixture distributions (e.g. in \citet{stadler2010}) are considerably slower.

Throughout this paper we assumed that the number of subpopulations is known. Extensions of this method to estimation of graphical models in populations with an unknown number of subpopulations would be particularly interesting for analysis of genetic networks associated with heterogeneity in cancer samples, and are left for future research.

\section{Appendix: Proofs and Technical Detials}\label{sec:appendix}
We denote true inverse correlation matrices as $\Theta_0 =(\Theta^{(1)}_0,\ldots,\Theta_0^{(K)})$ and true correlation matrices as $\Psi_0 =(\Psi^{(1)}_0,\ldots,\Psi_0^{(K)})$, where $\Theta_0^{(k)}\equiv (\Psi_0^{(k)})^{-1}\equiv (\theta^{(k)}_{0,ij})_{i,j=1}^p$, and $\Psi_0^{(k)}=(\psi_{0,ij}^{(k)})_{i,j=1}^p$.
The estimates of the population parameters are dented as $\hat{\Sigma}^{(k)}_n = (\hat{\sigma}_{ij})_{i,j=1}^p$, $\Psi^{(k)}_n = (\psi_{n,ij})_{i,j=1}^p$, and $\hat{\Theta}^{(k)}_{\rho_n} = (\hat{\theta}_{\rho_n,ij}^{(k)})_{i,j=1}^p$.
For a vector $x=(x_1,\ldots,x_p)^T$ and $J\subset \{1,\ldots,p\}$, 
we denote $x_J = (x_j, j \in J)^T$. 
For a matrix $A$, $\lambda_k(A)$ is the $k$th smallest eigenvalue and $\vec{A}$ is the vectorization of $A$.
For $J \subset \{(i,j):i,j=1,\ldots,p\}$ and $A\in\mathbb{R}^{p\times p}$,
$\vec{A}_J$ is a vector in $\mathbb{R}^{|J|}$ obtained by removing elements corresponding to $(i,j)\notin J$ from $\vec{A}$.
A zero-filled matrix $A_J \in\mathbb{R}^{p\times p}$ is obtained from $A$ by replacing $a_{ij}$ by 0 for $(i,j)\notin J$. 

\subsection{Consistency in Matrix Norms}
Theorem~\ref{thm:rate} is a direct consequence of the following result.
\begin{lemma}\label{lemma:l1const}
(i) Suppose that Condition \ref{cond:exptail} holds.
Let $\gamma\in(0,\min_k\pi_k)$ be arbitrary.
For
\begin{equation*}
n\geq\max\left\{6\gamma^{-1}\log p, \, 2^{15}3^3C_1^2 \gamma^{-3}(1+4c_1^2)^2 \max_{k,i}\{\sigma_{ii}^{(k)}\}^2
\lambda_{\Theta}^4 
\left(1+\rho_2\lVert L\rVert_2^{1/2}\right)^2s\log p\right\}
\end{equation*}
and $\rho_n = 2^3\sqrt{6}C_1(1+4c_1^2)\gamma^{-1/2}\max_{k,i}\sigma_{ii}^{(k)}\sqrt{\log p/n}$,
we have with probability $(1-2K/p)(1-2K\exp(-2n(\min_k\pi_k-\gamma)^2)) $ that
\begin{eqnarray*}
\sum_{k=1}^K\lVert \hat{\Theta}_{\rho_n}^{(k)}-\Theta_0^{(k)}\rVert_F
\leq 2^{15/2}3^{3/2}C_1\gamma^{-3/2}(1+4c_1^2)\max_{k,i}\sigma_{ii}^{(k)}
\lambda_{\Theta}^2 
\left(1+\rho_2\lVert L\rVert_2^{1/2}\right)\sqrt{\frac{ s\log p}{n}}.
\end{eqnarray*}

(ii) Suppose that Condition \ref{cond:polytail} holds with $p\leq c_7n^{c_2}$, $c_2,c_3,c_7>0$.
For $\rho_n = C_1K\delta_n$ satisfying
\begin{equation*}
2^43^2C_1\rho_n^2\gamma^{-2}s(1+\rho_2\lVert L\rVert_2^{1/2})^2
\lambda_{\Theta}^4 
\leq 1/4
\end{equation*}
and $\tau > (2^7+2^3\sqrt{1+2^{4}3^2c_4\max_{k,i}\{\sigma_{ii}^{(k)}\}^2})/(9c_4\max_{k,i}\{\sigma_{ii}^{(k)}\}^2)$
we have with probability $(1-2K\exp(-2n(\min_k\pi_k-\gamma)^2))\nu_n  $
that
\begin{eqnarray*}
\sum_{k=1}^K\lVert \hat{\Theta}_{\rho_n}^{(k)}-\Theta_0^{(k)}\rVert_F
\leq 2^43^{3/2}C_1 \gamma^{-2}K\left(1+\rho_2\lVert L\rVert_2^{1/2}\right)
\lambda_{\Theta}^2 
s^{1/2} \delta_n,
\end{eqnarray*}
where
\begin{eqnarray*}
\delta_n
&\equiv& \max_{k,i}\{\sigma_{ii}^{(k)}\}^2c_4(4+\tau)\gamma^{-1}\frac{\log p}{n}+(1+2\max_{k,i}|\mu^{(k),i}|)\sqrt{\max_{k,i}\{\sigma_{ii}^{(k)}\}^2c_4(4+\tau)\gamma^{-1}\frac{\log p}{n}}\\
&&+2\max_{k,i,j}\E|X^{(k),i}X^{(k),j}|I\left(|X^{(k),i}X^{(k),j}|\geq \sqrt{\frac{\gamma n}{\log p}}\right) + 4\left\{\max_{k,i} \E|X^{(k),i}|I\left(|X^{(k),i}|\geq \sqrt{\frac{\gamma n}{\log p}}\right)\right\}^2 \\
&&+ 2(1+2\max_{k,i}|\mu^{(k),i}|)\max_{k,i} \E|X^{(k),i}|I\left(|X^{(k),i}|\geq \sqrt{\frac{\gamma n}{\log p}}\right)\\
&& = O\left( \sqrt{\frac{\log p}{n}}\right),
\end{eqnarray*}
and
\begin{eqnarray*}
\nu_n
&\equiv& \frac{3c_7c_4\max_{k,i}\{\sigma_{ii}^{(k)}\}^2(\log p)^{c_2+c_3+1}}{\gamma^{c_3}n^{c_3}}
+  \frac{c_7c_4\max_{k,i}\sigma_{ii}^{(k)}(\log p)^{2(c_2+c_3+1)}}{n^{c_2+c_3+1}}\\
&&+ 8p^2 \exp\left(-\frac{\max_{k,i}\sigma_{ii}^{(k)}c_4(4+\tau)\log p}{2\max_{k,i}\sigma_{ii}^{(k)}c_4+ \sqrt{\max_{k,i}\{\sigma_{ii}^{(k)}\}^2c_4(64+16\tau)}/3} \right)\\
&& = o(1).
\end{eqnarray*}
\end{lemma}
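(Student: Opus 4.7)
The plan is to cast \eqref{eqn:originallasso} in the penalized M-estimator framework of Negahban et al., which, once properly instantiated, delivers a deterministic bound of the form
\[
\sum_{k=1}^K\lVert \hat{\Theta}^{(k)}_{\rho_n}-\Theta_0^{(k)}\rVert_F
\;\lesssim\; \frac{\rho_n\,\sqrt{s}\,\Psi(\mathcal{M})}{\kappa},
\]
where $\Psi(\mathcal{M})$ is the subspace compatibility constant of the penalty and $\kappa$ is the restricted-strong-convexity (RSC) curvature of the correlation-based loss $-\tilde\ell_n$. To apply this template I need four ingredients: (a) the combined penalty $r(\Theta)=\|\Theta\|_1+\rho_2\|\Theta\|_L$ is convex and admits a decomposition compatible with the support of $\Theta_0$; (b) $\rho_n$ dominates the dual seminorm $r^*$ of the score $\nabla(-\tilde\ell_n)(\Theta_0)$; (c) the loss is restricted-strongly-convex around $\Theta_0$ over the error cone; (d) $\Psi(\mathcal{M})$ is controlled in terms of $s$, $\rho_2$, and $\|L\|_2$.

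For (a), I would first prove, via Lemma~\ref{lemma:const1}, that $r$ is a seminorm (only a seminorm because $L$ has a nontrivial null space spanned by constant-across-subpopulations vectors) and is decomposable in the sense that for the support $S^{(k)}$ of $\Theta_0^{(k)}$ we have $r(\Delta_S+\Delta_{S^c}) = r(\Delta_S)+r(\Delta_{S^c})$. Using the spectral decomposition $L = U_L\Lambda_L U_L^T$, I would then characterize, via Lemma~\ref{lemma:sbdiff}, the subdifferential of $r$ at $\Theta_0$, splitting it into the usual $\ell_1$ sign piece and a Laplacian piece whose elements factor through $L^{1/2}$; this yields an explicit representation of the dual seminorm $r^*$ that I can actually control. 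The cone-selection step from Negahban et al. then gives $r(\Delta_{S^c})\leq 3\,r(\Delta_S)$ whenever $\rho_n\geq 2\,r^*(\nabla(-\tilde\ell_n)(\Theta_0))$.

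For (b), writing $\nabla(-\tilde\ell_n)(\Theta_0)^{(k)} = (n_k/n)(\Psi_n^{(k)}-\Psi_0^{(k)})$, the dual norm is controlled by the elementwise $\ell_\infty$ deviation of the sample correlation matrices. Under Condition~\ref{cond:exptail}, standard sub-Gaussian Hoeffding-type bounds on empirical covariances, combined with a Taylor expansion that converts covariance deviations into correlation deviations (the boundedness of variances in Condition~\ref{cond:eigen} keeps denominators under control), yield $\|\Psi_n^{(k)}-\Psi_0^{(k)}\|_\infty \lesssim C_1(1+4c_1^2)\max\sigma_{ii}^{(k)}\sqrt{\log p/n}$, matching the stated $\rho_n$. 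Under Condition~\ref{cond:polytail}, Lemma~\ref{lemma:const4} supplies the analogous bound through a truncation argument: splitting $X_i^{(k)}X_j^{(k)}$ at threshold $\sqrt{\gamma n/\log p}$, Bernstein's inequality applied to the truncated part produces the sub-exponential term in $\delta_n$, while the $4(c_2+c_3+1)$-th moment condition controls the truncated tail expectations and the polynomial probability remainder $\nu_n$. Independently, since $n_k$ is binomial, Hoeffding gives $n_k\geq(\pi_k-\gamma)n$ simultaneously for all $k$ with probability $\geq 1-2K\exp(-2n(\min_k\pi_k-\gamma)^2)$, which explains the $\gamma$-dependent factor in both the hypothesis and the probability statement.

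For (c) and (d), a second-order Taylor expansion of $-\log\det\Theta^{(k)}+\mathrm{tr}(\Psi_n^{(k)}\Theta^{(k)})$ around $\Theta_0^{(k)}$ has Hessian $(\Theta^*)^{-1}\otimes(\Theta^*)^{-1}$ for an intermediate $\Theta^*$, whose minimum eigenvalue is bounded below by a constant multiple of $1/\lambda_\Theta^2$ provided we stay in a spectral-norm ball; the sample-size hypothesis is exactly what is needed to trap the iterate inside this ball (as in \citet{MR2417391,Rothman}), giving $\kappa\asymp 1/\lambda_\Theta^2$. The compatibility constant is bounded by $\Psi(\mathcal{M})\leq\sqrt{s}+\rho_2\|L\|_2^{1/2}\sqrt{s}\leq\sqrt{s}(1+\rho_2\|L\|_2^{1/2})$, since on $S$ we have $\|u\|_1\leq\sqrt{s}\|u\|_F$ and $\|u\|_L\leq\|L\|_2^{1/2}\|u\|_F$. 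Plugging in $\kappa$, $\Psi(\mathcal{M})$, and the stated $\rho_n$ gives exactly the two claimed rates. The main obstacle, and where the analysis genuinely departs from the existing literature, is item (a): because $L$ is only positive semidefinite, $r$ is a seminorm rather than a norm, so neither decomposability nor the dual-norm machinery of \citet{Negahban} can be quoted off the shelf; both must be re-proved from scratch using the spectral factorization of $L$. A secondary difficulty is the polynomial-tail case, where the truncation level must be tuned to balance the Bernstein exponent against the moment-tail contribution, producing the composite $(\delta_n,\nu_n)$ rather than a clean single exponential bound.
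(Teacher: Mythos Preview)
Your proposal is correct and follows the paper's overall architecture: adapt the Negahban et al.\ cone argument to the seminorm $r$ (the paper's Lemma~\ref{lemma:NegLemma4} and Lemma~\ref{lemma:const1}(iii)), establish local restricted strong convexity via the Rothman-type bound (Lemma~\ref{lemma:const1}(ii)), bound the subspace compatibility constant by $(s+1)^{1/2}(1+\rho_2\lVert L\rVert_2^{1/2})$ (Lemma~\ref{lemma:const1}(v)), control $\lVert\Psi_n^{(k)}-\Psi_0^{(k)}\rVert_\infty$ via Lemma~\ref{lemma:const4}, and close the self-consistency loop $\epsilon_c\leq c$ using the sample-size hypothesis.

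The one place you take a longer route than the paper is step~(b). You propose to characterize the dual seminorm $r^*$ through the subdifferential of $r$ and the spectral factorization of $L$ (Lemma~\ref{lemma:sbdiff}). The paper bypasses this entirely: since $r(\Delta)\geq\lVert\Delta\rVert_1$ on off-diagonal entries and the score $\nabla\tilde\ell_n(\Theta_0)$ has zero diagonal, Lemma~\ref{lemma:const1}(iv) obtains
\[
|\langle\nabla\tilde\ell_n(\Theta_0),\Delta\rangle|\;\leq\; r(\Delta)\,\max_k\lVert\Psi_n^{(k)}-\Psi_0^{(k)}\rVert_\infty
\]
by a one-line H\"older argument, with no spectral decomposition of $L$ required. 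In the paper, Lemma~\ref{lemma:sbdiff} is invoked only in the model-selection proof (Theorem~\ref{thm:modelconst}), not in Lemma~\ref{lemma:l1const}. Your route would work, but the Laplacian piece of $r$ plays no role in bounding the score---it only enters through the compatibility constant---so the subdifferential machinery is unnecessary here.
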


Our proofs adopt several tools from \citet{Negahban}. Note however that our penalty does not penalize the diagonal elements, and is hence a seminorm; thus, their results do not apply to our case.
We first introduce several notations.
To treat multiple precision matrices in a unified way, our parameter space is defined to be the set $\tilde{\mathbb{R}}^{(pK)\times (pK)}$ of $(pK)\times (pK)$ symmetric block diagonal matrices, where the $k$th diagonal block is a $p\times p$ matrix corresponding to the precision matrix of subpopulation $k$.
We write $A \in \tilde{\mathbb{R}}^{(pK)\times (pK)}$ for a $K$-tuple $(A^{(k)})_{k=1}^K$ of diagonal blocks $A^{(k)}\in\mathbb{R}^{p\times p}$.
Note that for $A,B\in\tilde{\mathbb{R}}^{(pK)\times (pK)}$,  
$\langle A,B\rangle_{pK} =\sum_{k=1}^K \langle A^{(k)},B^{(k)}\rangle_p$ where $\langle \cdot,\cdot\rangle_p$ is the trace inner product on $\mathbb{R}^{p\times p}$.
In this parameter space, we evaluate the following map from $\tilde{\mathbb{R}}^{(pK)\times(pK)} $ to $\mathbb{R}$ given by
\begin{equation*}
f(\Delta) = -\tilde\ell_n(\Theta_0+\Delta) + \tilde\ell_n(\Theta_0) + \rho_n \{r(\Theta_0 + \Delta) -r(\Theta_0)\},
\end{equation*}
where $r:\tilde{\mathbb{R}}^{(pK)\times (pK)}\mapsto \mathbb{R}$ is given by $r(\Theta) = \lVert \Theta\rVert_1 +\rho_2\lVert \Theta\rVert_L$.
This map provides information on the behavior of our criterion function in the neighborhood of $\Theta_0$. A similar map with a different penalty was studied in \citet{MR2417391}.
A key observation is that $f(0) = 0$ and $f(\hat{\Delta}_n) \leq 0$ where $\hat{\Delta}_n = \hat{\Theta}_{\rho_n}-\Theta_0$.

The following lemma provides a non-asymptotic bound on the Frobenius norm of $\Delta$ (see Lemma 4 in \citet{NegahbanSupp} for a similar lemma in a different context).
Let $S= \cup_{k=1}^K S^{(k)}$ be the union of the supports of $\Omega_{0}^{(k)}$.
Define a model subspace $\mathcal{M} = \{\Omega\in\tilde{\mathbb{R}}^{(pK)\times (pK)}:\omega_{ij}^{(k)}=0,(i,j)\notin S,k=1,\ldots,K\}$ and its orthocomplement
$\mathcal{M}^\perp = \{\Omega\in\tilde{\mathbb{R}}^{(pK)\times (pK)}: \omega_{ij}^{(k)}=0,(i,j)\in S, k=1,\ldots K\}$ under the trace inner product in $\tilde{\mathbb{R}}^{(pK)\times (pK)}$.
For $A=(a_{ij})_{i,j=1}^{pK}\in\tilde{\mathbb{R}}^{(pK)\times (pK)}$, we write $A=A_{\mathcal{M}}+A_{\mathcal{M}^\perp}$ where $A_{\mathcal{M}}$ and $A_{\mathcal{M}^\perp}$ are the projection of $A$ into $\mathcal{M}$ and $\mathcal{M}^\perp$, in the Frobenius norm, respectively. In other words, the $(i,j)$-element of $A_{\mathcal{M}}$ is $a_{ij}$ if $(i,j)\in S$ and zero otherwise, and the $(i,j)$-element of $A_{\mathcal{M}^\perp}$ is $a_{ij}$ if $(i,j)\notin S$ and zero otherwise.
Note that $\Theta_0\in \mathcal{M}$.
Define the set $\mathcal{C} = \{\Delta\in\tilde{\mathbb{R}}^{(pK)\times (pK)}: r(\Delta_{\mathcal{M}^\perp})\leq 3 r(\Delta_{\mathcal{M}})\}$.

\begin{lemma}
\label{lemma:NegLemma4}
Let $\epsilon >0$ be arbitrary.
Suppose $\rho_n \geq 2\max_{1\leq k\leq K}\lVert \hat{\Psi}_n^{(k)} -\Psi_0^{(k)}\rVert_{\infty}$.
If $f(\Delta)>0$ for all elements $\Delta\in \mathcal{C}\cap \{\Delta\in\tilde{\mathbb{R}}^{(pK)\times (pK)}:\lVert \Delta \rVert_F = \epsilon\}$ then $\lVert \hat{\Delta}_n\rVert_F\leq \epsilon$.
\end{lemma}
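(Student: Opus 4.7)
The plan is to follow the standard basic-inequality plus convexity-contradiction template of \citet{Negahban}, adapted to the seminorm $r(\cdot)=\lVert\cdot\rVert_1+\rho_2\lVert\cdot\rVert_L$. The argument splits into two steps: (a) show that the error $\hat\Delta_n$ lies in the restricted cone $\mathcal{C}$, so that the pointwise positivity hypothesis on $\mathcal{C}\cap\{\lVert\Delta\rVert_F=\epsilon\}$ can be invoked; and (b) use convexity of $f$ along the segment $[0,\hat\Delta_n]$ to rule out $\lVert\hat\Delta_n\rVert_F>\epsilon$.

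For step (a), I would start from $f(\hat\Delta_n)\le 0=f(0)$, which rearranges to
$$\rho_n\{r(\Theta_0+\hat\Delta_n)-r(\Theta_0)\}\le \tilde\ell_n(\Theta_0+\hat\Delta_n)-\tilde\ell_n(\Theta_0).$$
Convexity of $-\tilde\ell_n$ on the positive-definite cone gives $\tilde\ell_n(\Theta_0+\hat\Delta_n)-\tilde\ell_n(\Theta_0)\le \langle\nabla\tilde\ell_n(\Theta_0),\hat\Delta_n\rangle_{pK}$, and since $\nabla\tilde\ell_n(\Theta_0)=n^{-1}(n_k(\Psi_0^{(k)}-\Psi_n^{(k)}))_{k=1}^K$, a H\"older-type bound yields
$$|\langle\nabla\tilde\ell_n(\Theta_0),\hat\Delta_n\rangle_{pK}|\le \max_k\lVert\Psi_n^{(k)}-\Psi_0^{(k)}\rVert_\infty\cdot\lVert\hat\Delta_n\rVert_1\le \tfrac{\rho_n}{2}\,r(\hat\Delta_n),$$
using $\lVert\cdot\rVert_1\le r(\cdot)$ and the assumption on $\rho_n$. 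Decomposability of $r$ on $(\mathcal{M},\mathcal{M}^\perp)$ (from Lemma~\ref{lemma:const1}), combined with $\Theta_0\in\mathcal{M}$ and the triangle inequality, gives $r(\Theta_0+\hat\Delta_n)\ge r(\Theta_0)+r(\hat\Delta_{n,\mathcal{M}^\perp})-r(\hat\Delta_{n,\mathcal{M}})$. Substituting yields $r(\hat\Delta_{n,\mathcal{M}^\perp})\le 3\,r(\hat\Delta_{n,\mathcal{M}})$, i.e., $\hat\Delta_n\in\mathcal{C}$.

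For step (b), suppose for contradiction that $\lVert\hat\Delta_n\rVert_F>\epsilon$, set $t=\epsilon/\lVert\hat\Delta_n\rVert_F\in(0,1)$, and let $\tilde\Delta=t\hat\Delta_n$, so that $\lVert\tilde\Delta\rVert_F=\epsilon$. Since $r$ is positively homogeneous, $\mathcal{C}$ is a cone closed under positive scaling, so $\tilde\Delta\in\mathcal{C}\cap\{\lVert\Delta\rVert_F=\epsilon\}$. Convexity of $f$ (inherited from convexity of $-\tilde\ell_n$ and the seminorm $r$) then gives
$$f(\tilde\Delta)=f(t\hat\Delta_n+(1-t)\cdot 0)\le tf(\hat\Delta_n)+(1-t)f(0)\le 0,$$
contradicting the hypothesis that $f>0$ on $\mathcal{C}\cap\{\lVert\Delta\rVert_F=\epsilon\}$.

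The main obstacle I expect is purely in step (a), specifically in handling the fact that $r$ is only a seminorm rather than a norm: the Laplacian part of the penalty vanishes on diagonal entries and on directions in the null space of $L$, so the classical Negahban et al. decomposability argument does not apply off the shelf and must be verified for our particular choice of $(\mathcal{M},\mathcal{M}^\perp)$ tied to the union of supports of the $\Omega_0^{(k)}$. The authors have already established the needed seminorm, decomposability, and subdifferential structure in Lemmas~\ref{lemma:const1} and~\ref{lemma:sbdiff}; once these are taken as inputs, the remaining calculations are essentially bookkeeping, and the convexity contradiction in step (b) is routine.
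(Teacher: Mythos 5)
Your proposal is correct and follows essentially the same route as the paper: the cone-membership step via the basic inequality, the gradient bound of Lemma~\ref{lemma:const1}(iv) with the choice of $\rho_n$, and the decomposability bound of Lemma~\ref{lemma:const1}(iii) are exactly the paper's argument, and your explicit convexity-contradiction in step (b) is precisely the step the paper delegates to Lemma~4 of \citet{NegahbanSupp}. No substantive differences.
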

\begin{proof}
We first show that $\hat{\Delta}_n \in \mathcal{C}$.
We have by the convexity of $-\tilde{\ell}_n(\Theta)$ that
\begin{equation*}
-\tilde{\ell}_n(\Theta_0+\hat{\Delta}_n) +\tilde{\ell}_n(\Theta_0)
\geq -|\langle -\nabla \tilde{\ell}_n(\Theta_0) ,\hat{\Delta}_n\rangle |.
\end{equation*}
It follows from Lemma \ref{lemma:const1}(iv) with our choice $\rho_n$ that the right hand side of the inequality is further bounded below by $-2^{-1}\rho_n\left(r(\hat{\Delta}_{n,\mathcal{M}})+r(\hat{\Delta}_{n,\mathcal{M}^\perp})\right)$.
Applying Lemma \ref{lemma:const1}(iii), we obtain
\begin{eqnarray*}
0&\geq& f(\hat{\Delta}_n)=-\tilde{\ell}_n(\Theta_0+\hat{\Delta}_n) +\tilde{\ell}_n(\Theta_0)  +r(\Theta_0+\hat{\Delta}_n) -r(\Theta_0)\\
&\geq& \frac{\rho_n}{2}r(\hat{\Delta}_{n,\mathcal{M}^\perp}) -\frac{3\rho_n}{2}r(\hat{\Delta}_{n,\mathcal{M}}),
\end{eqnarray*}
or $r(\hat{\Delta}_{n,\mathcal{M}^\perp})\leq 3r(\hat{\Delta}_{n,\mathcal{M}})$. This verifies $\hat{\Delta}_n \in \mathcal{C}$.
Note that $f$, as a function of $\Delta$ is sum of two convex functions $\ell_n$ and $r$, and is hence convex. 
Thus, the rest of the proof follows exactly as Lemma 4 in \citet{NegahbanSupp}.
\end{proof}

\begin{lemma}
\label{lemma:const1}
Let $\Delta \in \tilde{\mathbb{R}}^{(pK)\times (pK)}$.

\noindent (i) The gradient of $\tilde{\ell}_n(\Theta_0)$ is a block diagonal matrix given by
\begin{eqnarray}
\nabla \tilde{\ell}_n (\Theta_0)
= n^{-1}\diag\{n_1( \Psi_0^{(1)}-\hat{\Psi}_n^{(1)}),\ldots,n_K(\Psi_0^{(K)}-\hat{\Psi}_n^{(K)} ) \}.
\end{eqnarray}

\noindent (ii) Let $c>0$ be a constant. For $\lVert \Delta \rVert_F \leq c$  and $n_k/n\geq \gamma>0$ for all $k$ and $n$,
\begin{eqnarray}
\label{eqn:loglikderiv}
 -\tilde{\ell}_n (\Theta_0+\Delta) + \tilde{\ell}_n (\Theta_0) +\langle \nabla \tilde{\ell}_n(\Theta_0),\Delta\rangle
\geq \frac{\gamma}{2\left\{
\lambda_{\Theta} 
+  c\right\}^2}\lVert \Delta\rVert_F^2
\equiv \kappa_{\ell_n,c}\lVert \Delta\rVert_F^2.
\end{eqnarray}

\noindent (iii) The map $r$ is a seminorm, convex, and decomposable with respect to $(\mathcal{M},\mathcal{M}^\perp)$ in the sense that $r(\Theta_1+\Theta_2) = r(\Theta_1) +r(\Theta_2)$ for every $\Theta_1\in\mathcal{M}$ and $\Theta_2\in\mathcal{M}^\perp$.
Moreover,
\begin{equation*}
r(\Theta_0+\Delta) -r(\Theta_0) \geq r(\Delta_{\mathcal{M}^\perp}) -r(\Delta_{\mathcal{M}}).
\end{equation*}

\noindent (iv) For $\Delta\in\tilde{\mathbb{R}}^{(pK)\times (pK)}$,
\begin{equation}
\label{eqn:uplambda}
|\langle\nabla \tilde{\ell}_n(\Theta_0),\Delta\rangle|
\leq r(\Delta) \max_{1\leq k \leq K}\lVert \hat{\Psi}_n^{(k)}-\Psi_0^{(k)}\rVert_\infty.
\end{equation}

\noindent (v) For $\Theta\in\tilde{\mathbb{R}}^{(pK)\times (pK)}$,
\begin{equation*}
r(\Theta_{\mathcal{M}}) \leq (s+1)^{1/2}\left(1+\rho_2\lVert L\rVert_2^{1/2}\right)\lVert \Theta_{\mathcal{M}}\rVert_F.
\end{equation*}
\end{lemma}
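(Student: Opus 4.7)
The five parts can be proved in sequence, each relying on direct computation with the Gaussian correlation log-likelihood and on standard properties of the penalty $r(\cdot)=\|\cdot\|_1+\rho_2\|\cdot\|_L$.

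For part~(i), I would differentiate $\tilde\ell_n$ block-by-block. Because the $K$ subpopulations enter additively and each summand depends only on its own block $\Theta^{(k)}$, the gradient is automatically block diagonal. Within each block, $\partial_{\Theta^{(k)}}\{\log\det\Theta^{(k)}-\mathrm{tr}(\Psi_n^{(k)}\Theta^{(k)})\}=(\Theta^{(k)})^{-1}-\Psi_n^{(k)}$, which at $\Theta_0^{(k)}$ equals $\Psi_0^{(k)}-\Psi_n^{(k)}$; multiplying by $n_k/n$ gives the claimed formula. For part~(ii), set $h(t)=-\tilde\ell_n(\Theta_0+t\Delta)$ and use the integral remainder $h(1)-h(0)-h'(0)=\int_0^1(1-t)h''(t)\,dt$. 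The second derivative of $-\log\det$ at $\Theta$ is $\Theta^{-1}\otimes\Theta^{-1}$, so
\begin{equation*}
h''(t)=\sum_{k=1}^K\frac{n_k}{n}\,\mathrm{vec}(\Delta^{(k)})^T\bigl\{(\Theta_0^{(k)}+t\Delta^{(k)})^{-1}\otimes(\Theta_0^{(k)}+t\Delta^{(k)})^{-1}\bigr\}\mathrm{vec}(\Delta^{(k)}).
\end{equation*}
Using $\|\Theta_0^{(k)}+t\Delta^{(k)}\|_2\le\lambda_\Theta+\|\Delta\|_F\le\lambda_\Theta+c$, the smallest eigenvalue of the inverse is at least $(\lambda_\Theta+c)^{-1}$, so each summand is bounded below by $(n_k/n)(\lambda_\Theta+c)^{-2}\|\Delta^{(k)}\|_F^2$; using $n_k/n\ge\gamma$ and integrating $\int_0^1(1-t)\,dt=1/2$ gives the factor $\gamma/\{2(\lambda_\Theta+c)^2\}$. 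The only subtlety is ensuring $\Theta_0+t\Delta$ stays positive definite on $[0,1]$, which follows from the same spectral-norm bound provided $c$ is within the allowed range.

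For part~(iii), note that $\|\cdot\|_1$ is a norm and $\|\cdot\|_L=(\cdot^T L\,\cdot)^{1/2}$ is a seminorm because $L\succeq 0$ has nontrivial null space; their nonnegative combination is thus a seminorm and is convex. Decomposability follows from coordinatewise inspection: if $\Theta_1\in\mathcal M$ and $\Theta_2\in\mathcal M^\perp$, then at every off-diagonal index $(i,j)$ exactly one of $\Theta_{1,ij}$, $\Theta_{2,ij}$ vanishes, so both $\sum_{i\ne j}|\cdot|$ and $\sum_{i\ne j}\|\cdot\|_L$ decompose as sums over $S$ and $S^c$. The inequality $r(\Theta_0+\Delta)-r(\Theta_0)\ge r(\Delta_{\mathcal M^\perp})-r(\Delta_{\mathcal M})$ then comes from the triangle inequality applied to $\Theta_0+\Delta_{\mathcal M}$ and $\Delta_{\mathcal M^\perp}$, combined with decomposability on $\Theta_0\in\mathcal M$ and $\Delta_{\mathcal M^\perp}\in\mathcal M^\perp$. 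Part~(iv) is a direct consequence of (i): write the inner product as $\sum_k(n_k/n)\langle\Psi_0^{(k)}-\Psi_n^{(k)},\Delta^{(k)}\rangle$, apply matrix H\"older $|\langle A,B\rangle|\le\|A\|_\infty\|B\|_1$ in each term, pull out $\max_k\|\Psi_0^{(k)}-\Psi_n^{(k)}\|_\infty$, and bound $\sum_k(n_k/n)\|\Delta^{(k)}\|_1\le\|\Delta\|_1\le r(\Delta)$ since $n_k/n\le 1$ and $\rho_2\|\Delta\|_L\ge 0$.

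For part~(v), apply Cauchy--Schwarz separately to the two pieces of $r(\Theta_{\mathcal M})$. Because $\Theta_{\mathcal M}$ is supported on an index set of size at most $s+1$ off the diagonal, $\|\Theta_{\mathcal M}\|_1\le\sqrt{s+1}\,\|\Theta_{\mathcal M}\|_F$. For the Laplacian piece, bound $\|\Theta_{\mathcal M,ij}\|_L\le\|L\|_2^{1/2}\|\Theta_{\mathcal M,ij}\|$ pointwise and apply Cauchy--Schwarz over $(i,j)\in S$ with $i\ne j$ to get $\|\Theta_{\mathcal M}\|_L\le\sqrt{(s+1)\|L\|_2}\,\|\Theta_{\mathcal M}\|_F$; summing gives the claimed bound. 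The main obstacle in the whole lemma is part~(ii): obtaining a strong-convexity constant that is explicit in $\lambda_\Theta$ and in the radius $c$ (so that it can feed into the spectral-norm rate in Theorem~\ref{thm:rate}) requires careful bookkeeping of the Hessian of $-\log\det$ along $\Theta_0+t\Delta$ and of the positive-definiteness of that segment. The other parts reduce to fairly standard manipulations once the seminorm/decomposability structure is in place.
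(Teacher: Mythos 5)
Your proposal is correct and follows essentially the same route as the paper: blockwise differentiation for (i), the Taylor/Hessian lower bound of Rothman et al.\ for (ii) (which the paper simply cites), the Cholesky-based triangle inequality plus coordinatewise decomposability for (iii), H\"older for (iv), and Cauchy--Schwarz over the support for (v). Two details the paper makes explicit and you should too: in (iv) the H\"older step yields only the off-diagonal $\ell_1$ norm of $\Delta$ (and hence is dominated by $r(\Delta)$, which does not penalize the diagonal) precisely because the gradient has zero diagonal --- sample and population correlation matrices both have unit diagonals; and in (ii) positive definiteness of $\Theta_0+t\Delta$ along the segment follows from convexity of the positive definite cone (the inequality being vacuous, with left-hand side $+\infty$, when $\Theta_0+\Delta$ is not positive definite), not from the upper bound on $\lVert \Delta\rVert_F$.
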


\begin{proof}
(i) The result follows by taking derivatives blockwise.

(ii) \citet{MR2417391} (page 500-502) showed that
\begin{eqnarray*}
&& -\tilde{\ell}_n(\Theta_0+\Delta) +\tilde{\ell}_n(\Theta_0) - \langle -\nabla \tilde{\ell}_n(\Theta_0),\Delta\rangle\\
&&=\sum_{k=1}^K\frac{n_k}{n}\left(-\log \mbox{det}(\Theta_0^{(k)}+\Delta^{(k)}) +\log \mbox{det}(\Theta_0^{(k)}) + \langle \Psi_0^{(k)},\Delta^{(k)}\rangle\right)\\
&&\geq \sum_{k=1}^K\frac{n_k}{n}\frac{\lVert \Delta^{(k)}\rVert_F^2}{ 2\min_{0\leq v\leq 1} \left\{\left\lVert \Theta_0^{(k)}\right\rVert_2+v\left\lVert\Delta^{(k)} \right\rVert_2\right\}^{2}}.
\end{eqnarray*}
Since $\lVert A\rVert_2 \leq \lVert A\rVert_F$, $n_k/n\geq \gamma$ and $\lVert \Delta\rVert_F \leq c$, this is further bounded below by
\begin{eqnarray*}
\sum_{k=1}^K \frac{\gamma}{2}\frac{\lVert \Delta^{(k)}\rVert_F^2}{\left\{\lVert \Theta_0^{(k)}\rVert_2 +  \left\lVert\Delta^{(k)} \right\rVert_F\right\}^2}
\geq  \kappa_{\ell_n,c}\lVert \Delta\rVert_F^2 .
\end{eqnarray*}

(iii) Because the graph Laplacian $L$ is a positive semidefinite matrix, the triangle inequality $r(\Theta_1+\Theta_2)\leq r(\Theta_1) + r(\Theta_2)$ holds. To see this let $L=\tilde{L}\tilde{L}^T$ be any Cholesky decomposition of $L$. Then
\begin{equation*}
\{(x+y)^TL(x+y)\}^{1/2}
= \lVert \tilde{L}^T(x +y)\rVert
\leq \lVert \tilde{L}^Tx\rVert + \lVert\tilde{L}y\rVert  = \{x^TLx\}^{1/2} + \{y^TLy\}^{1/2}.
\end{equation*}
It is clear that $r(c\Theta)=cr(\Theta)$ for any constant $c$. Thus, given that $r$ does not penalize the diagonal elements, it is a seminorm. 
The decomposability follows from the definition of $r$.
The convexity follows from the same argument for the triangle inequality.
Since $\Theta_0 +\Delta = \Theta_0 + \Delta_{\mathcal{M}}+ \Delta_{\mathcal{M}^\perp}$, the triangle inequality and the decomposability of $r$ yield
\begin{eqnarray*}
r(\Theta_0 +\Delta) -r(\Theta_0)
\geq r(\Theta_0+\Delta_{\mathcal{M}^\perp}) -r(\Delta_{\mathcal{M}})-r(\Theta_0)
= r(\Delta_{\mathcal{M}^\perp}) -r(\Delta_{\mathcal{M}}).
\end{eqnarray*}

(iv) We show that, for $A,B\in\tilde{\mathbb{R}}^{(pK)\times (pK)}$ with $\diag(B)=0$, $\langle A,B\rangle \leq r(A)\lVert B\rVert_\infty$. 
If $A$ is a diagonal matrix (or if $A=0$), the inequality trivially holds since $\langle A,B\rangle=0$.
If not, $r(A)\neq 0$ so that 
\begin{eqnarray*}
\frac{\langle A,B\rangle}{r(A)}
 \leq \frac{\lVert A\rVert_1\lVert B\rVert_\infty}{\lVert A \rVert_1 }
 = \lVert B\rVert_\infty.
\end{eqnarray*}
Since the diagonal elements of $\nabla \tilde{\ell}_n(\Theta_0)$ are all zero, the result follows.

(v) For $s\neq 0$, we have
\begin{eqnarray*}
\frac{r(\Theta_{\mathcal{M}})}{\lVert \Theta_{\mathcal{M}}\rVert_F}
&\leq &\sup_{\Theta \in\mathcal{M}}\frac{\sum_{k=1}^K\lVert \Theta^{(k)} \rVert_1}{\lVert \Theta-\diag(\Theta)\rVert_F} + \sup_{\Theta \in\mathcal{M}}\frac{\rho_2\sum_{i\neq j} \sqrt{\theta_{ij}^TL\theta_{ij} }}{\lVert \Theta\rVert_F}\\
&\leq& s^{1/2} + \rho_2\sup_{\Theta \in\mathcal{M}}\frac{\sum_{i\neq j} \sqrt{\lVert L\rVert_2 \lVert \theta_{ij}\rVert_F^2 }}{\lVert \Theta\rVert_F}  \\
&\leq  &s^{1/2}\left(1+\rho_2\lVert L\rVert_2^{1/2}\right).
\end{eqnarray*}
In the last inequality we used the fact that $\sqrt{\sum_{j=1}^J\sum_{i=1}^{I}a_{ij}^2} \geq J^{-1/2}\sum_{j=1}^J\sqrt{\sum_{i=1}^{I}a_{ij}^2}$, which follows by the concavity of the square root function.
For $s=0$, we trivially have $0=r(\Theta_{\mathcal{M}})\leq s^{1/2}\{1+\rho_2\lVert L\rVert_2^{1/2}\}\lVert \Theta_{\mathcal{M}}\rVert_F$.
Combining these two cases yields the desired result.
\end{proof}

Next, we obtain an upper bound for $\max_{1\leq k\leq K}\lVert \hat{\Psi}_n^{(k)}-\Psi_0^{(k)}\rVert_\infty$, which holds with high-probability assuming the tail conditions of the random vectors.
\begin{lemma}
\label{lemma:const4}
Suppose that $n_k/n\geq \gamma>0$ for all $k$ and $n$.

\noindent (i) Suppose that Condition \ref{cond:exptail} holds.
Then for $n \geq 6 \gamma^{-1}\log p$ we have
\begin{eqnarray}
\label{eqn:lambdaprob}
P\left(  \lVert \hat{\Sigma}_n-\Sigma_0\rVert_\infty\geq 2^3\sqrt{6}(1+4c_1^2)^2\gamma^{-1/2}\max_{k,i}\sigma^{(k)}_{ii}\sqrt{\frac{ \log p}{\gamma n}}\right)\leq 2K/p.
\end{eqnarray}

\noindent (ii) Suppose that Condition \ref{cond:polytail} holds with $c_2,c_3>0$ and $p\leq c_7n^{c_2}$.
Then we have for $\tau >\max_k(2^7+2^3\sqrt{1+2^{4}3^2c_4\max_{k,i}\{\sigma_{ii}^{(k)}}\}^2)/(9c_4\max_{k,i}\{\sigma_{ii}^{(k)}\}^2)$
\begin{eqnarray}\label{eq:probii}
&&P\left(  \lVert \hat{\Sigma}_n-\Sigma_0\rVert_\infty\geq \sum_{k=1}^K\delta_n^{(k)}\right)\leq  K \nu_n
\end{eqnarray}
where
\begin{eqnarray*}
&&\delta_n^{(k)} \equiv (1+2\max_{i}|\mu^{(k),i}|)(2\delta_{n,1}^{(k)}+\delta_{n,2}^{(k)})+(\delta_{n,1}^{(k)})^2+(\delta_{n,2}^{(k)})^2+2\delta_{n,3}^{(k)},
\end{eqnarray*}
with
\begin{eqnarray*}
&&\delta_{n,1}^{(k)} \equiv \max_{i,j}\E|X^{(k),i}_lX^{(k),j}_l|I(|X^{(k),i}_lX^{(k),j}_l|\geq n_k^{1/2}(\log p)^{-1/2}),\\
&&\delta_{n,2}^{(k)} \equiv \{c_4\max_{k,i}\{\sigma_{ii}^{(k)}\}^2(4+\tau)\log p/n_k\}^{1/2},\\
&&\delta_{n,3}^{(k)}\equiv \max_i \E|X^{(k),i}_l|I(|X^{(k),i}_l|\geq n_k^{1/2}(\log p)^{-1/2}).
\end{eqnarray*}

\noindent (iii) Suppose that Condition \ref{cond:eigen} holds and that $P(\lVert \hat{\Sigma}_n -\Sigma_0\rVert_\infty \geq b_n) = o(1)$ and $b_n=o(1)$ as $n\rightarrow \infty$. Then $P(\lVert \hat{\Psi}_n -\Psi_0\rVert_\infty \geq C_1 b_n) = o(1)$.
\end{lemma}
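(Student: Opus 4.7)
The plan is to prove the three parts in sequence: parts (i) and (ii) establish entry-wise concentration of the empirical covariance matrix under the two tail conditions, and part (iii) transfers such a bound to the empirical correlation matrix using the bounded-variance assumption.

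For part (i), I would fix $k$ and $(i,j)$ and decompose $\hat\sigma_{ij}^{(k)} - \sigma_{ij}^{(k)}$ into the centered empirical average of the products $(X^{(k),i}_l-\mu^{(k),i})(X^{(k),j}_l-\mu^{(k),j})$ plus a cross-term involving the sample means. Under Condition~\ref{cond:exptail} the centered coordinates $(X^{(k),j}-\mu^{(k),j})/(\sigma_{jj}^{(k)})^{1/2}$ are sub-Gaussian with parameter $c_1$, so their pairwise products are sub-exponential with $\psi_1$-norm of order $(1+4c_1^2)\max_i\sigma_{ii}^{(k)}$. A Bernstein-type tail inequality for sums of i.i.d.\ sub-exponential variables then yields a bound of the form $2\exp(-c\,n_k t^2)$ in the sub-Gaussian regime provided $t\lesssim \sqrt{\log p/n}$; the assumption $n\geq 6\gamma^{-1}\log p$ combined with $n_k/n\geq \gamma$ places us in this regime. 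Taking $t$ proportional to $\sqrt{\log p/(\gamma n)}$ and applying a union bound over the $Kp^2$ entries produces the claimed probability $2K/p$, with the explicit constant $2^3\sqrt{6}$ absorbing the Bernstein constant and the extra factor from the sample-mean cross-term.

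For part (ii), since the moment condition cannot be plugged directly into Bernstein's inequality, I would truncate. Write each coordinate as $X^{(k),i}_l = Y^{(k),i}_l + Z^{(k),i}_l$, where $Y^{(k),i}_l = X^{(k),i}_l \mathbf{1}\{|X^{(k),i}_l|\le M_n\}$ with truncation level $M_n \asymp \sqrt{n/\log p}$. The bounded part $Y$ contributes a Bernstein-type concentration term whose variance is controlled by the fourth-moment bound in Condition~\ref{cond:polytail}; this yields the term $\delta_{n,2}^{(k)}$. The bias induced by truncation of the product $X^{(k),i}_l X^{(k),j}_l$ gives $\delta_{n,1}^{(k)}$, and truncation of a single factor gives $\delta_{n,3}^{(k)}$, both of which are $o(\sqrt{\log p/n})$ by dominated convergence applied to the $4(c_2+c_3+1)$-th moment bound. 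The probability that any truncation event actually occurs over the $\sum_k n_k$ samples is controlled by Markov's inequality applied at order $4(c_2+c_3+1)$, yielding the polynomially decaying factors in $\nu_n$ together with the usual Bernstein $\exp(-c\log p)$ factor. Summing the centered, bias, and tail-event contributions and taking a union bound over entries reproduces the displayed bound~\eqref{eq:probii}. The specific lower bound on $\tau$ ensures that the Bernstein exponent exceeds $2\log p$ so that, after the $p^2$ union bound, the overall failure probability remains $o(1)$.

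For part (iii), I would use the identity
\begin{equation*}
\hat\psi_{ij} - \psi_{ij} = \frac{\hat\sigma_{ij}-\sigma_{ij}}{\sqrt{\hat\sigma_{ii}\hat\sigma_{jj}}} + \sigma_{ij}\left(\frac{1}{\sqrt{\hat\sigma_{ii}\hat\sigma_{jj}}}-\frac{1}{\sqrt{\sigma_{ii}\sigma_{jj}}}\right),
\end{equation*}
followed by a further first-order expansion of the reciprocal square-root differences using the elementary inequality $|x^{-1/2}-y^{-1/2}| \le |x-y|/(2\min(x,y)^{3/2})$. On the event $\lVert \hat\Sigma_n - \Sigma_0\rVert_\infty \le b_n$, which by assumption has probability $1-o(1)$, and since $b_n = o(1)$, Condition~\ref{cond:eigen} implies $\hat\sigma_{ii} \ge c_5/2$ for $n$ large, so each factor $\hat\sigma_{ii}^{-1/2}$ is bounded by $\sqrt{2/c_5}$ and $\sigma_{ij}$ is bounded by $c_6$. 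Collecting these bounds term by term produces a constant multiple of $b_n$; tracking the coefficients yields exactly the expression $2c_5^{-2}+c_5 + c_6^{-3/2} + 2c_5^{-5/2}c_6 + (c_5^{-4}+2c_5^{-5}c_6)^{1/2}$ whose reciprocal is $C_1$, so that $\lVert \hat\Psi_n - \Psi_0\rVert_\infty \le C_1 b_n$.

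The main obstacle will be part (ii): one must simultaneously pick the truncation level $M_n$, calibrate the exponent $4+\tau$ in the Bernstein variance proxy, and verify that the truncation bias terms $\delta_{n,1}^{(k)}, \delta_{n,3}^{(k)}$ decay at the $\sqrt{\log p/n}$ rate under only a fourth-moment assumption. Matching the exact form of $\delta_n^{(k)}$ and $\nu_n$ in the statement amounts to bookkeeping once the right truncation level is chosen, but the choice itself relies on balancing three competing error sources, and the explicit lower bound on $\tau$ reflects precisely this optimization.
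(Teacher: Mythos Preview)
Your outline is correct and matches the paper's proof essentially step for step: part~(i) is exactly the sub-Gaussian/sub-exponential Bernstein argument of Ravikumar et al.\ (which the paper simply cites), part~(ii) is truncation plus Bernstein plus Markov, and part~(iii) is the same algebraic expansion of $\hat\psi_{ij}-\psi_{ij}$ followed by elementary bounds using Condition~\ref{cond:eigen}.

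One small clarification for part~(ii): to reproduce the stated $\delta_{n,1}^{(k)}$ you must truncate the \emph{product} $X^{(k),i}_l X^{(k),j}_l$ at level $\sqrt{n_k/\log p}$ when handling $n_k^{-1}\sum_l (X_l^{(k)})^{\otimes 2}-\E(X^{(k)})^{\otimes 2}$, and separately truncate each \emph{coordinate} $X^{(k),i}_l$ at the same level when handling the sample-mean terms $(\overline X^{(k)}-\mu^{(k)})^{\otimes 2}$ and $\mu^{(k)}(\overline X^{(k)}-\mu^{(k)})^T$; the latter is what produces $\delta_{n,3}^{(k)}$. Your opening sentence ``write each coordinate as $Y+Z$'' suggests only coordinate truncation, which would give a different (though equally valid) bias term than the one in the statement.
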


\begin{proof}
(i) This was proved by \citet{MR2836766}.

(ii) Note that
\begin{eqnarray*}
\hat{\Sigma}^{(k)}_n - \Sigma^{(k)} &=& {n_k}^{-1}\sum_{l=1}^{n_k}(X^{(k)}_l)^{\otimes 2} -\E(X^{(k)})^{\otimes 2} -(\overline{X}^{(k)}-\mu^{(k)})^{\otimes 2} \\
&&- \mu^{(k)}(\overline{X}^{(k)}-\mu^{(k)})^T - (\overline{X}^{(k)}-\mu^{(k)})(\mu^{(k)})^T.
\end{eqnarray*}
We first evaluate the probability in \eqref{eq:probii} for $n_k^{-1}\sum_{l=1}^{n_k}(X^{(k)}_l)^{\otimes 2} -\E(X^{(k)})^{\otimes 2}$.
Let
\begin{eqnarray*}
&&Y^{(k),ij}_l \equiv X^{(k),i}_lX^{(k),j}_l - \E X^{(k),i}_lX^{(k),j}_l,\\
&&\bar{Y}^{(k),ij}_l\equiv X^{(k),i}_lX^{(k),j}_lI\left(|X^{(k),i}_lX^{(k),j}_l|\leq \sqrt{\frac{n_k}{\log p}}\right) - \E X^{(k),i}_lX^{(k),j}_lI\left(|X^{(k),i}_lX^{(k),j}_l|\leq \sqrt{\frac{n_k}{\log p}}\right),\\
&&\tilde{Y}^{(k),ij}_l \equiv Y^{(k),ij}_l - \bar{Y}^{(k),ij}_l.
\end{eqnarray*}
We have
\begin{eqnarray}
&&P\left(\max_{i,j}\left|\sum_{l=1}^{n_k}\tilde{Y}^{(k),ij}_l \right|\geq 2n_k\delta_{n,1}^{(k)}\right)\nonumber\\
&&\leq P\left(\max_{i,j}\left|\sum_{l=1}^{n_k}X^{(k),i}_lX^{(k),j}_lI\left(|X^{(k),i}_lX^{(k),j}_l|\geq \sqrt{\frac{n_k}{\log p}}\right) \right|\geq n_k\delta_{n,1}^{(k)}\right)\quad \mbox{(triangle inequality)}\nonumber\\
&&\leq P\left(\max_{l,i}(X^{(k),i}_l)^2\geq n_k^{1/2}(\log p)^{-1/2}\right)\quad (xy\leq \max\{x^2,y^2\})\nonumber\\
&&\leq  p n_k \frac{\E X_{0i}^{4(c_2+c_3+1)}(\log p)^{c_2+c_3+1}}{n_k^{c_2+c_3+1}} \quad \mbox{(Markov's inequality)}\nonumber\\
&&\leq   \frac{c_7c_4\max_{k,i}\{\sigma_{ii}^{(k)}\}^2(\log p)^{c_2+c_3+1}}{n_k^{c_3}} \quad (p\leq c_7n^{c_2})\nonumber\\
&&\leq   \frac{c_7c_4\max_{k,i}\{\sigma_{ii}^{(k)}\}^2(\log p)^{c_2+c_3+1}}{\gamma^{c_3}n^{c_3}}\equiv \nu_{n,1}.\label{eqn:polytail1}
\end{eqnarray}
Note that
\begin{eqnarray*}
\E(\bar{Y}^{(k),ij}_l)^2
&\leq &\E\left[X^{(k),i}_lX^{(k),j}_lI\left(|X^{(k),i}_lX^{(k),j}_l|\leq \sqrt{\frac{n_k}{\log p}}\right)\right]^2
\leq \E|X^{(k),i}_lX^{(k),j}_l|^2\\
&\leq&  2^{-1}(\E(X^{(k),i}_l)^4 + \E(X^{(k),j}_l)^4)
\leq c_4\max_{k,i}\{\sigma_{ii}^{(k)}\}^2.
\end{eqnarray*}
It follows from Bernstein's inequality that
\begin{align}
& P\left(\max_{i,j}\left|\sum_{l=1}^{n_k}\bar{Y}^{(k),ij}_l\right| \geq n_k\delta_{n,2}^{(k)} \right)\nonumber\\
& \leq 2p^2 \exp\left(-\frac{c_4\max_{k,i}\{\sigma_{ii}^{(k)}\}^2(4+\tau)\log p}{2c_4\max_{k,i}\{\sigma_{ii}^{(k)}\}^2+ 2\sqrt{c_4\max_{k,i}\{\sigma_{ii}^{(k)}\}^2(64+16\tau)}/3} \right)\equiv \nu_{n,2}\label{eqn:polytail2}. 
\end{align}
Note that $\nu_{n,2} \rightarrow 0$ as $p\rightarrow \infty$ for $\tau > (2^7+2^3\sqrt{1+2^{4}3^2c_4\max_{k,i}\{\sigma_{ii}^{(k)}}\}^2)/(9c_4\max_{k,i}\{\sigma_{ii}^{(k)}\}^2)$.
To see this note that we need to have
\begin{eqnarray*}
\frac{3c_4\max_{k,i}\{\sigma_{ii}^{(k)}\}^2(4+\tau)}{6c_4\max_{k,i}\{\sigma_{ii}^{(k)}\}^2 + 8\sqrt{c_4\max_{k,i}\{\sigma_{ii}^{(k)}\}^2(4+\tau)}}> 2.
\end{eqnarray*}
so that the power in the exponent is strictly negative.
This inequality reduces to
\begin{eqnarray*}
3c_4\max_{k,i}\{\sigma_{ii}^{(k)}\}^2\tau > 16 \sqrt{c_4\max_{k,i}\{\sigma_{ii}^{(k)}\}^2(4+\tau)}.
\end{eqnarray*}
We can solve this by changing a quadratic equation for $\tau$, since $\tau$ of our interest is positive.
Combining (\ref{eqn:polytail1}) and (\ref{eqn:polytail2}) yields
\begin{eqnarray}
P\left(\left\lVert\frac{1}{n_k}\sum_{i=1}^{n_k}(X^{(k)}_l)^{\otimes 2} -\E(X^{(k)})^{\otimes 2}\right\rVert_\infty \geq 2\delta_{n,1}^{(k)}+\delta_{n,2}^{(k)} \right)\leq \nu_{n,1}+\nu_{n,2}.\label{eqn:polytail3}
\end{eqnarray}
Let
\begin{align*}
&Z^{(k),i}_l \equiv X^{(k),i}_l - \E X_{l}^{(k),i},\\
&\bar{Z}^{(k),i}_l \equiv X^{(k),i}_lI(|X^{(k),i}_l|\leq n_k^{1/2}(\log p)^{-1/2}) - \E X^{(k),i}_lI(|X^{(k),i}_l|\leq n_k^{1/2}(\log p)^{-1/2}),\\
&\tilde{Z}^{(k),i}_l \equiv U^{(k),i}_l - \bar{Z}^{(k),i}_l.
\end{align*}
Proceeding as for $Y^{(k),ij}_l$'s, we have
\begin{eqnarray*}
P\left(\max_{i}\left|\sum_{l=1}^{n_k}\tilde{Z}^{(k),i}_l \right|\geq 2n_k\delta_{n,3}^{(k)}\right)
\leq   \frac{c_7c_4\max_{k,i}\{\sigma_{ii}^{(k)}\}^2(\log p)^{2(c_2+c_3+1)}}{\gamma^{c_2+c_3+1}n^{c_2+c_3+1}}\equiv \nu_{n,3},
\end{eqnarray*}
and
\begin{eqnarray*}
P\left(\max_{i}\left|\sum_{k=1}^n\bar{Z}^{(k),i}_l\right| \geq n_k\delta_{n,2}^{(k)} \right)
\leq  \nu_{n,2}.
\end{eqnarray*}
Thus, we have
\begin{eqnarray}
&&P(\lVert (\overline{X}^{(k)}-\mu^{(k)})^{\otimes 2}\rVert_\infty  \geq (\delta_{n,2}^{(k)})^2 + (2\delta_{n,3}^{(k)})^2)
\leq P\left(\max_{i}|\overline{X}^{(k),i}-\mu^{(k),i}| \geq \sqrt{(\delta_{n,1}^{(k)})^2 + (\delta_{n,2}^{(k)})^2}\right)\nonumber \\
&&\leq P\left(\max_{i}\left|\sum_{k=1}^n\bar{Z}^{(k),i}_l\right| \geq n_k\delta_{n,2}^{(k)} \right) + P\left(\max_{i}\left|\sum_{l=1}^{n_k}\tilde{Z}^{(k),i}_l \right|\geq 2n_k\delta_{n,3}^{(k)}\right)\nonumber \\
&&\leq \nu_{n,2}+\nu_{n,3},\label{eqn:polytail4}
\end{eqnarray}
and
\begin{eqnarray}
&&P\left(\lVert (\overline{X}^{(k)}-\mu^{(k)})(\mu^{(k)})^T \rVert_\infty \geq \max_{i}|\mu^{(k),i}|(2\delta_{n,1}^{(k)}+\delta_{n,2}^{(k)})\right)\nonumber \\
&&\leq P\left(\max_{i}|\overline{X}^{(k),i}-\mu^{(k),i}| \geq 2\delta_{n,1}^{(k)}+\delta_{n,2}^{(k)}\right)
\leq \nu_{n,1}+\nu_{n,2}. \label{eqn:polytail5}
\end{eqnarray}
Combining (\ref{eqn:polytail3})-(\ref{eqn:polytail5}) yields
\begin{eqnarray*}
&&P\left(\lVert \hat{\Sigma}^{(k)}_n-\Sigma^{(k)}\rVert_\infty \geq (1+2\max_{i}|\mu^{(k),i}|)(2\delta_{n,1}^{(k)}+\delta_{n,2}^{(k)})+(\delta_{n,2}^{(k)})^2+(2\delta_{n,3}^{(k)})^2 \right)\\
&&\leq 3\nu_{n,1}+ 4\nu_{n,2} + \nu_{n,3} = \nu_n.
\end{eqnarray*}
Note that $\delta_{n,1}^{(k)},\delta_{n,2}^{(k)},\delta_{n,3}^{(k)},\nu_{n,1},\nu_{n,2},\nu_{n,3}\rightarrow 0$ as $n,p\rightarrow \infty$ if $\log p/n\rightarrow 0$.
Note also that $\delta_{n,1}^{(k)}$, $\delta_{n,2}^{(k)}$ and $(\delta_{n,3}^{(k)})^2$ are $O\left(\sqrt{\log p/n}\right)$ on the set where $n_k/n\geq \gamma$.
For example, we have by Jensen's inequality that
\begin{eqnarray*}
\sqrt{\frac{n}{\log p}}(\delta_{n,3}^{(k)})^2 &=& \sqrt{\frac{n}{\log p}}\max_{i}  \{\E|X^{(k),i}|I\{|X^{(k),i}|\geq n_k^{1/2}(\log p)^{-1/2}\}\}^2\\
&\leq &\max_{i}  \E\frac{n}{n_k}\sqrt{\frac{n_k}{\log p}}|X^{(k),i}|^2I\{|X^{(k),i}|\geq n_k^{1/2}(\log p)^{-1/2}\} \\
&\leq & \gamma^{-1}\max_{i}  \E|X^{(k),i}|^3I\{|X^{(k),i}|\geq n_k^{1/2}(\log p)^{-1/2}\}\\
&\leq &c_4\gamma^{-1}\max_{i}\{\sigma^{(k)}_{ii}\}^2.
\end{eqnarray*}

(iii) Given that $|\sigma_{0,ij}^{(k)}|\leq \sqrt{\sigma_{0,ii}^{(k)}\sigma_{0,jj}^{(k)}}$,
\begin{eqnarray*}
&&|\psi_{n,ij}^{(k)} -\psi_{0,ij}^{(k)}|
= \left|\frac{\hat{\sigma}_{n,ij}^{(k)}}{\sqrt{\hat{\sigma}^{(k)}_{n,ii}\hat{\sigma}^{(k)}_{n,jj}}} - \frac{\sigma_{0,ij}^{(k)}}{\sqrt{\sigma^{(k)}_{0,ii}\sigma^{(k)}_{0,jj}}} \right| \\
&&=\frac{1}{\sqrt{\hat{\sigma}^{(k)}_{n,ii}\hat{\sigma}^{(k)}_{n,jj}\sigma^{(k)}_{0,ii}\sigma^{(k)}_{0,jj}}}
\left|\sqrt{\sigma^{(k)}_{0,ii}\sigma^{(k)}_{0,jj}}(\hat{\sigma}_{n,ij}^{(k)} - \sigma_{0,ij}^{(k)}) +\sigma_{0,ij}^{(k)}\left( \sqrt{\sigma^{(k)}_{0,ii}\sigma^{(k)}_{0,jj}} - \sqrt{\hat{\sigma}^{(k)}_{n,ii}\hat{\sigma}^{(k)}_{n,jj}}\right)\right|\\
&&\leq \frac{\sqrt{\sigma^{(k)}_{0,ii}\sigma^{(k)}_{0,jj}}}{\sqrt{\hat{\sigma}^{(k)}_{n,ii}\hat{\sigma}^{(k)}_{n,jj}\sigma^{(k)}_{0,ii}\sigma^{(k)}_{0,jj}}}
\left\{ \left|\hat{\sigma}_{n,ij}^{(k)} - \sigma_{0,ij}^{(k)}\right| + \left| \sqrt{\sigma^{(k)}_{0,ii}\sigma^{(k)}_{0,jj}} - \sqrt{\hat{\sigma}^{(k)}_{n,ii}\hat{\sigma}^{(k)}_{n,jj}}\right|\right\},
\end{eqnarray*}
wherein
\begin{eqnarray*}
&&\sqrt{\sigma^{(k)}_{0,ii}\sigma^{(k)}_{0,jj}} - \sqrt{\hat{\sigma}^{(k)}_{n,ii}\hat{\sigma}^{(k)}_{n,jj}}\\
&& = \frac{\sqrt{\sigma^{(k)}_{0,jj}}}{\sqrt{\sigma^{(k)}_{0,ii}} + \sqrt{\hat{\sigma}^{(k)}_{n,ii}}}(\sigma^{(k)}_{0,ii} - \hat{\sigma}^{(k)}_{n,ii}) 
+\frac{\sqrt{\hat{\sigma}^{(k)}_{n,ii}}}{\sqrt{\sigma^{(k)}_{0,jj}}+\sqrt{\hat{\sigma}^{(k)}_{n,jj}}}(\sigma^{(k)}_{0,jj} - \hat{\sigma}^{(k)}_{n,jj}).
\end{eqnarray*}
Since $b_n \rightarrow 0$, $b_n \leq c_5/2$ for $n$ sufficiently large by Condition \ref{cond:eigen}.
On the event $\lVert \hat{\Sigma}_n-\Sigma_0\rVert_\infty \leq b_n$ with $n$ large, $0< c_5/2\leq \sigma_{0,ii}^{(k)} -c_5/2 \leq \hat{\sigma}_{n,ii}^{(k)} \leq \sigma_{0,ii}^{(k)} +c_5/2\leq c_6+c_5/2$.
Thus, 
\begin{eqnarray*}
\frac{\sqrt{\sigma^{(k)}_{0,ii}\sigma^{(k)}_{0,jj}}}{\sqrt{\hat{\sigma}^{(k)}_{n,ii}\hat{\sigma}^{(k)}_{n,jj}\sigma^{(k)}_{0,ii}\sigma^{(k)}_{0,jj}}}
&\leq& \frac{2(c_5+2c_6)}{c_5^2} \\
\frac{\sqrt{\sigma^{(k)}_{0,jj}}}{\sqrt{\sigma^{(k)}_{0,ii}} + \sqrt{\hat{\sigma}^{(k)}_{n,ii}}}
&\leq& \frac{\sqrt{c_6}}{2\sqrt{c_5}} \\ 
\frac{\sqrt{\hat{\sigma}^{(k)}_{n,ii}}}{\sqrt{\sigma^{(k)}_{0,jj}}+\sqrt{\hat{\sigma}^{(k)}_{n,jj}}}
&\leq&  \frac{\sqrt{c_5+2c_6}}{2\sqrt{c_5}}.
\end{eqnarray*}
It follows that
\begin{eqnarray*}
|\psi_{n,ij}^{(k)} -\psi_{0,ij}^{(k)}|
\leq \left\{2c_5^{-2}+c_5 + c_6^{-3/2} +2c_5^{-5/2}c_6 + (c_5^{-4}+2c_5^{-5}c_6)^{1/2}\right\} \max_{k,i,j}|\hat{\sigma}_{n,ij}^{(k)}-\sigma_{0,ij}^{(k)}|.
\end{eqnarray*}
Thus we have
\begin{eqnarray*}
&&P\left(\lVert \hat{\Psi}_n - \Psi_0\rVert_\infty \geq C_1 b_n \right)\\
&&\leq  P\left(\lVert \hat{\Psi}_n - \Psi_0\rVert_\infty \geq C_1 b_n , \lVert \hat{\Sigma}_n - \Sigma_0\rVert_\infty < b_n\right)
+ P\left(\lVert \hat{\Sigma}_n - \Sigma_0\rVert_\infty \geq b_n \right)\\
&&\leq 2P\left(\lVert \hat{\Sigma}_n - \Sigma_0\rVert_\infty \geq b_n \right)\rightarrow 0.
\end{eqnarray*}
\end{proof}

So far we have assumed $n_k/n\geq \gamma$ in lemmas.
We evaluate the probability of this event noting that $n_k \sim \mbox{Binom}(n,\pi_k)$.
\begin{lemma}
\label{lemma:const6}
Let $\epsilon>0$ such that $\gamma \equiv \min_k\pi_k-\epsilon>0$.
Then
\begin{eqnarray}
P\left(\min_kn_k/n\leq  \min_k\pi_k-\epsilon\right)
\leq  2K \exp(-2n\epsilon^2).
\end{eqnarray}
\end{lemma}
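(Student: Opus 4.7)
The plan is to use a standard concentration inequality for binomials combined with a union bound. First I would note that each $n_k$ can be written as $n_k = \sum_{i=1}^n \mathcal{I}_i^{(k)}$, where the indicators $\mathcal{I}_i^{(k)} \in \{0,1\}$ are i.i.d.\ Bernoulli random variables with success probability $\pi_k$ (since each observation is independently drawn from the $k$th subpopulation with probability $\pi_k$). Hence $n_k \sim \mathrm{Binom}(n,\pi_k)$, as already noted in the paper.

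Next, I would reduce the joint event to a union of single-coordinate events. If $\min_k n_k/n \leq \min_k \pi_k - \epsilon$, then there must exist some $k^\star$ with $n_{k^\star}/n \leq \min_k \pi_k - \epsilon \leq \pi_{k^\star} - \epsilon$. A union bound therefore gives
\begin{equation*}
P\!\left(\min_k n_k/n \leq \min_k \pi_k - \epsilon\right)
\leq \sum_{k=1}^K P\!\left(n_k/n - \pi_k \leq -\epsilon\right).
\end{equation*}

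Finally, I would apply Hoeffding's inequality to each summand. Since $\mathcal{I}_i^{(k)} \in [0,1]$ with $\mathbb{E}[\mathcal{I}_i^{(k)}] = \pi_k$, the one-sided Hoeffding bound yields $P(n_k/n - \pi_k \leq -\epsilon) \leq \exp(-2n\epsilon^2)$, and using the two-sided version (which is what the stated bound reflects) gives $2\exp(-2n\epsilon^2)$. Summing over $k$ produces the claimed bound $2K\exp(-2n\epsilon^2)$.

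There is essentially no serious obstacle here: the argument is just Hoeffding plus a union bound, and the factor of $2$ in the statement is a mild slack that comes from invoking the two-sided form of Hoeffding's inequality rather than exploiting only the lower tail. The only thing worth being careful about is recognizing that the multinomial subpopulation indicators, marginalized to a single coordinate, are Bernoulli, so that the standard bounded-increment concentration result applies directly.
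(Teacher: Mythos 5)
Your proposal is correct and follows essentially the same route as the paper's proof: a union bound over $k$ after noting $n_{k}/n \le \min_k \pi_k - \epsilon \le \pi_k - \epsilon$, followed by Hoeffding's inequality for the binomial counts, with the factor of $2$ arising from the two-sided form. No gaps.
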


\begin{proof}
We have by Hoeffding's inequality that
\begin{eqnarray*}
&&P\left(\min_k n_k/n\leq  \min_k\pi_k-\epsilon\right)
\leq P\left(\exists k, n_k/n  \leq \min_k\pi_k-\epsilon\right)\\
&&\leq P\left( \exists k, n_k/n  \leq \pi_k-\epsilon\right)
\leq P\left(\exists k,\left|n_k/n - \pi_k\right|\geq \epsilon\right)\\
&&\leq \sum_{k=1}^K P\left(\left|n_k/n - \pi_k\right|\geq \epsilon\right)
\leq  2K \exp(-2n\epsilon^2).
\end{eqnarray*}
\end{proof}

\begin{proof}[Proof of Lemma~\ref{lemma:l1const}]
We apply Lemma \ref{lemma:NegLemma4} to obtain the non-asymptotic error bounds.

We first compute a lower bound for $f(\Delta)$.
Suppose $\epsilon \leq c$.
For $\Delta\in \mathcal{C}\cap \{\Delta\in\tilde{\mathbb{R}}^{(pK)\times (pK)}:\lVert \Delta \rVert_F = \epsilon\}$, we have by Lemma~\ref{lemma:const1}(ii) and (iii) that
\begin{eqnarray*}
f(\Delta)
&\geq & -\langle \tilde{\ell}_n(\Theta_0),\Delta\rangle + \kappa_{\ell_n,c}\lVert \Delta\rVert_F^2 + \rho_n \{r(\Delta_{\mathcal{M}^\perp}) -r(\Delta_{\mathcal{M}})\}.
\end{eqnarray*}
The assumption on $\rho_n$ and Lemma~\ref{lemma:const1}(iii) and (iv) then yield
\begin{equation*}
|\langle \tilde{\ell}_n(\Theta_0),\Delta\rangle| \leq \frac{\rho_n}{2}\{r(\Delta_{\mathcal{M}}) +r(\Delta_{\mathcal{M}^\perp}) \}.
\end{equation*}
From this inequality and Lemma~\ref{lemma:const1}(v) we have
\begin{eqnarray*}
f(\Delta)
\geq \kappa_{\ell_n,c}\lVert \Delta\rVert_F^2 -\frac{3\rho_n}{2} r(\Delta_{\mathcal{M}})
\geq \kappa_{\ell_n,c}\lVert \Delta\rVert_F^2 -\frac{3\rho_n}{2} (s+1)^{1/2}\left(1+\rho_2\lVert L\rVert_2^{1/2}\right)\lVert \Delta\rVert_F.
\end{eqnarray*}

Viewing the right hand side of the above inequality as a quadratic equation in $\lVert \Delta\rVert_F$, we have $f(\Delta)>0$ if
\begin{equation*}
\lVert \Delta\rVert_F \geq \frac{3\rho_n}{\kappa_{\ell_n,c}}(s+1)^{1/2}\left(1+\rho_2\lVert L\rVert_2^{1/2}\right)\equiv \epsilon_c >0.
\end{equation*}
Thus, if we show that there exists a $c_0>0$ such that $\epsilon_{c_0}\leq c_0$, Lemma \ref{lemma:NegLemma4} yields that
$\lVert \hat{\Theta}_{\rho_n}-\Theta_0\rVert_F \leq \epsilon_{c_0}.$

Consider the inequality $(x+y)^2z^{1/2} \leq y$ where $x,y,z\geq 0$.
This inequality holds for $(x,y,z)$ such that $x=y$ and $xz^{1/2}=1/4$.
We apply the inequality above with $x=\lambda_{\Theta},y=c,z=2^43^2\rho_n^2\gamma^{-2}s(1+\rho_2\lVert L\rVert_2^{1/2})^2$ and solve
$xz\leq 1/4$ for $n$.
(i) For $\rho_n = 2^3\sqrt{6}C_1(1+4c_1^2)^2\gamma^{-1/2}\max_{k,i}\sigma^{(k)}_{ii}\sqrt{\log p/n}$, $xz\leq 1/4$ yields
\begin{eqnarray*}
n\geq\max\left\{6\gamma^{-1}\log p,2^{15}3^3C_1^2 \gamma^{-3}(1+4c_1^2)^2 \max_{k,i}\{\sigma_{ii}^{(k)}\}^2
\lambda_{\Theta}^4
\left(1+\rho_2\lVert L\rVert_2^{1/2}\right)^2s\log p\right\},
\end{eqnarray*}
and $(x+y)^4z$ becomes
\begin{eqnarray*}
\epsilon^2_{\max_k \{\lVert \Theta_0^{(k)}\rVert_2\}}
\leq 2^{15}3^3 (1+c_1^2)^2\max_{k,i}(\sigma^{(k)}_{ii})^2\left(1+\rho_2\lVert L\rVert_2^{1/2}\right)^2\gamma^{-3}
\lambda_{\Theta}^4 
\frac{s\log p}{n}.
\end{eqnarray*}
(ii) For $\rho_n = C_1K\delta_n$, there is no closed form solution for $n$. Note that $\delta_n\rightarrow 0$ if $\log p/n\rightarrow 0$ so that $xz\leq 1/4$ holds for $n$ sufficiently large, given that $\sum_{k=1}^K\delta_{n}^{(k)} \leq K\delta_n$.

Computing appropriate probabilities using Lemmas \ref{lemma:const4} and  \ref{lemma:const6} completes the proof.
\end{proof}

\begin{proof}[Proof of Theorem~\ref{thm:rate}]
The estimation error $\lVert \hat{\Omega}^{(k)}_{\rho_n}-\Omega_0\rVert^{(2)}_2$ in the spectral norm can be bounded and evaluated in the same way as in the proof of Theorem 2 of \citet{MR2417391} together with Lemma~\ref{lemma:l1const}. 
\end{proof}

\subsection{Model Selection Consistency}
Our proof is based on the primal-dual witness approach of \citet{MR2836766}, with some modifications to overcome a difficulty in their proof when applying the fixed point theorem to a discontinuous function.
First, we define the oracle estimator $\check{\Theta}_{\rho_n}= (\check{\Theta}_{\rho_n}^{(1)},\ldots,\check{\Theta}_{\rho_n}^{(K)})$ by
\begin{eqnarray}
\check{\Theta}_{\rho_n}
&&=\argmin_{\Theta^{(k)}>0,\Theta^{(k)}=(\Theta^{(k)})^T,\Theta^{(k)}_{\left(S^{(k)}\right)^c} =0} n^{-1}\sum_{k=1}^K  n_k
\left( \mbox{tr}\left(\Psi^{(k)}_n\Theta^{(k)}\right) -\log \mbox{det}(\Theta^{(k)})\right)\nonumber \\
&&\qquad  + \rho_{n}\sum_{k=1}^K\lVert \Theta^{(k)} \rVert_1
+  \rho_{n}\rho_2\sum_{i,j} \sqrt{\Theta_{ij}^TL\Theta_{ij} }, 
\label{eqn:oracle}
\end{eqnarray}
where $\Theta_{\left(S^{(k)}\right)^c}^{(k)} = 0$ indicates that 
$\Theta_{(i,j)}^{(k)} = 0$ for $(i,j)\notin S^{(k)}$.

\begin{lemma}
\label{lemma:sbdiff}

(i) Let $A\in\mathbb{R}^{p\times p}$ be a positive semidefinite matrix with eigenvalues $0\leq \lambda_1\leq \lambda_2\leq \cdots\leq \lambda_p$ and corresponding eigenvectors $u_i$ satisfying $u_i\perp u_j,i\neq j$ and $\lVert u_i\rVert =1$. The subdifferential $\partial \sqrt{x^TAx}$ of $f(x) = \sqrt{x^TAx}$  is
\begin{eqnarray*}
\partial \sqrt{x^TAx} =\left\{
\begin{array}{ll}
Ax/\sqrt{x^TAx}, & Ax\neq 0,\\
\{ U\Lambda^{1/2} y: \lVert y\rVert \leq 1\}, & Ax=0.\\
\end{array}\right.
\end{eqnarray*}
where $U\in\mathbb{R}^{p\times  p}$ has $u_i$ as the $i$th columns and $\Lambda^{1/2}$ is the diagonal matrix with $\lambda_i^{1/2},i=1,\ldots,p,$ as diagonal elements. 
Furthermore, the subgradients are bounded above, i.e.
\begin{equation*}
\lVert \nabla f(x)\rVert_\infty \leq \lVert A \rVert_2^{1/2}, \quad \mbox{ for all } \nabla f(x) \in \partial \sqrt{x^TAx}.
\end{equation*}

(ii) Let $A\in\mathbb{R}^{p\times p}$ be a positive semidefinite matrix and $S=\{S_i\}\subset \{1,\ldots,p\}$.
Suppose $A_{SS}$ has eigenvalues $0\leq \lambda_{1,S}\leq \lambda_{2,S}\leq \cdots\leq \lambda_{|S|,S}$ and corresponding eigenvectors $u_{i,S}$ satisfying $u_{i,S}\perp u_{j,S},i\neq j$ and $\lVert u_{i,S}\rVert =1$. 
Let $g_S:\mathbb{R}^{|S|} \rightarrow \mathbb{R}^p$ be a map defined by $g_S(x) = y$ where $y_i =x_{S_j}$ for $i=S_j$ for  and $y_i  =0$ for $i\notin S$.
The subdifferential $h_{A,S}(x) = \sqrt{g_S(x)^TAg_S(x)}$ equals to the subdifferential of $\sqrt{x^TA_{SS}x}$ given by
\begin{eqnarray*}
\partial \sqrt{x^TA_{SS}x} =\left\{
\begin{array}{ll}
A_{SS}x/\sqrt{x^TA_{SS}x}, & A_{SS}x\neq 0,\\
U_S\Lambda_S^{1/2}\{y:\lVert y\rVert \leq 1\}, & A_{SS}x=0.\\
\end{array}\right.
\end{eqnarray*}
where $U_{S}\in\mathbb{R}^{|S|\times  |S|}$ has $u_{i,S}$ as the $i$th columns and $\Lambda_{S}^{1/2}$ is the diagonal matrix with $\lambda_{i,S}^{1/2},i=1,\ldots,|S|,$ as diagonal elements. 
For $x$ with $A_{SS}x\neq 0$, there is a relationship between $\partial \sqrt{x^TA_{SS}x}$ and $\partial \sqrt{y^TAy}$ at $y=g_S(x)$ given by 
\begin{eqnarray*}
&&\left\{\frac{Ay}{\sqrt{y^TAy}}\right\}_{S} = \frac{A_{SS}x}{\sqrt{x^TA_{SS}x}},\\
&&\left\{\frac{Ay}{\sqrt{y^TAy}}\right\}_{S^c} = \frac{A_{S^cS}x}{\sqrt{x^TA_{SS}x}}.
\end{eqnarray*}
Subgradients are bounded above:
\begin{equation*}
\lVert \nabla h_{A,S}(x)\rVert_\infty \leq \lVert A_{SS} \rVert_2^{1/2} \leq \lVert A \rVert_2^{1/2}, \quad \forall \nabla f_{A,S}(x) \in \partial \sqrt{x^TA_{SS}x}.
\end{equation*}

\end{lemma}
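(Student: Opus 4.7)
My plan is to treat part (i) by standard subdifferential calculus, splitting into the differentiable and the non-differentiable cases, and then to reduce part (ii) to part (i) via the identity $g_S(x)^T A g_S(x) = x^T A_{SS} x$.

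For part (i), the first observation is that $x^T A x > 0$ if and only if $Ax \neq 0$ (since $A$ is positive semidefinite, $x^T A x = \lVert A^{1/2} x\rVert^2$, and $A^{1/2} x = 0$ iff $Ax = 0$). So on the open set $\{x : Ax \neq 0\}$ the function $f(x) = \sqrt{x^T A x}$ is smooth and the chain rule gives $\nabla f(x) = Ax / \sqrt{x^T A x}$, exactly as stated. For the non-differentiable case $Ax = 0$, I would verify the claimed formula directly from the defining inequality $f(y) \geq f(x) + v^T(y - x)$. Write the spectral decomposition $A = U \Lambda U^T$ and any candidate $v = U \Lambda^{1/2} w$ with $\lVert w \rVert \leq 1$. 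Because $\Lambda^{1/2} U^T x = 0$ (as $Ax = 0$ forces $x$ to lie in the null space of $A$), one has $v^T x = 0 = f(x)$, so the subgradient condition reduces to $v^T y \leq \sqrt{y^T A y}$, which follows from Cauchy--Schwarz applied to $v^T y = w^T (\Lambda^{1/2} U^T y)$ and the identity $\lVert \Lambda^{1/2} U^T y \rVert^2 = y^T A y$. The reverse inclusion (every subgradient has this form) follows from the same computation by taking $y = x + t u$ for unit vectors $u$ and letting $t \to 0^+$, which forces $v$ to satisfy $v^T u \leq \sqrt{u^T A u} = \lVert \Lambda^{1/2} U^T u\rVert$ for every $u$, equivalent to $v \in U \Lambda^{1/2} B$ where $B$ is the unit ball.

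For the norm bound, the key inequality is $A^2 \preceq \lVert A\rVert_2 \cdot A$, which follows spectrally since $\lambda_i(\lVert A\rVert_2 - \lambda_i) \geq 0$. Thus on $\{Ax \neq 0\}$, $\lVert Ax \rVert_\infty^2 \leq \lVert Ax\rVert_2^2 = x^T A^2 x \leq \lVert A\rVert_2 \cdot x^T A x$, giving $\lVert \nabla f(x)\rVert_\infty \leq \lVert A\rVert_2^{1/2}$. On $\{Ax = 0\}$, $\lVert U \Lambda^{1/2} w \rVert_\infty \leq \lVert U\Lambda^{1/2} w\rVert_2 \leq \lVert \Lambda^{1/2}\rVert_2 \lVert w\rVert \leq \lambda_p^{1/2} = \lVert A\rVert_2^{1/2}$.

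For part (ii), the essential reduction is the direct computation $g_S(x)^T A g_S(x) = \sum_{i,j \in S} x_i A_{ij} x_j = x^T A_{SS} x$, so $h_{A,S}(x) = \sqrt{x^T A_{SS} x}$ and one simply applies part (i) to $A_{SS}$. For the relationship between $\partial \sqrt{y^T A y}$ at $y = g_S(x)$ and $\partial \sqrt{x^T A_{SS} x}$, note that $(Ay)_i = \sum_{j \in S} A_{ij} x_j$, which equals $(A_{SS} x)_i$ for $i \in S$ and $(A_{S^c S} x)_i$ for $i \in S^c$; dividing by $\sqrt{y^T A y} = \sqrt{x^T A_{SS} x}$ gives the claimed block decomposition. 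The norm bound in part (ii) follows from part (i) applied to $A_{SS}$, together with $\lVert A_{SS}\rVert_2 \leq \lVert A\rVert_2$ (a principal submatrix of a symmetric matrix has spectral norm bounded by that of the parent, by the variational characterization of eigenvalues via Rayleigh quotients restricted to vectors supported on $S$).

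The only delicate point is the reverse inclusion in the $Ax = 0$ case of part (i): verifying that every element of the subdifferential has the stated form. This requires using the support function characterization of the subdifferential at a point where $f$ vanishes, coupled with the observation that $\sqrt{u^T A u}$ is precisely the support function of the set $U \Lambda^{1/2} B$ in the direction $u$. The rest of the argument is straightforward spectral calculus.
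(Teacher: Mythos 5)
Your proposal is correct and follows essentially the same route as the paper: chain rule on the smooth set $\{Ax\neq 0\}$, a spectral/Cauchy--Schwarz argument for both inclusions at points with $Ax=0$, and reduction of part (ii) to part (i) via $g_S(x)^TAg_S(x)=x^TA_{SS}x$. The only differences are cosmetic --- you phrase the reverse inclusion via the support function of $U\Lambda^{1/2}B$ where the paper parametrizes subgradients in the eigenbasis and checks $\sum_j\alpha_j^2/\lambda_j\le 1$ directly, you get the norm bound from $A^2\preceq\lVert A\rVert_2 A$ rather than from $\lVert U\Lambda^{1/2}\rVert_2$, and you justify $\lVert A_{SS}\rVert_2\le\lVert A\rVert_2$ by Rayleigh quotients instead of the paper's permutation-matrix bookkeeping.
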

\begin{proof}

(i) For $x$ with $Ax\neq 0$, $f(x)$ is differentiable and the subgradient of $f$ at $x$ is simply the matrix derivative.
By definition, for $x$ with $Ax=0$, the subgradient $v$ of $f$ at $x$ satisfies the following inequality
\begin{equation}
\label{eqn:defsubgrd}
 \sqrt{y^TAy} \geq \langle y-x,v\rangle, 
\end{equation}
for all $y$.
Choosing $y=2x$ and $y=0$ yield $0\geq \langle x,v\rangle$ and $0\geq -\langle x,v\rangle$, implying $\langle x,v\rangle = 0$.
The inequality (\ref{eqn:defsubgrd}) reduces to $\sqrt{y^TAy} \geq \langle y,v\rangle$, for any $y$.
If $Ay=0$, a similar argument implies that $\langle y,v\rangle =0$. Hence $v\perp y$ for every $y$ with $Ay=0$.

Let $j_0$ be the smallest index such that $\lambda_{j_0}>0$.
Because $u_j$'s form an orthonormal basis, any arbitrary vector $y$ can be written as $y = \sum_{j=1}^p \beta_j u_j$.
Moreover, the null space of $A$ is the span of $u_1,\ldots,u_{j_0-1}$. Thus, the subgradient $v$ can be written as $v=\sum_{j=j_0}^p\alpha_j u_j$.
Thus, using the spectral decomposition of $A$ as $A = \sum_{j=j_0}^p\lambda_j u_j u_j^T$, we can write $f(y) = \{\sum_{j=j_0}^p \lambda_j \beta_j^2\}^{1/2}$.
On the other hand, $\langle y, v\rangle = \sum_{j=j_0}^p\alpha_j\beta_j$.
Thus, the inequality \eqref{eqn:defsubgrd} further reduces to
\begin{equation*}
\left\{\sum_{j=j_0}^p\lambda_j\beta_j^2\right\}^{1/2} \geq \sum_{j=j_0}^p\alpha_j\beta_j, \quad \forall \beta_j\in\mathbb{R}. 
\end{equation*}
It follows from the Cauchy-Schwartz inequality that the left hand side of the inequality is bounded from above;
\begin{eqnarray*}
\sum_{j=j_0}^p \alpha_j\beta_j
=\sum_{j=j_0}^p \frac{\alpha_j}{\lambda_j^{1/2}}\lambda_j^{1/2}\beta_j
\leq \left\{ \sum_{j=j_0}^p \frac{\alpha_j^2}{\lambda_j}\right\}^{1/2}\left\{\sum_{j=j_0}^p\lambda_j\beta^2_j \right\}^{1/2}.
\end{eqnarray*}
Thus, 
\begin{eqnarray*}
\partial f(x) = \left\{v: v=\sum_{j=j_0}^p\alpha_j v_j, \sum_{j=j_0}^p \frac{\alpha_j^2}{\lambda_j}\leq 1,\alpha_j\in\mathbb{R}\right\}.
\end{eqnarray*}
It is easy to see that this set is the image of the map $U\Lambda^{1/2}$ on the closed ball of radius 1.

Given that $\lVert x\rVert_\infty\leq \lVert x\rVert$, to establish the bound in the $\ell_\infty$-norm, we compute the bound in the Euclidean norm.
We use the same notation as in (i).
For $x$ with $Ax\neq 0$, 
\begin{equation*}
\left\lVert \frac{Ax}{\sqrt{x^TAx}}\right\rVert = \frac{\lVert U \Lambda^{1/2}\Lambda^{1/2} U^T x \rVert}{\lVert \Lambda^{1/2}U^Tx\rVert}
\leq \lVert U\Lambda^{1/2}\rVert_2.
\end{equation*}
But $\lVert U\Lambda^{1/2}\rVert_2 = \sup_{\lVert x\rVert =1}\lVert U\Lambda^{1/2}x \rVert= \sup_{x\in\mathbb{R}^K}\lVert U\Lambda^{1/2}(U^Tx)\rVert/\lVert U^Tx\rVert  = \lVert A\rVert_2^{1/2}$, because $\lVert U^Tx\rVert =\lVert x\rVert$.
For $x$ with $Ax=0$, $\lVert \Lambda^{1/2} y/\lVert y\rVert\rVert \leq \lVert A\rVert_2^{1/2}$ for every $y$. Because of the form of the subdifferential and the fact that $\lVert Ux\rVert = \lVert x\rVert$, the result follows.

(ii) Let $B_S$ be a product of elementary matrices for row and column exchange such that $B_Sg_S(x) = (x,0)$. 
Notice that $B_S=B_S^{-1}$ and that $B_S=B_S^{T}$ since $B_S$ only rearranges elements of vectors and exchanges rows by multiplication from the left.
Note also that 
$\lVert B_S\rVert_2 \leq \lVert B_S\rVert_{\infty/\infty} = 1$,
since $\lVert C\rVert_2 \leq \lVert C\rVert_{\infty/\infty}$ for $C=C^T$ and each row of $B_S$ has only one element with value $1$.
Because
\begin{equation*}
\{h_{A,S}(x)\}^2 = g_S(x)^TAg_S(x) = (B_Sg_S(x))^T(B_S AB_S)(B_Sg_S(x))
= x^TA_{SS}x,
\end{equation*}
the subdifferential of $h_{A,S}(x)$ follows from (ii).
For $x$ with $A_{SS}x\neq x$ and $y=g_S(x)$, $Ay =B_SAB_S(x,0)^T = B_S(A_{SS}x,A^T_{S^cS}x)^T\neq 0$ because of invertibility of $B_S$.
The relationship holds since 
\begin{eqnarray*}
\left[\begin{array}{c}
(Ay/\sqrt{y^TAy})_S\\
(Ay/\sqrt{y^TAy})_{S^c}\\
\end{array}
\right]
=B_S \frac{Ay}{\sqrt{y^TAy}}  
=\frac{1}{\sqrt{x^TA_{SS}x}}\left[
\begin{array}{c}
A_{SS}x\\
A_{S^cS}x\\
\end{array}
\right]
\end{eqnarray*}
An $\ell_\infty$-bound follows from (i) and the fact that 
$\lVert A_{SS} \rVert_2 \leq \lVert B_S\rVert^2_2\lVert A\rVert_2 =\lVert A\rVert_2$.  
\end{proof}

\begin{lemma}
\label{lemma:RWRY3}
For any $\rho_n>0$ and sample correlation matrices $\hat{\Psi}_n = (\hat{\Psi}_n^{(1)},\ldots, \hat{\Psi}_n^{(K)})$, the convex problem (\ref{eqn:originallasso}) has a unique solution $\hat{\Theta}_{\rho_n} = (\hat{\Theta}_{\rho_n}^{(1)},\ldots,\hat{\Theta}_{\rho_n}^{(K)})$ with $\hat{\Theta}_{\rho_n}^{(k)}>0,k=1,\ldots,K,$ characterized by
\begin{eqnarray}
\label{eqn:kkt}
n^{-1}n_k(\psi^{(k)}_{n,ij} -[\{\hat{\Theta}_{\rho_n}^{(k)}\}^{-1}]_{ij}) + \rho_n \hat{U}_{1,ij}^{(k)} + \rho_n\rho_2 \hat{U}_{2,ij}^{(k)}=0,
\end{eqnarray}
with $\hat{U}_{1,ij}^{(k)}\in\partial |\hat{\theta}_{\rho_n,ij}^{(k)}|$ and $(\hat{U}_{2,ij}^{(1)},\ldots,\hat{U}_{2,ij}^{(K)})^T\in \partial \sqrt{\hat{\Theta}_{\rho_n,ij}^TL\hat{\Theta}_{\rho_n,ij}}$
for every $i\neq j$ and $k=1,\ldots,K$. 
Moreover,
\begin{eqnarray}
\label{eqn:kkt2}
n^{-1}n_k(\psi^{(k)}_{n,ii} -[\{\hat{\Theta}_{\rho_n}^{(k)}\}^{-1}]_{ii})+ \rho_n \hat{U}_{1,ij}^{(k)} + \rho_n\rho_2 \hat{U}_{2,ij}^{(k)} =0,
\end{eqnarray}
with $\hat{U}_{1,ij}^{(k)} = \hat{U}_{2,ij}^{(k)}=0$ for every $i=1,\ldots,p,$ and $k=1,\ldots,K$.

For each $(i,j) \in S$, let $S_{ij}=\{k: \Theta_{0,ij}^{(k)} \neq 0\}$.
The convex problem (\ref{eqn:oracle}) has a unique solution $\check{\Theta}_{\rho_n} = (\check{\Theta}_{\rho_n}^{(1)},\ldots,\check{\Theta}_{\rho_n}^{(K)})$ with $\check{\Theta}_{\rho_n}^{(k)}>0,k=1,\ldots,K,$ characterized by
\begin{eqnarray}
\label{eqn:kktOracle}
n^{-1}n_k(\psi^{(k)}_{n,ij} -[\{\check{\Theta}_{\rho_n}^{(k)}\}^{-1}]_{ij}) + \rho_n \check{U}_{1,ij}^{(k)} + \rho_n\rho_2 \check{U}_{2,ij}^{(k)}=0,
\end{eqnarray}
with $\check{U}_{1,ij}^{(k)}\in\partial |\check{\theta}_{\rho_n,ij}^{(k)}|$ and $\check{U}_{2,ij}^{(k)}\in \partial \sqrt{\{\check{\Theta}_{\rho_n,ij}\}_{S_{ij}}^TL_{S_{ij}S_{ij}}\{\check{\Theta}_{\rho_n,ij}\}_{_{S_{ij}}}}$
for every $i\neq j$ and $k=1,\ldots,K$. Moreover,
\begin{eqnarray}
\label{eqn:kktOracle2}
n^{-1}n_k(\psi^{(k)}_{n,ii} -[\{\check{\Theta}_{\rho_n}^{(k)}\}^{-1}]_{ii}) + \rho_n \check{U}_{1,ij}^{(k)} + \rho_n\rho_2 \check{U}_{2,ij}^{(k)}=0,
\end{eqnarray}
with $\check{U}_{1,ij}^{(k)}=\check{U}_{2,ij}^{(k)}=0$ for every $i=1,\ldots,p,$ and $k=1,\ldots,K$.
\end{lemma}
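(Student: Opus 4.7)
The plan is to prove both parts (the unpenalized problem~\eqref{eqn:originallasso} and the oracle problem~\eqref{eqn:oracle}) by the same two-step argument: first establish existence and uniqueness of a minimizer in the interior of the positive-definite cone, then read off the first-order optimality (KKT) conditions from the subdifferentials already characterized in Lemmas~\ref{lemma:const1}(iii) and~\ref{lemma:sbdiff}.

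For existence and uniqueness, I would first note that the objective in~\eqref{eqn:originallasso} is the sum of $-\tilde\ell_n$, which is strictly convex and smooth on the open cone of $K$-tuples of symmetric positive-definite matrices, and the penalty $\rho_n r(\Theta)$, which by Lemma~\ref{lemma:const1}(iii) is a convex seminorm. Strict convexity of $-\log\det$ forces strict convexity of the whole objective on the feasible set, so any minimizer is unique. To guarantee the existence of a minimizer in the interior, I would verify coercivity: as $\Theta^{(k)}$ approaches the boundary of the positive-definite cone some eigenvalue tends to $0$ and $-\log\det(\Theta^{(k)})\to+\infty$, while for large $\|\Theta^{(k)}\|$ the $\ell_1$ penalty dominates (since diagonal entries of $\Psi_n^{(k)}$ are bounded by $1$ and off-diagonal entries are penalized with weight $\rho_n>0$); standard arguments then give a minimizer in the interior. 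Together with the convexity of the feasible set, this yields a unique $\hat\Theta_{\rho_n}$ with $\hat\Theta_{\rho_n}^{(k)}\succ 0$.

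Next I would write the zero-subgradient condition. Since the minimizer lies in the interior, $0$ must belong to the subdifferential of the objective at $\hat\Theta_{\rho_n}$. Differentiating the smooth term blockwise (as in Lemma~\ref{lemma:const1}(i)) produces $(n_k/n)(\Psi_n^{(k)}-\{\hat\Theta_{\rho_n}^{(k)}\}^{-1})$ in the $(i,j)$ entry. The subdifferential of $\rho_n|\theta_{ij}^{(k)}|$ is the standard soft sign $\rho_n\,\partial|\hat\theta_{\rho_n,ij}^{(k)}|$ for $i\ne j$ (and $0$ on the diagonal, which is not penalized). For the Laplacian term, Lemma~\ref{lemma:sbdiff}(i) applied with $A=L$ gives the subdifferential of $\sqrt{\Theta_{ij}^T L\Theta_{ij}}$ as a vector in $\mathbb{R}^K$ whose $k$-th coordinate enters the $(i,j)$ KKT equation for subpopulation $k$. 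Assembling these yields~\eqref{eqn:kkt} for $i\ne j$ and~\eqref{eqn:kkt2} for $i=j$, where the $i=j$ case is immediate because no penalty acts on the diagonals.

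For the oracle problem~\eqref{eqn:oracle}, the constraint $\Theta^{(k)}_{(S^{(k)})^c}=0$ simply restricts optimization to the linear subspace indexed by $S$, on which the objective is again strictly convex and coercive by the same reasoning, so uniqueness and existence follow. Taking subgradients only with respect to the free coordinates, the $\ell_1$ subdifferential is unchanged, while the Laplacian subdifferential must be computed for the restricted quadratic form $\{\Theta_{ij}\}_{S_{ij}}^T L_{S_{ij}S_{ij}}\{\Theta_{ij}\}_{S_{ij}}$; this is exactly the situation treated in Lemma~\ref{lemma:sbdiff}(ii), which produces~\eqref{eqn:kktOracle} and~\eqref{eqn:kktOracle2}. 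The main subtlety I would expect to spell out is the case where $L\Theta_{ij}=0$ (respectively $L_{S_{ij}S_{ij}}\{\Theta_{ij}\}_{S_{ij}}=0$), in which case the subdifferential is the set-valued expression from Lemma~\ref{lemma:sbdiff}, and the KKT condition asserts only that \emph{some} element of this set completes the equation; this non-smooth case is the one place where a reader might otherwise be misled into expecting a single-valued gradient, and is the only mildly technical point in an otherwise standard convex-analytic argument.
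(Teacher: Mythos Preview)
Your proposal is correct and matches the paper's own proof, which is extremely terse: it simply states that uniqueness follows as in Lemma~3 of \citet{MR2836766} (i.e., strict convexity of $-\log\det$ plus a coercivity/barrier argument) and that the KKT conditions follow from Lemma~\ref{lemma:sbdiff}. You have essentially unpacked both of these references, including the use of Lemma~\ref{lemma:sbdiff}(ii) for the restricted Laplacian subdifferential in the oracle problem, so there is no substantive difference in approach.
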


\begin{proof}
A proof for the uniqueness of the solution is similar to the proof of Lemma 3 of \citet{MR2836766}.
The rest is the KKT condition using Lemma \ref{lemma:sbdiff}.
\end{proof}

We choose a pair $\tilde{U}=(\tilde{U}_1,\tilde{U}_2)$ of the subgradients of the first and second regularization terms evaluated at $\check{\Theta}_{\rho_n}$.
For each $(i,j)$ with $\Omega_{0,ij} =0$ or with $L\check{\Theta}_{\rho_n,ij} =0$, set 
\begin{equation*}
\tilde{U}_{1,ij}^{(k)}  = \rho_n^{-1}n^{-1}n_k(-\psi^{(k)}_{n,ij} + [\{\check{\Theta}_{\rho_n}^{(k)}\}^{-1}]_{ij}),\quad \tilde{U}^{(k)}_{2,ij} = 0, \quad k=1,\ldots,K.
\end{equation*}
For $(i,j)$ with $\omega_{0,ij}^{(k)}\neq 0$, for all $k=1,\ldots,K$, set
\begin{equation*}
\tilde{U}_{1,ij}^{(k)}= \check{U}_{1,ij}^{(k)}, \quad
\tilde{U}_{2,ij}^{(k)}= \check{U}_{2,ij}^{(k)}, \quad k=1,\ldots,K.
\end{equation*}
For $(i,j)$ with $L\check{\Theta}_{\rho_n,ij}\neq 0$, $\Omega_{0,ij}\neq 0$ but $\omega_{0,ij}^{(k')}=0$ for some $k'$, set
\begin{equation*}
\tilde{U}_{1,ij}^{(k)}= \rho_n^{-1}n^{-1}n_k(-\psi^{(k)}_{n,ij} + [\{\check{\Theta}_{\rho_n}^{(k)}\}^{-1}]_{ij})-\rho_2\frac{l_k\check{\Theta}_{\rho_n,ij}}{\sqrt{\check{\Theta}_{\rho_n,ij}^TL\check{\Theta}_{\rho_n,ij}}}, \quad
\tilde{U}_{2,ij}^{(k)}=  \frac{l_k^T\check{\Theta}_{\rho_n,ij}}{\sqrt{\check{\Theta}_{\rho_n,ij}^TL\check{\Theta}_{\rho_n,ij}}},
\end{equation*}
if $\omega_{0,ij}^{(k)} =0$, and 
\begin{equation*}
\tilde{U}_{1,ij}^{(k)}= \check{U}_{1,ij}^{(k)}  , \quad
\tilde{U}_{2,ij}^{(k)}=  \frac{l_k^T\check{\Theta}_{\rho_n,ij}}{\sqrt{\check{\Theta}_{\rho_n,ij}^TL\check{\Theta}_{\rho_n,ij}}},
\end{equation*}
otherwise. Here, $l_k$ is the $k$th row of $L$.

The main idea of the proof is to show that $(\check{\Theta}_{\rho_n},\tilde{U})$ satisfies the optimality conditions of the original problem with probability tending to 1. 
In particular, we show the following equation, which holds by construction of $\tilde{U}_1$ and $\tilde{U}_2$, is in fact the KKT condition of the original problem  (\ref{eqn:originallasso}):
\begin{eqnarray}
\label{eqn:kktmat}
n^{-1}n_k(\hat{\Psi}_n^{(k)} -\{\check{\Theta}_{\rho_n}^{(k)}\}^{-1})+ \rho_n \tilde{U}_1^{(k)} + \rho_n\rho_2 \tilde{U}_2^{(k)}=0.
\end{eqnarray}
To this end, we show that $\tilde{U}_1$ and $\tilde{U}_2$ are both subgradients of the original problem.
We can then conclude that the oracle estimator in the restricted problem (\ref{eqn:oracle}) is the solution to the original problem (\ref{eqn:originallasso}).
Then it follows from the uniqueness of the solution that $\check{\Theta}_{\rho_n} = \hat{\Theta}_{\rho_n}$.

Let $\Xi^{(k)} = \hat{\Psi}_n^{(k)} - \Psi_0^{(k)}$, $R^{(k)}(\Delta^{(k)}) = \{\check{\Theta}_{\rho_n}^{(k)}\}^{-1}-\Psi_0^{(k)} + \Psi_0^{(k)}\Delta^{(k)} \Psi_0^{(k)}$, and $\check{\Delta}^{(k)} = \check{\Theta}_{\rho_n}^{(k)} - \Theta_0^{(k)}$.
\begin{lemma}
\label{lemma:strictDual}
Suppose that
$\max \{\lVert \Xi^{(k)}\rVert_\infty,\lVert R^{(k)}(\check{\Delta}^{(k)})\rVert_\infty\}
\leq \alpha\rho_n /8$,
and $\rho_2 \leq \alpha^2/\{4\lVert L\rVert_2^{1/2}(2-\alpha)\}.$
Suppose moreover that $L\check{\Theta}_{\rho_n,ij}\neq 0$ for $(i,j)\in S$. 
Then $|\tilde{U}_{1,ij}^{(k)}|<1$ for $(i,j)\in (S^{(k)})^c$.
\end{lemma}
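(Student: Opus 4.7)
My plan is to carry out the primal-dual witness construction: show that the explicitly defined pair $(\check{\Theta}_{\rho_n},\tilde{U})$ satisfies all the KKT conditions of the unrestricted problem (\ref{eqn:originallasso}), so that by the uniqueness part of Lemma~\ref{lemma:RWRY3} we conclude $\hat{\Theta}_{\rho_n}=\check{\Theta}_{\rho_n}$. The only nontrivial optimality condition not already built into the construction is strict dual feasibility on $(S^{(k)})^c$, which is precisely $|\tilde{U}_{1,ij}^{(k)}|<1$.

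The derivation has three steps. First, I would combine the Taylor expansion
\[
\{\check{\Theta}_{\rho_n}^{(k)}\}^{-1}-\hat{\Psi}_n^{(k)} = -\Psi_0^{(k)}\check{\Delta}^{(k)}\Psi_0^{(k)} + R^{(k)}(\check{\Delta}^{(k)}) - \Xi^{(k)}
\]
with the oracle KKT (\ref{eqn:kktOracle}) on $S^{(k)}$, and use $\check{\Delta}^{(k)}_{(S^{(k)})^c}=0$ together with the invertibility of $\Gamma^{(k)}_{S^{(k)}S^{(k)}}$ from Condition~\ref{cond:irrep}(a) to solve
\[
\vec{\check{\Delta}^{(k)}}_{S^{(k)}} = (\Gamma^{(k)}_{S^{(k)}S^{(k)}})^{-1}\Bigl(\vec{R^{(k)}}_{S^{(k)}}-\vec{\Xi^{(k)}}_{S^{(k)}}-\rho_n(n/n_k)\bigl(\vec{\check{U}_1^{(k)}}_{S^{(k)}}+\rho_2\vec{\check{U}_2^{(k)}}_{S^{(k)}}\bigr)\Bigr).
\]
Second, by inspecting the three cases of the definition of $\tilde{U}$ I would verify that uniformly for $(i,j)\in (S^{(k)})^c$ the identity $\tilde{U}_{1,ij}^{(k)}+\rho_2\tilde{U}_{2,ij}^{(k)} = \rho_n^{-1}(n_k/n)\bigl([\{\check{\Theta}_{\rho_n}^{(k)}\}^{-1}]_{ij}-\psi_{n,ij}^{(k)}\bigr)$ holds; then expanding the same Taylor identity on $(S^{(k)})^c$ and substituting the displayed formula for $\vec{\check{\Delta}^{(k)}}_{S^{(k)}}$ yields
\[
\vec{\tilde{U}_1^{(k)}}_{(S^{(k)})^c} = \Gamma^{(k)}_{(S^{(k)})^c S^{(k)}}(\Gamma^{(k)}_{S^{(k)}S^{(k)}})^{-1}\bigl(\vec{\check{U}_1^{(k)}}_{S^{(k)}}+\rho_2\vec{\check{U}_2^{(k)}}_{S^{(k)}}\bigr) + \rho_n^{-1}(n_k/n)\,T^{(k)} - \rho_2\vec{\tilde{U}_2^{(k)}}_{(S^{(k)})^c},
\]
where $T^{(k)}$ bundles the $R^{(k)}$ and $\Xi^{(k)}$ contributions from both $S^{(k)}$ and $(S^{(k)})^c$. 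Third, I take $\ell_\infty$ norms: Condition~\ref{cond:irrep}(b) bounds $\lVert\Gamma^{(k)}_{(S^{(k)})^c S^{(k)}}(\Gamma^{(k)}_{S^{(k)}S^{(k)}})^{-1}\rVert_{\infty/\infty}$ by $1-\alpha$; Lemma~\ref{lemma:sbdiff} bounds every component of any subgradient of the Laplacian seminorm by $\lVert L\rVert_2^{1/2}$; the hypothesis $\max\{\lVert\Xi^{(k)}\rVert_\infty,\lVert R^{(k)}\rVert_\infty\}\le\alpha\rho_n/8$ combined with $n_k/n\le 1$ gives $\lVert\rho_n^{-1}(n_k/n)T^{(k)}\rVert_\infty \le (2-\alpha)\alpha/4$; and $\lVert\check{U}_1^{(k)}\rVert_\infty\le 1$. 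Assembling produces $\lVert\vec{\tilde{U}_1^{(k)}}_{(S^{(k)})^c}\rVert_\infty \le (1-\alpha)(1+\rho_2\lVert L\rVert_2^{1/2}) + (2-\alpha)\alpha/4 + \rho_2\lVert L\rVert_2^{1/2}$, and plugging in $\rho_2\lVert L\rVert_2^{1/2}\le \alpha^2/\{4(2-\alpha)\}$ telescopes this to $1-\alpha/2<1$.

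The main obstacle is the second step: reconciling the three-case construction of $\tilde{U}$ with the subdifferential formulas from Lemma~\ref{lemma:sbdiff}(ii) for the restricted versus unrestricted Laplacian seminorm. The hypothesis $L\check{\Theta}_{\rho_n,ij}\neq 0$ for $(i,j)\in S$ is exactly what forces the oracle subgradient $\check{U}_{2,ij}^{(k)}$ to be the explicit normalized expression rather than an arbitrary element of the subdifferential ball at zero, and what ensures that the extension $\tilde{U}_2^{(k)}$ to $(S^{(k)})^c$ defined in the construction is a legitimate subgradient of $\sqrt{\Theta_{ij}^T L\Theta_{ij}}$ evaluated at $\check{\Theta}_{\rho_n,ij}$. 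Once this consistency is verified so that $\tilde{U}_1+\rho_2\tilde{U}_2$ reproduces the Taylor identity uniformly in $(i,j)$, the remaining bookkeeping is routine and follows the pattern of \citet{MR2836766}.
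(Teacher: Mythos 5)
Your proposal is correct and follows essentially the same route as the paper's proof: vectorize the stationarity identity, split over $S^{(k)}$ versus $(S^{(k)})^c$, solve for $\vec{\check{\Delta}}^{(k)}_{S^{(k)}}$ using the invertibility of $\Gamma^{(k)}_{S^{(k)}S^{(k)}}$, substitute back, and bound in $\ell_\infty$ via the irrepresentability constant $1-\alpha$, the subgradient bound $\lVert L\rVert_2^{1/2}$ from Lemma~\ref{lemma:sbdiff}, and the constraint on $\rho_2$, arriving at the identical final bound $1-\alpha/2-\alpha^2/4+(2-\alpha)\rho_2\lVert L\rVert_2^{1/2}\leq 1-\alpha/2<1$. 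The only difference is expository: you make explicit the case-by-case verification that the constructed $(\tilde U_1,\tilde U_2)$ reproduces the stationarity equation on $(S^{(k)})^c$, which the paper absorbs into the phrase ``holds by construction.''
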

\begin{proof}
We rewrite (\ref{eqn:kktmat}) to obtain
\begin{equation*}
\frac{n_k}{n}\Psi_0^{(k)}\check{\Delta}^{(k)}\Psi_0^{(k)} + \frac{n_k}{n}\Xi^{(k)} -\frac{n_k}{n}R^{(k)}(\check{\Delta}^{(k)}) +
\rho_n \tilde{U}_1^{(k)} + \rho_n\rho_2\tilde{U}_2^{(k)} = 0.
\end{equation*}
We further rewrite the above equation via vectorization;
\begin{equation*}
\frac{n_k}{n}(\Psi_0^{(k)}\otimes \Psi_0^{(k)})\vec{\check{\Delta}}^{(k)} + \frac{n_k}{n}\vec{\Xi}^{(k)} -\frac{n_k}{n}\vec{R}^{(k)}(\check{\Delta}^{(k)}) +
\rho_n \vec{\tilde{U}}_1^{(k)} + \rho_n\rho_2\vec{\tilde{U}}_2^{(k)} = 0.
\end{equation*}
We separate this equation into two equations depending on $S^{(k)}$;
\begin{eqnarray} \label{eqn:vecteqn2}
&&\frac{n_k}{n}\Gamma^{(k)}_{S^{(k)}S^{(k)}}\vec{\check{\Delta}}^{(k)}_{S^{(k)}} + \frac{n_k}{n}\vec{\Xi}^{(k)}_{S^{(k)}} -\frac{n_k}{n}\vec{R}^{(k)}_{S^{(k)}}(\check{\Delta}^{(k)}) +
\rho_n \vec{\tilde{U}}_{1,S^{(k)}}^{(k)} + \rho_n\rho_2\vec{\tilde{U}}_{2,S^{(k)}}^{(k)} = 0, \\
&&\frac{n_k}{n}\Gamma^{(k)}_{(S^{(k)})^c S^{(k)}}\vec{\check{\Delta}}^{(k)}_{S^{(k)}} + \frac{n_k}{n}\vec{\Xi}^{(k)}_{(S^{(k)})^c} -\frac{n_k}{n}\vec{R}^{(k)}_{(S^{(k)})^c}(\check{\Delta}^{(k)}) +
\rho_n \vec{\tilde{U}}_{1,(S^{(k)})^c}^{(k)} + \rho_n \rho_2 \vec{\tilde{U}}_{2,(S^{(k)})^c}^{(k)}= 0. \nonumber 
\end{eqnarray}
where $\left(\vec{\tilde{U}}_l\right)_J \equiv \vec{\tilde{U}}_{k,J},l=1,2$.
Here we used $\check{\Delta}^{(k)}_{(S^{(k)})^c} = 0$.
Since $\Gamma^{(k)}_{S^{(k)}S^{(k)}}$ is invertible, we solve the first equation to obtain
\begin{equation*}
\frac{n_k}{n}\vec{\check{\Delta}}^{(k)}_{S^{(k)}} = (\Gamma^{(k)}_{S^{(k)}S^{(k)}})^{-1} \left\{-\frac{n_k}{n}\vec{\Xi}^{(k)}_{S^{(k)}} +\frac{n_k}{n}\vec{R}^{(k)}_{S^{(k)}}(\check{\Delta}^{(k)}) -\rho_n \vec{\tilde{U}}_{1,S^{(k)}}^{(k)} - \rho_n\rho_2\vec{\tilde{U}}_{2,S^{(k)}}^{(k)}\right\}.
\end{equation*}
Substituting this expression into (\ref{eqn:vecteqn2}) yields
\begin{eqnarray*}
\vec{\tilde{U}}^{(k)}_{1,(S^{(k)})^c}
&=& \rho_n^{-1}\Gamma^{(k)}_{(S^{(k)})^c S^{(k)}}(\Gamma^{(k)}_{S^{(k)}S^{(k)}})^{-1}\left(\frac{n_k}{n}\vec{\Xi}_{S^{(k)}}^{(k)} - \frac{n_k}{n}\vec{R}^{(k)}_{S^{(k)}}(\check{\Delta}^{(k)})\right) \\
&&+ \Gamma^{(k)}_{(S^{(k)})^c S^{(k)}}(\Gamma^{(k)}_{S^{(k)}S^{(k)}})^{-1}\vec{\tilde{U}}_{1,S^{(k)}}^{(k)}
 + \rho_2\Gamma^{(k)}_{(S^{(k)})^c S^{(k)}}(\Gamma^{(k)}_{S^{(k)}S^{(k)}})^{-1}\vec{\tilde{U}}_{2,S^{(k)}}^{(k)} \\
&&- \rho_n^{-1}\left(\frac{n_k}{n}\vec{\Xi}^{(k)}_{(S^{(k)})^c} -\frac{n_k}{n}\vec{R}^{(k)}_{(S^{(k)})^c}(\check{\Delta}^{(k)})\right) -\rho_2\vec{\tilde{U}}_{2,(S^{(k)})^c}^{(k)}.
\end{eqnarray*}
Taking the $\ell_\infty$-norm yields
\begin{eqnarray*}
\left\lVert \vec{\tilde{U}}_{1,(S^{(k)})^c}^{(k)}\right\rVert_\infty
&\leq& \rho_n^{-1}\lVert\Gamma^{(k)}_{(S^{(k)})^c S^{(k)}}(\Gamma^{(k)}_{S^{(k)}S^{(k)}})^{-1}\rVert_{\infty/\infty}(\lVert\vec{\Xi}_{S^{(k)}}^{(k)}\rVert_\infty + \lVert\vec{R}^{(k)}_{S^{(k)}}(\check{\Delta}^{(k)})\rVert_\infty) \\
&& + \lVert\Gamma^{(k)}_{(S^{(k)})^c S^{(k)}}(\Gamma^{(k)}_{S^{(k)}S^{(k)}})^{-1}\rVert_{\infty/\infty}(\lVert \vec{\tilde{U}}_{1,S^{(k)}}^{(k)}\rVert_\infty + \rho_2\lVert \vec{\tilde{U}}_{2,S^{(k)}}^{(k)}\rVert_\infty)\\
&&  + \rho_n^{-1}(\lVert\vec{\Xi}^{(k)}_{(S^{(k)})^c}\rVert_\infty +\lVert\vec{R}^{(k)}_{(S^{(k)})^c}(\check{\Delta}^{(k)})\rVert_\infty) + \rho_2 \lVert\vec{\tilde{U}}_{2,(S^{(k)})^c}^{(k)}\rVert_\infty\\
&\leq&  \frac{2-\alpha}{\rho_n}(\lVert\vec{\Xi}^{(k)}_{(S^{(k)})^c}\rVert_\infty +\lVert\vec{R}^{(k)}_{(S^{(k)})^c}(\check{\Delta}^{(k)})\rVert_\infty) + 1-\alpha\\
&&+ (2-\alpha)\rho_2 \lVert L \rVert_2^{1/2}.
\end{eqnarray*}
Here we used the property that  $\lVert Ax\rVert_\infty \leq \lVert A\rVert_{\infty/\infty}\lVert x\rVert_\infty$, $\lVert \Gamma^{(k)}_{(S^{(k)})^c S^{(k)}}(\Gamma^{(k)}_{S^{(k)}S^{(k)}})^{-1}\rVert_{\infty/\infty}\leq 1-\alpha$, and applied Lemma \ref{lemma:sbdiff} to bound $\lVert \vec{\tilde{U}}_{2,(S^{(k)})^c}\rVert_\infty$ and $\lVert \vec{\tilde{U}}_{2,S^{(k)}}\rVert_\infty$ by $\lVert L\rVert_2^{1/2}$.
We also used $\lVert \vec{\tilde{U}}_{1,S^{(k)}}^{(k)}\rVert_\infty =\lVert \vec{\check{U}}_{1,S^{(k)}}^{(k)}\rVert_\infty\leq 1$ by construction of $\tilde{U}_1$ and the assumption that $\check{\Theta}_{\rho_n}^{(k)}\neq 0$ for $(i,j)\in S^{(k)}$.
It follows by the assumption of the lemma that
\begin{eqnarray*}
\lVert \tilde{U}_{(S^{(k)})^c}^{(k)}\rVert_\infty
&\leq& \frac{2-\alpha}{\rho_n} \frac{\alpha\rho_n}{4} + (1-\alpha)  + (2-\alpha)\rho_2 \lVert L\rVert_2^{1/2}\\
&\leq  &1-\frac{\alpha}{2}- \frac{\alpha^2}{4} + \frac{\alpha^2}{4}
< 1.
\end{eqnarray*}
\end{proof}
\begin{lemma}[Lemma~5 of \citet{MR2836766}]
\label{lemma:rvkmr5}
Suppose that $\lVert \Delta\rVert_\infty \leq 1/(3\kappa_{\Psi}d)$ with $$(\Delta^{(k)})_{(S^{(k)}\cup \{(i,i):i=1,\ldots,p]\})^c} = 0.$$
Then $\lVert H^{(k)}\rVert_{\infty/\infty} \leq 3/2$ where $H^{(k)}\equiv \sum_{j=1}^\infty (-1)^j (\Psi_0^{(k)}\Delta^{(k)})^j, k=1,\ldots,K,$ and $R^{(k)}(\Delta^{(k)})$ has representation
$R^{(k)}(\Delta^{(k)}) = \Psi_0^{(k)} \Delta^{(k)}\Psi_0\Delta H^{(k)} \Psi_0^{(k)}$
with
$\lVert R^{(k)}(\Delta^{(k)})\rVert_\infty \leq (3/2) d \lVert \Delta^{(k)}\rVert_\infty^2 (\kappa_{\Psi})^3$.
\end{lemma}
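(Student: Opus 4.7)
The plan is to follow the Neumann-series route used in Lemma~5 of Ravikumar et al. The starting identity is
$\{\check{\Theta}_{\rho_n}^{(k)}\}^{-1} = (\Theta_0^{(k)} + \Delta^{(k)})^{-1} = (I + \Psi_0^{(k)}\Delta^{(k)})^{-1}\Psi_0^{(k)}$,
using $\Theta_0^{(k)} = (\Psi_0^{(k)})^{-1}$. Provided the operator $\Psi_0^{(k)}\Delta^{(k)}$ has induced $\ell_\infty$-norm strictly less than one, the inverse expands as the convergent Neumann series $\sum_{j=0}^{\infty}(-1)^j(\Psi_0^{(k)}\Delta^{(k)})^j\Psi_0^{(k)}$. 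Subtracting the $j=0$ and $j=1$ terms (which exactly produce $\Psi_0^{(k)} - \Psi_0^{(k)}\Delta^{(k)}\Psi_0^{(k)}$) and factoring out $(\Psi_0^{(k)}\Delta^{(k)})^2\Psi_0^{(k)}$ on the left yields $R^{(k)}(\Delta^{(k)}) = \Psi_0^{(k)}\Delta^{(k)}\Psi_0^{(k)}\Delta^{(k)}(I+H^{(k)})\Psi_0^{(k)}$, which matches the stated representation up to a convention on the lower index of $H^{(k)}$.

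The next step is to verify convergence and bound $\lVert H^{(k)}\rVert_{\infty/\infty}$. The key sparsity observation is that under the support restriction $\Delta^{(k)}_{(S^{(k)}\cup\{(i,i):i=1,\ldots,p\})^c}=0$, each row of $\Delta^{(k)}$ has at most $d$ nonzeros, so $\lVert \Delta^{(k)}\rVert_{\infty/\infty}\leq d\,\lVert \Delta^{(k)}\rVert_\infty$. Combining this with submultiplicativity of the induced norm gives $\lVert \Psi_0^{(k)}\Delta^{(k)}\rVert_{\infty/\infty}\leq \kappa_{\Psi}\,d\,\lVert \Delta^{(k)}\rVert_\infty\leq 1/3$ by hypothesis, so the Neumann series converges. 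A triangle inequality with the resulting geometric sum then gives $\lVert H^{(k)}\rVert_{\infty/\infty}\leq \sum_{j\geq 0}(1/3)^j=3/2$, which is the first claim.

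Finally, the elementwise bound on $R^{(k)}$ must be derived carefully in order to carry only a single factor of $d$. I would bound each summand $\lVert(\Psi_0^{(k)}\Delta^{(k)})^j\Psi_0^{(k)}\rVert_\infty$ for $j\ge 2$ by applying the two mixed-norm inequalities $\lVert AB\rVert_\infty\leq \lVert A\rVert_{\infty/\infty}\lVert B\rVert_\infty$ and $\lVert AB\rVert_\infty\leq \lVert A\rVert_\infty\lVert B^T\rVert_{\infty/\infty}$ (the latter using symmetry of $\Psi_0^{(k)}$, so that its maximum column $\ell_1$-norm also equals $\kappa_{\Psi}$). The factor of $d$ enters exactly once, through $\lVert \Psi_0^{(k)}\Delta^{(k)}\rVert_{\infty/\infty}\leq \kappa_{\Psi} d\lVert\Delta^{(k)}\rVert_\infty$, while two occurrences of the elementwise estimate $\lVert\Psi_0^{(k)}\Delta^{(k)}\rVert_\infty\leq \kappa_{\Psi}\lVert\Delta^{(k)}\rVert_\infty$ and one occurrence of $\lVert\Psi_0^{(k)}\rVert_{\infty/\infty}\leq \kappa_{\Psi}$ account for the $\kappa_{\Psi}^3$ factor. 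For $j>2$ one picks up an additional $\lVert\Psi_0^{(k)}\Delta^{(k)}\rVert_{\infty/\infty}^{j-2}\leq (1/3)^{j-2}$, and summing the resulting geometric series produces the $3/2$ multiplier, yielding $\lVert R^{(k)}(\Delta^{(k)})\rVert_\infty\leq (3/2)\,d\,\kappa_{\Psi}^3\lVert\Delta^{(k)}\rVert_\infty^2$. The main obstacle is precisely this careful routing of the matrix-norm inequalities: a naive, fully submultiplicative bound spends a factor of $d$ at every occurrence of $\Delta^{(k)}$ and would yield an extraneous $d^2$ instead of $d$.
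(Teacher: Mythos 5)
The paper offers no proof of this lemma: it is imported verbatim (as the bracketed attribution indicates) from Lemma~5 of \citet{MR2836766}, and the surrounding text only adapts it (Lemma~\ref{lemma:rvkmr5-2}) by swapping the induced $\ell_\infty$-norm for the spectral norm. Your reconstruction follows exactly the route of that source: the Neumann expansion of $(\Theta_0^{(k)}+\Delta^{(k)})^{-1}=(I+\Psi_0^{(k)}\Delta^{(k)})^{-1}\Psi_0^{(k)}$, convergence via $\lVert \Psi_0^{(k)}\Delta^{(k)}\rVert_{\infty/\infty}\le \kappa_\Psi d\lVert\Delta^{(k)}\rVert_\infty\le 1/3$ using the row-sparsity of $\Delta^{(k)}$ forced by the support restriction, and a mixed elementwise/induced-norm bound on the tail. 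You also correctly noticed that the statement's indexing of $H^{(k)}$ from $j=1$ is inconsistent with both the bound $3/2$ and the representation of $R^{(k)}$; both require the series to start at $j=0$, as in the original source.

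One small inaccuracy in your last paragraph: the factor accounting you describe (one induced-norm factor $\lVert\Psi_0^{(k)}\Delta^{(k)}\rVert_{\infty/\infty}\le\kappa_\Psi d\lVert\Delta^{(k)}\rVert_\infty$ \emph{plus} two elementwise factors $\kappa_\Psi\lVert\Delta^{(k)}\rVert_\infty$ \emph{plus} one $\lVert\Psi_0^{(k)}\rVert_{\infty/\infty}\le\kappa_\Psi$) multiplies out to $d\kappa_\Psi^4\lVert\Delta^{(k)}\rVert_\infty^3$, one power of $\kappa_\Psi\lVert\Delta^{(k)}\rVert_\infty$ too many; moreover a chain bound of the form $\lVert ABC\rVert_\infty\le\lVert A\rVert_{\infty/\infty}\lVert B\rVert_\infty\lVert C^T\rVert_{\infty/\infty}$ can absorb only one elementwise factor per product. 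The correct routing for the leading term is, e.g., $\lVert \Psi_0^{(k)}\Delta^{(k)}\Psi_0^{(k)}\Delta^{(k)}\Psi_0^{(k)}\rVert_\infty\le\lVert\Psi_0^{(k)}\rVert_{\infty/\infty}\,\lVert\Delta^{(k)}\Psi_0^{(k)}\rVert_\infty\,\lVert\Psi_0^{(k)}\Delta^{(k)}\rVert_{\infty/\infty}\le\kappa_\Psi\cdot\kappa_\Psi\lVert\Delta^{(k)}\rVert_\infty\cdot\kappa_\Psi d\lVert\Delta^{(k)}\rVert_\infty$, with the extra $(1/3)^{j-2}$ factors for $j>2$ summing to $3/2$. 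This is a bookkeeping slip, not a conceptual gap; the stated bound is reachable with the tools you name.
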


\begin{lemma}
\label{lemma:rvkmr5-2}
Suppose $\lVert \Delta\rVert_2 \leq 1/(2\max_k\lVert \Psi_0^{(k)}\rVert_2)$ with $(\Delta^{(k)})_{(S^{(k)}\cup \{(i,i):i=1,\ldots,p]\})^c} = 0$.
Then $\lVert H^{(k)}\rVert_{\infty/\infty} \leq 2$ where $H^{(k)}\equiv \sum_{t=1}^\infty (-1)^t (\Psi_0^{(k)}\Delta^{(k)})^t, k=1,\ldots,K,$ and $R^{(k)}(\Delta^{(k)})$ has representation
$R^{(k)}(\Delta^{(k)}) = \Psi_0^{(k)} \Delta^{(k)}\Psi_0\Delta H^{(k)} \Psi_0^{(k)}$
with
$\lVert R^{(k)}(\Delta^{(k)})\rVert_\infty \leq 2 \lambda_{\Theta}^3\lVert \Delta^{(k)}\rVert_2^2 $.
\end{lemma}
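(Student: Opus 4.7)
The plan is to follow the standard Neumann series argument adapted from Lemma \ref{lemma:rvkmr5}, but re-done in the spectral norm instead of the induced $\ell_\infty$-norm, since the hypothesis is now phrased in terms of $\lVert\Delta\rVert_2$.

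First I would verify that the Neumann series for $(I+\Psi_0^{(k)}\Delta^{(k)})^{-1}$ converges. By submultiplicativity of the spectral norm, $\lVert \Psi_0^{(k)}\Delta^{(k)}\rVert_2 \le \lVert\Psi_0^{(k)}\rVert_2\,\lVert\Delta^{(k)}\rVert_2 \le 1/2$, so the series $\sum_{t=0}^\infty(-1)^t(\Psi_0^{(k)}\Delta^{(k)})^t$ converges in spectral norm. This immediately yields
\[
\lVert H^{(k)}\rVert_2 \le \sum_{t=1}^\infty \lVert \Psi_0^{(k)}\Delta^{(k)}\rVert_2^t \le \sum_{t=1}^\infty 2^{-t}=1,
\]
so $\lVert I+H^{(k)}\rVert_2\le 2$. (I read the $\lVert\cdot\rVert_{\infty/\infty}$ in the statement as a slight abuse for the spectral norm, by analogy with Lemma \ref{lemma:rvkmr5}, since the hypothesis controls only $\lVert\Delta\rVert_2$.)

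Next I would derive the algebraic identity for $R^{(k)}(\Delta^{(k)})$. Writing $\check{\Theta}^{(k)}_{\rho_n}=\Theta_0^{(k)}+\Delta^{(k)}=\Theta_0^{(k)}(I+\Psi_0^{(k)}\Delta^{(k)})$ and inverting gives $(\check{\Theta}^{(k)}_{\rho_n})^{-1}=(I+H^{(k)})\Psi_0^{(k)}$, so $R^{(k)}(\Delta^{(k)})=H^{(k)}\Psi_0^{(k)}+\Psi_0^{(k)}\Delta^{(k)}\Psi_0^{(k)}$. Using the self-referential identity $H^{(k)}=-\Psi_0^{(k)}\Delta^{(k)}+(\Psi_0^{(k)}\Delta^{(k)})^2(I+H^{(k)})$, the linear term cancels the added $\Psi_0^{(k)}\Delta^{(k)}\Psi_0^{(k)}$, yielding
\[
R^{(k)}(\Delta^{(k)})=\Psi_0^{(k)}\Delta^{(k)}\Psi_0^{(k)}\Delta^{(k)}(I+H^{(k)})\Psi_0^{(k)},
\]
which is the representation claimed in the lemma (interpreting the $H^{(k)}$ in the statement as $I+H^{(k)}$, or absorbing the identity term through a standard rearrangement).

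Finally, for the elementwise bound, I would use $\lVert M\rVert_\infty\le \lVert M\rVert_2$ together with submultiplicativity of the spectral norm on the above factorisation. This gives
\[
\lVert R^{(k)}(\Delta^{(k)})\rVert_\infty \le \lVert R^{(k)}(\Delta^{(k)})\rVert_2 \le \lVert \Psi_0^{(k)}\rVert_2^{\,3}\,\lVert \Delta^{(k)}\rVert_2^{\,2}\,\lVert I+H^{(k)}\rVert_2 \le 2\,\lVert\Psi_0^{(k)}\rVert_2^{\,3}\,\lVert\Delta^{(k)}\rVert_2^{\,2},
\]
matching the stated bound up to notational identification of $\lambda_\Theta$ with $\max_k\lVert\Psi_0^{(k)}\rVert_2$.

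The main obstacle I anticipate is not the estimates themselves, which are routine geometric-series manipulations, but the bookkeeping required to show the representation in the precise form stated in the lemma and to reconcile the constant $\lambda_\Theta$ appearing in the conclusion. Specifically, since $\lambda_\Theta=\max_k\lVert\Theta_0^{(k)}\rVert_2$ does not directly upper-bound $\lVert\Psi_0^{(k)}\rVert_2$, one must either interpret $\lambda_\Theta$ contextually (as the spectral-norm constant relevant to the correlation matrices), or invoke Condition~\ref{cond:eigen} to pass between the two. Once that reconciliation is in place, the proof reduces to the four steps above and is essentially a one-line computation after the representation is established.
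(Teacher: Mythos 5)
Your proof is correct and follows essentially the same route as the paper, whose own proof is a one-line reference to Lemma~5 of \citet{MR2836766} with the induced $\ell_\infty$-norm replaced by the operator norm in the appropriate inequalities. You also correctly flag the two notational slips in the stated lemma --- the series defining $H^{(k)}$ should start at $t=0$ (equivalently, the representation involves $I+H^{(k)}$), and the constant in the final bound should be $\max_k\lVert\Psi_0^{(k)}\rVert_2^3$ rather than $\lambda_\Theta^3$ --- which the paper's proof glosses over.
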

\begin{proof}
Note that the Neumann series for a matrix $(I-A)^{-1}$ converges if the operator norm of $A$ is strictly less than 1, and that the $\ell_\infty$-norm is bounded by the operator norm. A proof is similar to that of Lemma 5 of \citet{MR2836766} with the induced infinity norm $\lVert \cdot\rVert_{\infty/\infty}$ replaced by the operator norm in appropriate inequalities. 
\end{proof}

The following lemma is similar to the statement of Lemma 6 of \citet{MR2836766}. 
\begin{lemma} 
\label{lemma:rvkmr6}
Suppose that 
\begin{equation*}
r\equiv \frac{4}{\min_k \pi_k}\kappa_{\Gamma}(\max_{k}\lVert \Xi^{(k)}\rVert_\infty + \rho_n +\rho_n\rho_2\lVert L\rVert^{1/2}_2) 
< 
\frac{1}{6d\max\left\{\kappa_{\Psi},\kappa_{\Psi}^3\kappa_{\Gamma}\right\}}, \quad k=1,\ldots, K.
\end{equation*}
Suppose moreover that $(\Theta_0^{(k)}\otimes \Theta_0^{(k)})_{S^{(k)}S^{(k)}}$ are invertible for $k=1,\ldots,K$.
Then with probability $1-2K\exp(-n \min_k\pi_k^2/2)$,
\begin{equation*}
\max_k\lVert \tilde{\Theta}_{\rho_n}^{(k)}-\Theta_0^{(k)}\rVert_\infty  \leq  (3/2)r.
\end{equation*}
\end{lemma}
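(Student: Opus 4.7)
The plan is to adapt the primal-dual witness construction of \citet{MR2836766} to our multi-population setting, repairing the technical gap in their fixed-point argument that stems from the discontinuity of the matrix inverse on the boundary of the positive-definite cone. I would first condition on the event $\min_k n_k/n \ge \min_k\pi_k/2$, which by Lemma \ref{lemma:const6} with $\gamma=\min_k\pi_k/2$ fails with probability at most $2K\exp(-n\min_k\pi_k^2/2)$; on this event $n/n_k\le 2/\min_k\pi_k$, which is the source of the factor $4/\min_k\pi_k$ appearing in the definition of $r$.

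Next, I would recast the oracle KKT system (\ref{eqn:kktOracle})--(\ref{eqn:kktOracle2}) as a fixed-point equation for $\check{\Delta}^{(k)}=\check{\Theta}_{\rho_n}^{(k)}-\Theta_0^{(k)}$ (supported on $S^{(k)}$). Substituting the Neumann expansion $\{\check{\Theta}_{\rho_n}^{(k)}\}^{-1}=\Psi_0^{(k)}-\Psi_0^{(k)}\check{\Delta}^{(k)}\Psi_0^{(k)}+R^{(k)}(\check{\Delta}^{(k)})$ from Lemma \ref{lemma:rvkmr5} and restricting the identity to $S^{(k)}$ yields a linear system in $\vec{\check{\Delta}}^{(k)}_{S^{(k)}}$ whose right-hand side is $-\vec{\Xi}^{(k)}_{S^{(k)}} + \vec{R}^{(k)}_{S^{(k)}}(\check{\Delta}^{(k)}) - (n/n_k)\rho_n\vec{\check{U}}^{(k)}_{1,S^{(k)}} - (n/n_k)\rho_n\rho_2\vec{\check{U}}^{(k)}_{2,S^{(k)}}$. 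The invertibility half of Condition \ref{cond:irrep} lets me invert the coefficient matrix (controlled in induced $\ell_\infty$-norm by $\kappa_\Gamma$) and obtain a map $F^{(k)}:B_r\to\mathbb{R}^{|S^{(k)}|}$, where $B_r=\{\Delta:\|\Delta\|_\infty\le(3/2)r,\ \Delta_{(S^{(k)})^c}=0\}$. The decisive departure from \citet{MR2836766} is that, using the explicit representation $R^{(k)}(\Delta)=\Psi_0^{(k)}\Delta\Psi_0^{(k)}\Delta H^{(k)}\Psi_0^{(k)}$ from Lemma \ref{lemma:rvkmr5} with $H^{(k)}$ a convergent Neumann series in $\Psi_0^{(k)}\Delta$, the map $F^{(k)}$ is a genuinely continuous function of $\Delta$ throughout $B_r$—the hypothesis $(3/2)r\le 1/(6d\max\{\kappa_\Psi,\kappa_\Psi^3\kappa_\Gamma\})$ in the statement ensures uniform convergence of the series. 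The analogous map in \citet{MR2836766} is written through $\{\check{\Theta}^{(k)}\}^{-1}$, which is not defined (let alone continuous) when positive-definiteness fails, so their invocation of Brouwer's theorem is not fully justified.

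To verify $F^{(k)}(B_r)\subseteq B_r$, I would combine three ingredients: the quadratic remainder bound $\|R^{(k)}(\Delta)\|_\infty\le (3/2)d\kappa_\Psi^3\|\Delta\|_\infty^2$ from Lemma \ref{lemma:rvkmr5}, the subgradient bound $\|\vec{\check{U}}^{(k)}_2\|_\infty\le\|L\|_2^{1/2}$ from Lemma \ref{lemma:sbdiff}, and the assumed control of $\|\Xi^{(k)}\|_\infty$. Applying $\kappa_\Gamma$ to the total and inserting $n/n_k\le 2/\min_k\pi_k$ makes the quadratic remainder contribute at most $r/4$ (via the smallness hypothesis), while the linear pieces sum to at most $(5/4)r$; the constants in $r$ are precisely tuned so that the aggregate is at most $(3/2)r$. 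Brouwer's theorem applied to the continuous self-map $F^{(k)}$ on the compact convex set $B_r$ then yields a fixed point $\Delta^{*(k)}\in B_r$; by construction it solves the oracle KKT system, and the uniqueness in Lemma \ref{lemma:RWRY3} forces $\Delta^{*(k)}=\check{\Delta}^{(k)}$. The conclusion $\|\check{\Theta}_{\rho_n}^{(k)}-\Theta_0^{(k)}\|_\infty\le(3/2)r$ follows for every $k$ simultaneously on the sample-size event.

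The main obstacle, as flagged by the authors, is precisely this continuity of the self-map: it is what forces the additional invertibility requirement on $(\Theta_0^{(k)}\otimes\Theta_0^{(k)})_{S^{(k)}S^{(k)}}$ in Condition \ref{cond:irrep}(a), since that condition is what permits the coefficient matrix to be inverted algebraically rather than via $\{\check{\Theta}^{(k)}\}^{-1}$. The remainder is careful bookkeeping—matching the $3/2$ in the ball radius against the constants produced by bounding $R^{(k)}$, the subgradient of the Laplacian norm, and the $n/n_k$ ratio; applying $\kappa_\Gamma$ uniformly in $k$; and taking a union bound for the probabilistic sample-size event across the $K$ subpopulations.
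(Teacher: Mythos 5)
Your high-level plan (condition on the multinomial event, build a fixed-point map from the oracle KKT system, bound the quadratic remainder via Lemma~\ref{lemma:rvkmr5}, and identify the fixed point with $\check{\Theta}_{\rho_n}^{(k)}$ by uniqueness) matches the paper's, but the execution has a genuine gap centered on which matrix you invert and which constant controls it. You solve the linearized system by applying $(\Gamma^{(k)}_{S^{(k)}S^{(k)}})^{-1}$ and assert it is ``controlled in induced $\ell_\infty$-norm by $\kappa_\Gamma$.'' In this paper $\kappa_\Gamma$ is \emph{not} that quantity: it is the maximum absolute column sum of $\{(\Gamma^{(k)})^{-1}\}_{S^{(k)}S^{(k)}}=(\Theta_0^{(k)}\otimes\Theta_0^{(k)})_{S^{(k)}S^{(k)}}$, i.e.\ the submatrix of the inverse rather than the inverse of the submatrix (the paper flags exactly this distinction when comparing to \citet{MR2836766}). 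The constant $r$ in the lemma you are proving is built from the paper's $\kappa_\Gamma$, so your bounds do not follow from the stated hypotheses. The paper's construction avoids inverting $\Gamma^{(k)}_{S^{(k)}S^{(k)}}$ altogether: the map is $A\mapsto h_k(A)\,\Theta_0^{(k)}g_k(A+\Theta_0^{(k)}+\delta I)\Theta_0^{(k)}+A$ restricted to $S^{(k)}$, which after vectorization applies $\{(\Gamma^{(k)})^{-1}\}_{S^{(k)}S^{(k)}}$ directly, so the correct $\kappa_\Gamma$ appears. This also explains the role of the extra invertibility hypothesis, which you mislocate: it is not used to invert a coefficient matrix (the invertibility of $(\Psi_0^{(k)}\otimes\Psi_0^{(k)})_{S^{(k)}S^{(k)}}$ is a separate clause of Condition~\ref{cond:irrep}(a)); it is used at the very end to pass from the fixed-point identity $(\Theta_0^{(k)}g_k(\cdot)\Theta_0^{(k)})_{S^{(k)}}=0$ back to the KKT condition $\{g_k(\cdot)\}_{S^{(k)}}=0$. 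In your construction that hypothesis is never used, which is a signal the route does not prove the lemma as stated.

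Two further points. First, your continuity fix (uniform convergence of the Neumann series on the small ball) is a legitimate alternative to the paper's fix, which instead restricts the domain to $\{A: A+\Theta_0^{(k)}\succeq 0\}$, perturbs by $\delta I$ so the argument of the inverse is strictly positive definite, and inserts the normalization $h_k(A)$ to keep the map inside that domain; the paper's $3/2$ then arises from $\lVert A\rVert_\infty\leq r$ plus the shift $\delta\leq r/2$, whereas you build $3/2$ into the ball radius. But before invoking the uniqueness in Lemma~\ref{lemma:RWRY3} you must check that $\Theta_0^{(k)}+\Delta^{*(k)}$ is positive definite --- that lemma characterizes the unique solution among positive definite matrices, and your ball $B_r$ does not enforce this. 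It does in fact follow from $\lVert\Delta\rVert_2\leq d\lVert\Delta\rVert_\infty<1/(4\kappa_\Psi)\leq\lambda_{\min}(\Theta_0^{(k)})$ under the smallness hypothesis, but the step is missing. Second, your bookkeeping of the self-map bound (``linear pieces at most $(5/4)r$, remainder at most $r/4$'') does not match what the definitions give (the linear pieces contribute $r/2$ on the event $n_k/n\geq\min_k\pi_k/2$); the aggregate still lands below $(3/2)r$, so this is cosmetic, but it should be redone with the correct constants once the $\kappa_\Gamma$ issue is resolved.
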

\begin{proof}
We apply Shauder's fixed point theorem on the event $\min_k\pi_k/2 \leq n_k/n$, which holds with probability $1-2K\exp(-n \min_k\pi_k^2/2)$ by Lemma \ref{lemma:const6} with $\epsilon = \min_k\pi_k/2$.  
We first define the function $f_k$ and its domain $\mathcal{D}_k $ to which the fixed point theorem applies.
Let $\overline{S}^{(k)} = S^{(k)}\cup \{(i,i):1\leq i\leq p\}$, and define
\begin{equation*}
\mathcal{D}_k = \{A \in \mathbb{R}^{p\times p}: A = A^T, \,\,  
x^T(A+\Theta_0^{(k)})x \geq 0, \ \forall x\in\mathbb{R}^p,\,\,
\lVert A_{\overline{S}^{(k)}}\rVert_\infty \leq r,\,\, 
A_{(\overline{S}^{(k)})^{c}}=0 \}.
\end{equation*}
This set is a convex, compact subset of the set of all symmetric matrices.

Let $\check{U}_l^{(k)} \in \mathbb{R}^{p\times p},l=1,2,$ be zero-filled matrices whose $(i,j)$-element is $\check{U}_{l,ij}^{(k)}$ in Lemma \ref{lemma:RWRY3} if $(i,j)\in S^{(k)}$ and zero otherwise. 
Define the map  $g_k$ on the set of invertible matrices in $\mathbb{R}^{p\times p} $ by $g_k(B) = (n_k/n)(B^{-1}-\hat{\Psi}_n^{(k)})-\rho_{n}\check{U}_{1}^{(k)}- \rho_n\rho_2\check{U}_2^{(k)}$. 
Note that $\{g_k(\check{\Theta}_{\rho_n}^{(k)})\}_{S^{(k)}} = 0$ is the KKT condition for the restricted problem (\ref{eqn:oracle}).
Let $\delta>0$ be a constant such that $\delta<\min\{1/2,1/\{10(4dr +1)\}\}r$ and $\delta+r\leq 1/\{6d\max\{\kappa_{\Psi},\kappa_{\Psi}^3\kappa_{\Gamma}\}$.
Define a continuous function $f_k:\mathcal{D}_k\mapsto \mathcal{D}_k$ as 
\begin{eqnarray*}
(f_k(A))_{ij}= 
\left\{\begin{array}{ll}
\left\{h_k(A)\Theta^{(k)}_0g_k(A+\Theta^{(k)}_0+\delta I)\Theta_0^{(k)}+ A\right\}_{ij}, & (i,j)\in S^{(k)},i\neq j\\
0, & (i,j) \in (S^{(k)})^c,i\neq j,\\
A_{ij} & i=j,
\end{array}
\right.
\end{eqnarray*}
where 
\begin{equation*}
h_k(A) \equiv \frac{2^{-1}\min\{\lambda_1(A+\Theta_0^{(k)}),2^{-1}\} + 2^{-1}}{\max\{|\lambda_1(\{\Theta_0^{(k)}g_k(A+\Theta^{(k)}_0+\delta I)\Theta_0^{(k)}\}_{S^{(k)}}- I)|,1\}}.
\end{equation*}
Let $\tilde{f}_k(A) = h_k(A)\Theta^{(k)}_0g_k(A+\Theta^{(k)}_0+\delta I)\Theta_0^{(k)}$. 
Then $f_k(A) = (\tilde{f}_k(A))_{S^{(k)}} + A$ for $A\in\mathcal{D}_k$.

We now verify the conditions of Shauder's fixed point theorem below. Once conditions are established, the theorem yields that $f_k(A) = A$.
Since $(f_k(A))_{(\overline{S}^{(k)})^c}=A$ for any $A\in\mathcal{D}_k$,  and $h_k(A)>0$, the solution $A$ to $f_k(A) = A$ is determined by $(\Theta_0^{(k)}g_k(A+\Theta_0^{(k)}+\delta I)\Theta_0^{(k)})_{S^{(k)}}=0$. 
Vectorizing this equation to obtain $(\Theta_0^{(k)}\otimes \Theta_0^{(k)})_{S^{(k)}S^{(k)}} \{g_k(A+\Theta_0^{(k)}+\delta I)\}_{S^{(k)}} =0$, it follows from the invertibility of $(\Theta_0^{(k)}\otimes \Theta_0^{(k)})_{S^{(k)}S^{(k)}}$ that $\{g_k(A+\Theta_0^{(k)}+\delta I)\}_{S^{(k)}} = 0$. 
By the uniqueness of the KKT condition, the solution is $A = \check{\Theta}_{\rho_n}^{(k)} - \Theta_0^{(k)} - \delta I$. 
Since $A\in \mathcal{D}_k$, and $\delta<r/2$, we conclude $\lVert \check{\Theta}_{\rho_n}^{(k)} - \Theta_0^{(k)}\rVert_\infty \leq (3/2)r$. 

In the following, we write $\vec{A} = \mbox{vec}(A)$ for a matrix $A$ for notational convenience. 
For $J\subset \{(i,j):i,j=1,\ldots,p\}$, $\mbox{vec}(A)_J$ should be understood as $\vec{A}_J$.

The function $f_k$ is continuous on $\mathcal{D}_k$. 
To see this, note first that $A+\Theta^{(k)}_0+\delta I$ is positive definite for every $A\in \mathcal{D}_k$ so that the inversion is continuous.
Note also that all elements in the matrices involved with eigenvalues in $h_k(A)$ are uniformly bounded in $\mathcal{D}_k$, and hence the eigenvalues are also uniformly bounded.

To show that $f_k(A)\in\mathcal{D}_k$, first we show that $f_k(A)+\Theta_0^{(k)}$ is positive semidefinite.
This follows  because for any $x\in \mathbb{R}^p$ 
\begin{eqnarray*}
&&x^T(f_k(A)+\Theta_0^{(k)})x \\
&&=x^T\{(\tilde{f}_{k}(A))_{S^{(k)}}-I\}x +x^T(A+\Theta_0^{(k)})x+x^Tx  \\
&& \geq h_k(A)\lambda_1(\{\Theta_0^{(k)}g_k(A+\Theta^{(k)}_0+\delta I)\Theta_0^{(k)}\}_{S^{(k)}}-I)\lVert x\rVert^2 +\lambda_1(A+\Theta_0^{(k)})\lVert x\rVert^2 +  \lVert x\rVert^2\geq 0.
\end{eqnarray*}
To see this, note that if $\lambda_A\equiv \lambda_1(\{\Theta_0^{(k)}g_k(A+\Theta^{(k)}_0+\delta I)\Theta_0^{(k)}\}_{S^{(k)}}-I)$ is positive, then the inequality easily follows. 
On the other hand, if $\lambda_A< -1$, we have
\begin{eqnarray*}
h_k(A)\lambda_A \lVert x\rVert^2
&\geq& -2^{-1}\min\{\lambda_1(A+\Theta_0^{(k)}),2^{-1}\}\lVert x\rVert^2 - 2^{-1}\lVert x\rVert^2\\
&\geq &-(\lambda_1(A+\Theta_0^{(k)})/2 +1/2)\lVert x\rVert^2.
\end{eqnarray*}
Lastly, if $-1\leq\lambda_A <0$, we have
\begin{eqnarray*}
h_k(A)\lambda_A \lVert x\rVert^2
&\geq &- |\lambda_A| [2^{-1}\min\{\lambda_1(A+\Theta_0^{(k)}),2^{-1}\}+1/2]\lVert x\rVert^2\\
&\geq &- |\lambda_A|(\lambda_1(A+\Theta_0^{(k)})/2 +1/2)\lVert x\rVert^2.
\end{eqnarray*}

Next, we show that  $\lVert f_k(A)_{\overline{S}^{(k)}}\rVert_\infty\leq r$.
Because $\diag(f_k(A)) = \diag(A)$, we have only to show $\lVert f_k(A)_{S^{(k)}}\rVert_\infty\leq r$.
Since $\delta+r\leq 1/\{6d\max\{\kappa_{\Psi},\kappa_{\Psi}^3\kappa_{\Gamma}\}$, we have
\begin{equation*}
\lVert \Psi_0^{(k)}(A+\delta I)\rVert_{\infty/\infty} 
\leq \kappa_{\Psi}d \lVert A+\delta I\rVert_\infty 
\leq \kappa_{\Psi}d (r+\delta) \leq 1/3.
\end{equation*}
It follows from Lemma \ref{lemma:rvkmr5} that
\begin{eqnarray*}
R(A+\delta I) = (A+\delta I + \Theta_0^{(k)})^{-1} -\Psi_0^{(k)} + \Psi_0^{(k)} (A+\delta I) \Psi_0^{(k)} = \{\Psi_0^{(k)}(A+\delta I)\}^2 H^{(k)} \Psi_0^{(k)}.
\end{eqnarray*}
Thus, adding and subtracting $\Psi_0^{(k)}$ yields
\begin{eqnarray*}
\tilde{f}_k(A) +A
&=& h_k(A)
\Theta_0^{(k)}((n_k/n)\{\Psi_0^{(k)}(A+\delta I)\}^2 H^{(k)} \Psi_0^{(k)}- (n_k/n)\Xi^{(k)} -\rho_n\check{U}_1^{(k)}- \rho_n\rho_2 \check{U}_2^{(k)})\Theta_0^{(k)}\\
 &&+ (1-(n_k/n)h_k(A))A - (n_k/n)\delta h_k(A)I.
\end{eqnarray*}
Vectorization and restriction on $S^{(k)}$ gives
\begin{eqnarray}
&&\lVert\mbox{vec}(f_k(A))_{S^{(k)}}\rVert_\infty
=\lVert\mbox{vec}(\tilde{f}_k(A)+A)_{S^{(k)}}\rVert_\infty
\nonumber\\
&&\leq (n_k/n)h_k(A)\lVert\{(\Gamma^{(k)})^{-1}\}_{S^{(k)}S^{(k)}}\mbox{vec}(\{\Psi_0^{(k)}(A+\delta I)\}^2 H^{(k)} \Psi_0^{(k)})_{S^{(k)}}\rVert_\infty\nonumber\\
&&\quad +h_k(A)\lVert\{(\Gamma^{(k)})^{-1}\}_{S^{(k)}S^{(k)}}\{\mbox{vec}((n_k/n)\Xi^{(k)})_{S^{(k)}}+\rho_n\mbox{vec}(\check{U}_1^{(k)})_{S^{(k)}}+\rho_n\rho_2\mbox{vec}(\check{U}_2^{(k)})_{S^{(k)}}\}\rVert_\infty \nonumber\\
&&\quad + (1-(n_k/n)h_k(A))\lVert \mbox{vec}(A)_{S^{(k)}}\rVert_\infty + (n_k/n)\delta,
\label{eqn:fxdthmr1}
\end{eqnarray}
where $\{(\Gamma^{(k)})^{-1}\}_{S^{(k)}S^{(k)}} = (\Theta^{(k)}\otimes \Theta_0^{(k)})_{S^{(k)}S^{(k)}}$.
Here we used $h_k(A) \leq (1/4+1/2)/1 =3/4$.
For the first term of the upper bound in (\ref{eqn:fxdthmr1}), it follows by the inequality $\lVert Ax\rVert_\infty \leq \lVert A\rVert_{\infty/\infty}\lVert x\rVert_\infty$ for $A\in\mathbb{R}^{p\times p}$ and $x\in\mathbb{R}^p$, Lemma \ref{lemma:rvkmr5} and the choice of $\delta$ satisfying $\delta+r\leq 1/\{6d\max\{\kappa_{\Psi},\kappa_{\Psi}^3\kappa_{\Gamma}\}$ that
\begin{eqnarray*}
&&\lVert\{(\Gamma^{(k)})^{-1}\}_{S^{(k)}S^{(k)}}\mbox{vec}(\{\Psi_0^{(k)}(A+\delta I)\}^2 H^{(k)} \Psi_0^{(k)})_{S^{(k)}}\rVert_\infty\\
&&\leq \kappa_{\Gamma}\lVert R^{(k)}(A+\delta I)\rVert_\infty 
\leq \kappa_{\Gamma}\frac{3}{2}d\lVert A + \delta I\rVert_\infty^2\kappa_{\Psi}^3
\leq \kappa_{\Gamma}\frac{3}{2}d\lVert A + \delta I\rVert_\infty(r+\delta)\kappa_{\Psi}^3\\
&&\leq (r+\delta)/4.
\end{eqnarray*}
For the second term, it follows by the assumption, the inequality that $\lVert Ax\rVert_\infty \leq \lVert A\rVert_{\infty/\infty}\lVert x\rVert_\infty$ for $A\in\mathbb{R}^{p\times p}$ and $x\in\mathbb{R}^p$, and Lemma \ref{lemma:sbdiff} that 
\begin{eqnarray*}
&&\lVert\{(\Gamma^{(k)})^{-1}\}_{S^{(k)}S^{(k)}}\{(n_k/n)\mbox{vec}(\Xi^{(k)})_{S^{(k)}}+\rho_n\mbox{vec}(\check{U}_1^{(k)})_{S^{(k)}}+\rho_n\rho_2\mbox{vec}(\check{U}_2^{(k)})_{S^{(k)}}\}\rVert_\infty\\
&&\leq \kappa_{\Gamma}(\lVert \Xi^{(k)} + \rho_n + \rho_n\rho_2 \lVert L\rVert_2^{1/2})
= (\min_k \pi_k)r/4\leq (n_k/n)r/2.
\end{eqnarray*}
Thus, we can further bound $\lVert\mbox{vec}((\tilde{f}_k(A)+A)_{S^{(k)}})\rVert_\infty$ by
\begin{equation}
\frac{n_k}{n}h_k(A) \frac{r+\delta}{4} + \frac{n_k}{n}h_k(A)\frac{r}{2} + \left(1-\frac{n_k}{n}h_k(A)\right) r + \frac{n_k}{n}\delta 
= r\left\{1- \frac{n_k}{n}\frac{h_k(A)}{4}\right\} +  \frac{n_k}{n}\left\{1+\frac{h_k(A)}{4}\right\}\delta
\label{eqn:fxdthmr2}
\end{equation}

Noting that $\delta \leq r/2$, a similar reasoning  shows that 
\begin{eqnarray*}
&&\lVert (\Theta_0^{(k)} g_k(A+\Theta_0^{(k)} + \delta I)\Theta_0^{(k)})_{S^{(k)}}\rVert_\infty \\
&&\leq \lVert A_{S^{(k)}}\rVert_\infty + \lVert \Theta_0^{(k)} g_k(A+\Theta_0^{(k)} + \delta I)\Theta_0^{(k)} + A)_{S^{(k)}}\rVert_\infty\\
&&\leq (n_k/n)\lVert\{(\Gamma^{(k)})^{-1}\}_{S^{(k)}S^{(k)}}\mbox{vec}(\{\Psi_0^{(k)}(A+\delta I)\}^2 H^{(k)} \Psi_0^{(k)})_{S^{(k)}}\rVert_\infty\nonumber\\
&&\quad +\lVert\{(\Gamma^{(k)})^{-1}\}_{S^{(k)}S^{(k)}}\{(n_k/n)\mbox{vec}(\Xi^{(k)})_{S^{(k)}}+\rho_n\mbox{vec}(\check{U}_1^{(k)})_{S^{(k)}}+\rho_n\rho_2\mbox{vec}((\check{U}_2^{(k)})_{S^{(k)}})\}\rVert_\infty \nonumber\\
&&\quad + (2-(n_k/n))\lVert \mbox{vec}(A)_{S^{(k)}}\rVert_\infty + (n_k/n)\delta\\
&& \leq \frac{r+\delta}{4} + \frac{r}{2} +2r +\delta 
\leq 4r.
\end{eqnarray*}
Thus, the inequality $\lVert B\rVert_2\leq \lVert B\rVert_{\infty/\infty}$ for $B=B^T$ implies that
\begin{eqnarray*}
&&|\lambda_1(\{\Theta_0^{(k)}g_k(A+\Theta^{(k)}_0+\delta I)\Theta_0^{(k)}\}_{S^{(k)}}- I)|\\
&&\leq \lVert \lambda_1(\{\Theta_0^{(k)}g_k(A+\Theta^{(k)}_0+\delta I)\Theta_0^{(k)}\}_{S^{(k)}})\rVert_2 +  1\\
&&\leq \lVert \lambda_1(\{\Theta_0^{(k)}g_k(A+\Theta^{(k)}_0+\delta I)\Theta_0^{(k)}\}_{S^{(k)}})\rVert_{\infty/\infty} +  1\\
&& \leq 4dr+1.
\end{eqnarray*}
Hence $h_k(A) \geq 1/(8dr+2)$ for every $A\in \mathcal{D}_k$. 

Now (\ref{eqn:fxdthmr2}) is further bounded by $r$:
\begin{eqnarray*}
&&r\left\{1- \frac{n_k}{n}\frac{h_k(A)}{4}\right\} +  \frac{n_k}{n}\left\{1+\frac{h_k(A)}{4}\right\}\delta\\
&&\leq r\left\{1- \frac{n_k}{n}\frac{h_k(A)}{4}\right\} + \frac{n_k}{n}\left\{1+\frac{h_k(A)}{4}\right\}
\frac{r}{10(4dr +1)}\\
&&\leq r\left\{1- \frac{n_k}{n}\frac{h_k(A)}{4}\right\} + \frac{n_k}{n}\left\{1+\frac{h_k(A)}{4}\right\}
\frac{h_k(A)r}{5}\\
&&\leq r -
\frac{n_k}{n}\frac{h_k(A)-h_k^2(A)}{20}r \leq r.
\end{eqnarray*}
Here we used the fact that $\delta\leq r/\{10(4dr +1)\}$ and $1/(8dr +2)\leq h_k(A)<1$.
Thus, $\lVert (f_k(A))_{S^{(k)}}\rVert_\infty \leq r$. 

Since $(f_k(A))_{(S^{(k)})^c} = 0$ by definition, all the conditions for the fixed point theorem are established. 
This completes the proof.
\end{proof}

We now give a proof of Theorem \ref{thm:modelconst}.
Note that Condition \ref{cond:rhos} implies that
\begin{equation*}
\rho_n
< \min\left\{\frac{\min_k\pi_k}{72d\kappa_\Gamma}\min\left\{\frac{1}{\kappa_\Psi},\frac{1}{\kappa_\Psi^3\kappa_\Gamma},\frac{\min_k \pi_k }{56\kappa_\Psi^3\kappa_{\Gamma}}\alpha\right\},\frac{c_8}{6},\frac{c_9\min_k\sqrt{d_k}}{12}\right\}.
\end{equation*}
\begin{proof}[Proof of Theorem \ref{thm:modelconst}]
We prove that the oracle estimator $\check{\Theta}_{\rho_n}$ satisfies (I) the model selection consistency and (II) the KKT conditions of the original problem (\ref{eqn:originallasso}) with $(\check{\Theta}_{\rho_n}, \tilde{U}_1,\tilde{U_2})$.
The model selection consistency of $\hat{\Theta}_{\rho_n} =\check{\Theta}_{\rho_n}$ then follows by the uniqueness of the solution to the original problem. 
The following discussion is on the event that $\min_k\pi_k/2\leq n_k/n,k=1,\ldots,K,$ and $\max_k\lVert \Xi^{(k)} \rVert_\infty\leq \alpha/8$. 
Note that this event has probability approaching 1 by Lemmas \ref{lemma:const4} and  \ref{lemma:const6}.

First we obtain an $\ell_\infty$-bound of the error of the oracle estimator.
Note that by Condition \ref{cond:rhos} and the fact that $\alpha\in [0,1)$
\begin{eqnarray*}
\frac{\alpha}{8} + 1 + \rho_2\lVert L\rVert_2^{1/2}
\leq \frac{\alpha}{8} + 1 + \frac{\alpha^2}{4(2-\alpha)} \leq 3. 
\end{eqnarray*}
Thus, it follows from Condition \ref{cond:rhos} that 
\begin{eqnarray*}
(4/\min_k\pi_k)\kappa_{\Gamma}(\lVert \Xi^{(k)}\rVert_\infty + \rho_n + \rho_n\rho_2\lVert L\rVert_2^{1/2}) 
&<& \frac{12 \kappa_{\Gamma}}{\min_k\pi_k}\frac{\min_k\pi_k}{72d\kappa_{\Gamma}}\min\left\{\frac{1}{\kappa_{\Psi}},\frac{1}{\kappa_{\Psi}^3\kappa_{\Gamma}}\right\}\\
&=&\frac{1}{6d\max\{\kappa_\Psi,\kappa_\Psi^3\kappa_{\Gamma}\}}.
\end{eqnarray*}
Because $(\Theta_0^{(k)}\otimes \Theta_0^{(k)})_{S^{(k)}S^{(k)}}$ is invertible by Condition \ref{cond:irrep}, we can apply Lemma \ref{lemma:rvkmr6} to obtain $\lVert \check{\Theta}_{\rho_n}^{(k)}-\Theta_0^{(k)}\rVert_\infty \leq (6/\min_k\pi_k)\kappa_{\Gamma}(\lVert \Xi^{(k)}\rVert_\infty + \rho_n + \rho_n\rho_2\lVert L\rVert_2^{1/2})$ with probability approaching 1.

As a consequence of the $\ell_\infty$-bound, $\check{\Theta}_{\rho_n,ij}\neq 0$ for $(i,j)\in S$, because $\lVert \check{\Theta}_{\rho_n}^{(k)}-\Theta_0^{(k)}\rVert_\infty\leq 3 \rho_n \leq c_8/2 <\min_{k=1,\ldots,K,i\neq j}|\theta_{0,ij}^{(k)}|$ by Conditions \ref{cond:nullL} and \ref{cond:rhos}.
This establishes the model selection consistency of the oracle estimator.

Next, we show that the Oracle estimator satisfies the KKT condition of the original problem (\ref{eqn:originallasso}).
As the first step, we prove $\tilde{U}_{1,ij}^{(k)}\in \partial \check{\Theta}_{\rho_n}^{(k)}$ for every $i,j,k$ with probability approaching 1.
Since $\check{\Theta}_{\rho_n,ij}\neq 0$ for $(i,j)\in S$ with probability approaching 1, $\tilde{U}_{1,ij}^{(k)} = \check{U}_{1,ij}^{(k)}$ for $(i,j)\in S^{(k)}$ by construction.
For $(i,j)\in (S^{(k)})^c$, we need to prove $|\tilde{U}_{1,ij}^{(k)}|< 1$ for every $i,j,k$.
To this end, it suffices to verify that $\lVert R^{(k)}(\check{\Theta}_{\rho_n}^{(k)}-\Theta_0^{(k)})\rVert_\infty\leq \alpha/8$ and apply Lemma \ref{lemma:strictDual}.
Applying Lemma~\ref{lemma:rvkmr5} with $\lVert \check{\Theta}_{\rho_n}^{(k)}-\Theta_0^{(k)}\rVert_\infty \leq (6/\min_k\pi_k)\kappa_{\Gamma}(\lVert \Xi^{(k)}\rVert_\infty + \rho_n + \rho_n\rho_2\lVert L\rVert_2^{1/2})$ and Condition \ref{cond:rhos} gives
\begin{eqnarray*}
&&\lVert R^{(k)}(\check{\Theta}_{\rho_n}^{(k)}-\Theta_0^{(k)})\rVert_\infty
\leq \frac{3}{2}d\kappa_{\Psi}^3 \lVert \check{\Theta}_{\rho_n}^{(k)}-\Theta_0^{(k)}\rVert_\infty^2 
\leq  \frac{3}{2}d\kappa_{\Psi}^3 \frac{324\kappa_{\Gamma}^2}{\min_k\pi_k^2}\rho_n^2\\
&&\leq \frac{486d\kappa_{\Psi}^3\kappa_{\Gamma}^2}{\min_k\pi_k^2} \left\{\frac{\min_k\pi_k}{72d\kappa_{\Gamma}} \frac{\min_k\pi_k}{56\kappa_\Psi^3\kappa_{\Gamma}}\alpha\right\} \rho_n \leq \frac{\alpha}{8}\alpha.
\end{eqnarray*}

Next, we prove that $\tilde{U}_{2,ij}\in \partial \sqrt{\check{\Theta}_{\rho_n,ij}L\check{\Theta}_{\rho_n,ij}}$ for every $(i,j)$.
For $(i,j)$ with $\omega_{0,ij}^{(k)}\neq 0$ for all $k=1,\ldots,K$, $\tilde{U}_{2,ij} = \check{U}_{\rho_n}\in \partial \sqrt{\check{\Theta}_{\rho_n,ij}L\check{\Theta}_{\rho_n,ij}}$. 
For $(i,j)$ with $\Omega_{0,ij} =0$, $\tilde{U}_{2,ij} = 0\in \partial \sqrt{\check{\Theta}_{\rho_n,ij}L\check{\Theta}_{\rho_n,ij}}$ by Lemma \ref{lemma:sbdiff}.
For $(i,j)$ with $\Omega_{0,ij}\neq 0$ and $\omega_{0,ij}^{(k')}=0$ for some $k'$, 
\begin{equation*}
\tilde{U}_{2,ij} = L\check{\Theta}_{\rho_n,ij}/\sqrt{\check{\Theta}_{\rho_n,ij}L\check{\Theta}_{\rho_n,ij}}\in \partial \sqrt{\check{\Theta}_{\rho_n,ij}L\check{\Theta}_{\rho_n,ij}}
\end{equation*}
if $L\check{\Theta}_{\rho_n,ij}\neq 0$.
To see $L\check{\Theta}_{\rho_n,ij}\neq 0$ holds with probability approaching 1,
let $(k,k')\in S$ with $k\neq k'$ such that $\Theta_{0,ij}^{(k)}/\sqrt{d_k}-\Theta_{0,ij}^{(k')}/\sqrt{d_{k'}}\neq 0$.
This pair $(k,k')$ exists by Condition \ref{cond:nullL} and the assumption $L\Theta_{0,ij}\neq 0$. 
We assume without loss of generality $\theta_{0,ij}^{(k)}/\sqrt{d_k}-\theta_{0,ij}^{(k')}/\sqrt{d_{k'}} > 0$.
Since $\lVert \check{\Theta}_{\rho_n}^{(k)} -\Theta_{0}^{(k)}\rVert_\infty \leq 3\rho_n\leq c_9\min_k\sqrt{d_k}/12$, it follows from Condition \ref{cond:rhos} that 
\begin{eqnarray*}
\frac{\check{\theta}_{\rho_n,ij}^{(k)}}{\sqrt{d_k}} - \frac{\check{\theta}_{\rho_n,ij}^{(k')}}{\sqrt{d_{k'}}}
&\geq& \frac{\theta_{0,ij}^{(k)}}{\sqrt{d_k}} - \frac{\theta_{0,ij}^{(k')}}{\sqrt{d_{k'}}} - 3\rho_n \left(\frac{1}{\sqrt{d_k}} + \frac{1}{\sqrt{d_{k'}}}\right)\\
&\geq& c_9 - 3\rho_n \left(\max_{W_{k,k'}\neq 0}\frac{1}{\sqrt{d_k}}+\frac{1}{\sqrt{d_{k'}}}\right)
\geq  \frac{1}{2}c_9.
\end{eqnarray*}
Hence, $\check{\Theta}_{\rho_n,ij}^TL\check{\Theta}_{\rho_n,ij} \geq W_{kk'}c_9^2/4>0$ or $L\check{\Theta}_{\rho_n,ij}\neq 0$.

Finally, we show that Equation~\eqref{eqn:kktmat} for the KKT condition holds. 
For the $(i,j)$-element of the equation with $\Omega_{0,ij}=0$, this equation hold by construction for every $k=1,\ldots,K$.
For the $(i,j)$-element with $\omega_{0,ij}^{(k)}\neq 0$ for every $k=1,\ldots,K$, the equation holds for every $k=1,\ldots,K$, because it is the equation for the KKT condition of the corresponding element in a restricted problem (\ref{eqn:oracle}).
For $(i,j)$-element with $\Omega_{0,ij}\neq 0$ and $\omega_{0,ij}^{(k')}=0$ for some $k'$, note that $\check{\Theta}_{\rho_n,ij}\neq 0$ with probability approaching 1 and that the rearrangement in $\Theta_{ij}$ and corresponding exchange of rows and columns of $L$ for each $i,j$ does not change the original and restricted optimization problems (\ref{eqn:originallasso}) and (\ref{eqn:oracle}).
Thus, with the appropriate rearrangement of elements and exchange of rows and columns, $\tilde{U}_{2,ij}^{(k)}$ with $\omega_{0,ij}^{(k)}\neq 0$ is in fact $\check{U}_{2,ij}^{(k)}$. Thus for such $k$ the equation holds because of the corresponding KKT condition in the restricted problem (\ref{eqn:oracle}). For other $k$, the equation holds by construction.
We thus conclude the oracle estimator satisfies the KKT condition of the original problem (\ref{eqn:originallasso}).
This completes the proof.
\end{proof}

\begin{proof}[Proof of Corollary \ref{col:rate}]
In the proof of Theorem \ref{thm:modelconst}, the $\ell_\infty$-bound of the error yields
\begin{equation*}
\lVert \hat{\Theta}_{\rho_n}^{(k)}-\Theta_0^{(k)} \rVert_\infty = O_P\left(\kappa_{\Gamma}\rho_n\right).
\end{equation*}
Note that if one of two matrices $A$ and $B$ is diagonal, $\lVert AB\rVert_\infty\leq \lVert A\rVert_\infty \lVert B\rVert_\infty$. 
Thus, we can proceed in the same way as in the proof of Theorem 2 of \citet{MR2417391} to conclude that
\begin{equation*}
\lVert \hat{\Omega}_n^{(k)}-\Omega_0^{(k)} \rVert_\infty = O_P\left(\kappa_{\Gamma}\rho_n\right).
\end{equation*}
The result follows from a similar argument to the proof of Corollary 3 in \citet{MR2836766}.
\end{proof}

\begin{proof}[Proof of Corollary \ref{col:modelconst}]
It follows from Condition~\ref{cond:rhos2} and Lemma~\ref{lemma:l1const} applied to $\check{\Theta}_{\rho_n}$ that $\lVert \check{\Theta}_{\rho_n}^{(k)}-\Theta_0^{(k)}\rVert_2 \leq 1/(2\lambda_\Theta)$. Then we can apply Lemma \ref{lemma:rvkmr5-2} instead of Lemma \ref{lemma:rvkmr5}. The rest is similar to the proof of Theorem \ref{thm:modelconst}.
\end{proof}

\subsection*{Hierarchical Clustering}
For simplicity, we prove Theorem \ref{thm:clst} for the case of $K=2$; the proof can be easily generalized to $K>2$.
Let $X$ and $Y$ be the random variable from the first and  subpopulation, respectively. 
Suppose that $X=(X_1,\ldots,X_p)^T\sim N(\mu_X,\Sigma_X)$ with $\mu_X = (\mu_{1,X},\ldots, \mu_{p,X})$ and the spectral decomposition $\Sigma_X=Q_X\Lambda_X Q^T_X$ of $\Sigma_X$ where $\lambda_{1,X},\ldots,\lambda_{p,X}$ are the eigenvalues of $\Sigma_X$ and that $Y\sim N(\mu_Y,\Sigma_Y)$ with $\mu_Y = (\mu_{1,Y},\ldots, \mu_{p,Y})$ and the spectral decomposition $\Sigma_Y=Q_Y\Lambda_Y Q^T_Y$ of $\Sigma_Y$ where $\lambda_{1,Y},\ldots,\lambda_{p,Y}$ are the eigenvalues of $\Sigma_Y$.
Define $Z=(X-Y)=(Z_1,\ldots,Z_p)^T\sim N(\mu_Z,\Sigma_Z)$ with $\mu_Z = (\mu_{1,Z},\ldots, \mu_{p,Z})$ and the spectral decomposition $\Sigma_Z=Q_Z\Lambda_Z Q^T_Z$ of $\Sigma_Z$ where $\lambda_{1,Z},\ldots,\lambda_{p,Z}$ are the eigenvalues of $\Sigma_Z$.
Let $\tilde{X} =(\tilde{X}_{1},\ldots,\tilde{X})^T= \Lambda_X^{1/2}Q_X^T\Sigma^{-1/2}_XX$, $\tilde{Y} =(\tilde{Y}_{1},\ldots,\tilde{Y})^T= \Lambda_Y^{1/2}Q_Y^T\Sigma^{-1/2}_YY$ and $\tilde{Z} =(\tilde{Z}_{1},\ldots,\tilde{Z})^T= \Lambda_Z^{1/2}Q_Z^T\Sigma^{-1/2}_ZZ$.
Then $\tilde{X}\sim N(\tilde{\mu}_X,\Lambda_X)$, $\tilde{Y}\sim N(\tilde{\mu}_Y,\Lambda_Y)$ and $\tilde{Z}\sim N(\tilde{\mu}_Z,\Lambda_Z)$
where $\tilde{\mu}_X = (\tilde{\mu}_{1,X},\ldots,\tilde{\mu}_{p,X})^T\equiv \Lambda_X^{1/2}Q_X^T\Sigma_X^{-1/2}\mu_X$, $\tilde{\mu}_Y = (\tilde{\mu}_{1,Y},\ldots,\tilde{\mu}_{p,Y})^T\equiv \Lambda_Y^{1/2}Q_Y^T\Sigma_Y^{-1/2}\mu_Y$ and $\tilde{\mu}_Z = (\tilde{\mu}_{1,Z},\ldots,\tilde{\mu}_{p,Z})^T\equiv \Lambda_Z^{1/2}Q_Z^T\Sigma_Z^{-1/2}\mu_Z$.
Let also
\begin{eqnarray*}
&&\mu^2_{\tilde{X}} = \lVert \tilde{\mu}_X^2\rVert/p, \quad
\mu^2_{\tilde{Y}} = \lVert \tilde{\mu}_Y^2\rVert/p, \quad
\mu^2_{\tilde{Z}} = \lVert \tilde{\mu}_Z^2\rVert/p, \\
&&\overline{\lambda}_{X} = \sum_{k=1}^p\lambda_{k,X}/p,\quad
\overline{\lambda}_{Y} = \sum_{k=1}^p\lambda_{k,Y}/p,\quad
\overline{\lambda}_{Z} = \sum_{k=1}^p\lambda_{k,Z}/p.
\end{eqnarray*}

\begin{lemma}[Lemma 1 of \citet{borysov2014}]
\label{lemma:BHM1}
Let $W_1,\ldots,W_p$ be independent non-negative random variables with finite second moments.
Let $S=\sum_{j=1}^p(W_j-\E W_j)$ and $v=\sum_{j=1}^p\E W_j^2$. Then for any $t>0$
  $P(S\leq -t)\leq \exp(-t^2/(2v))$.
\end{lemma}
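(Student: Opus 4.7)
The plan is to prove this via the Cram\'er--Chernoff (exponential moment) method, exploiting the non-negativity of the $W_j$ to get a clean one-sided bound. Write $T \equiv -S = \sum_{j=1}^p (\E W_j - W_j)$, so the goal becomes $P(T \geq t) \leq \exp(-t^2/(2v))$. By Markov's inequality applied to the moment generating function, for any $\lambda > 0$,
\begin{equation*}
P(T \geq t) \leq e^{-\lambda t}\,\E e^{\lambda T} = e^{-\lambda t}\prod_{j=1}^p e^{\lambda \E W_j}\,\E e^{-\lambda W_j},
\end{equation*}
where the product form uses independence of the $W_j$.

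The next step, which is the essential one, is to bound $\E e^{-\lambda W_j}$ using $W_j \geq 0$. I would invoke the elementary inequality $e^y \leq 1 + y + y^2/2$ valid for all $y \leq 0$ (easily checked by noting that $g(y) = 1+y+y^2/2 - e^y$ satisfies $g(0)=g'(0)=0$ and $g''(y) = 1 - e^y \geq 0$ on $(-\infty,0]$). Applied with $y = -\lambda W_j \leq 0$, this gives
\begin{equation*}
\E e^{-\lambda W_j} \leq 1 - \lambda\,\E W_j + \tfrac{\lambda^2}{2}\,\E W_j^2 \leq \exp\!\Bigl(-\lambda\,\E W_j + \tfrac{\lambda^2}{2}\,\E W_j^2\Bigr),
\end{equation*}
where the second step uses $1+x \leq e^x$. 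Note that the non-negativity of $W_j$ enters only here and is exactly what makes the one-sided (lower-tail) inequality clean: we do not need any sub-Gaussian or boundedness assumption on $W_j$, only that $-\lambda W_j \leq 0$.

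Multiplying these bounds across $j$, the $\lambda \E W_j$ terms cancel against the factors $e^{\lambda \E W_j}$ coming from centering, yielding
\begin{equation*}
P(T \geq t) \leq \exp\!\Bigl(-\lambda t + \tfrac{\lambda^2}{2}\sum_{j=1}^p \E W_j^2\Bigr) = \exp\!\Bigl(-\lambda t + \tfrac{\lambda^2 v}{2}\Bigr).
\end{equation*}
Finally, I would optimize the right-hand side over $\lambda > 0$: the minimizer is $\lambda^\ast = t/v$, and substitution gives $P(T \geq t) \leq \exp(-t^2/(2v))$, as claimed.

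There is no real obstacle here; the only subtle point is recognizing that the one-sided bound $e^y \leq 1+y+y^2/2$ for $y \leq 0$ is exactly what converts non-negativity of $W_j$ into a Bernstein/Bennett-type lower-tail control, with $v = \sum_j \E W_j^2$ serving as the variance proxy (crucially, $\E W_j^2$ rather than $\mathrm{Var}(W_j)$, which is larger and makes the inequality slightly weaker but dimension-free in its derivation). This is a standard trick in the concentration literature, and the proof is a three-line Chernoff argument once the key inequality is in place.
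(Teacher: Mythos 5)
Your proof is correct and complete. The paper itself does not prove this statement; it is imported verbatim as Lemma~1 of \citet{borysov2014}, and your Chernoff-method argument --- using $e^{y}\leq 1+y+y^{2}/2$ for $y\leq 0$ to exploit non-negativity, cancelling the centering terms, and optimizing at $\lambda^{\ast}=t/v$ --- is exactly the standard derivation of this lower-tail bound (sometimes called Maurer's inequality), so there is nothing to compare against and no gap to report.
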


The following lemma is an extension of Lemma 2 in \citet{borysov2014}.
\begin{lemma}
\label{lemma:2}
Let $0<a<\mu^2_{\tilde{X}}+\overline{\lambda}_X$.
Then
\begin{equation*}
P(\lVert X\rVert^2 <ap ) \leq \exp\left(-\frac{p^2(\mu^2_{\tilde{X}}+\overline{\lambda}_X -a)^2}{2\sum_{j=1}^p( \tilde{\mu}_{j,X}^4 +6\tilde{\mu}_{k,X}^2\lambda_{j,X}+3\lambda_{j,X}^2)}\right).
\end{equation*}
\end{lemma}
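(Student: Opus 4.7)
The plan is to reduce the claim to a direct application of Lemma~\ref{lemma:BHM1} after recognizing that the transformation defining $\tilde{X}$ is an orthogonal change of basis. Unpacking the definition, $\Sigma_X^{-1/2} = Q_X\Lambda_X^{-1/2}Q_X^T$, so
\begin{equation*}
\Lambda_X^{1/2}Q_X^T\Sigma_X^{-1/2} = \Lambda_X^{1/2}Q_X^TQ_X\Lambda_X^{-1/2}Q_X^T = Q_X^T,
\end{equation*}
which means $\tilde{X}=Q_X^TX$. Since $Q_X$ is orthogonal, $\lVert X\rVert^2 = \lVert \tilde{X}\rVert^2 = \sum_{j=1}^p \tilde{X}_j^2$, and the coordinates $\tilde{X}_j\sim N(\tilde{\mu}_{j,X},\lambda_{j,X})$ are independent because $\mathrm{Cov}(\tilde{X})=\Lambda_X$ is diagonal.

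Next, I would set $W_j:=\tilde{X}_j^2$ and compute the relevant moments using standard Gaussian formulas. One obtains $\mathbb{E} W_j = \tilde{\mu}_{j,X}^2+\lambda_{j,X}$, hence $\sum_{j=1}^p \mathbb{E} W_j = p(\mu_{\tilde{X}}^2+\overline{\lambda}_X)$, and $\mathbb{E} W_j^2 = \mathbb{E}\tilde{X}_j^4 = \tilde{\mu}_{j,X}^4+6\tilde{\mu}_{j,X}^2\lambda_{j,X}+3\lambda_{j,X}^2$, so that $v:=\sum_{j=1}^p \mathbb{E} W_j^2$ is exactly the quantity appearing in the denominator of the target exponent.

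Finally, I would rewrite the event in centered form. Setting $S=\sum_{j=1}^p(W_j-\mathbb{E} W_j)$,
\begin{equation*}
\{\lVert X\rVert^2 < ap\} = \{S < -t\}, \qquad t:=p(\mu_{\tilde{X}}^2+\overline{\lambda}_X-a),
\end{equation*}
and the hypothesis $a<\mu_{\tilde{X}}^2+\overline{\lambda}_X$ ensures $t>0$. Since the $W_j$ are independent and non-negative with finite second moments, Lemma~\ref{lemma:BHM1} applies and yields $P(S\leq -t)\leq \exp(-t^2/(2v))$, which is precisely the stated bound.

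I do not expect any serious obstacle: the entire argument is a routine application of a one-sided Bernstein-type inequality for sums of independent non-negative variables. The only non-mechanical step is the initial observation that the apparently complicated similarity transform collapses to $Q_X^T$, which is what lets us keep $\lVert X\rVert$ invariant while simultaneously diagonalizing the covariance; once this is noted, the computation of the two moments and the substitution into Lemma~\ref{lemma:BHM1} are immediate.
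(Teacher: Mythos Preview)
Your proposal is correct and follows essentially the same route as the paper: both reduce to applying Lemma~\ref{lemma:BHM1} with $W_j=\tilde{X}_j^2$, after computing $\mathbb{E}\tilde{X}_j^2$ and $\mathbb{E}\tilde{X}_j^4$ for the independent coordinates $\tilde{X}_j\sim N(\tilde{\mu}_{j,X},\lambda_{j,X})$. Your observation that the defining transformation collapses to $Q_X^T$, and hence preserves the Euclidean norm, makes explicit a step the paper simply asserts when writing $P(\lVert X\rVert^2<ap)=P(\lVert\tilde{X}\rVert^2<ap)$.
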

\begin{proof}
Note that elements of $\tilde{X}$ are independent and that $\tilde{X}_{j}\sim N(\tilde{\mu}_{j,X},\lambda_{j,X})$.
Thus, we have
\begin{eqnarray*}
&&\E\tilde{X}_{j}^2= \tilde{\mu}_{j,X}^2+\lambda_{j,X},\quad \mbox{Var}(\tilde{X}_j^2) = 2(\lambda_{j,X}^2+ 2\tilde{\mu}_{j,X}^2\lambda_{j,X}),\\
&&\E\tilde{X}_{j}^4 = \tilde{\mu}_{j,X}^4 +6\tilde{\mu}_{j,X}^2\lambda_{j,X}+3\lambda_{j,X}^2.
\end{eqnarray*}
Applying Lemma~\ref{lemma:BHM1} with $W_i = \tilde{X}_i^2$ gives
\begin{eqnarray*}
&&P(\lVert X\rVert^2 <ap)
=P(\lVert \tilde{X}\rVert^2 <ap)
=P\left[\sum_{j=1}^p (\tilde{X}_j^2 - \tilde{\mu}_{j,X}^2 -\lambda_{j,X})<-p(\mu^2_{\tilde{X}}+\overline{\lambda}_X -a)\right]\\
&&\leq \exp\left(-\frac{p^2(\mu^2_{\tilde{X}}+\overline{\lambda}_X -a)^2}{2\sum_{j=1}^p( \tilde{\mu}_{j,X}^4 +6\tilde{\mu}_{j,X}^2\lambda_{j,X}+3\lambda_{j,X}^2)}\right).
\end{eqnarray*}
\end{proof}

The following is an extension of Lemma 3 in \citet{borysov2014}.
\begin{lemma}
\label{lemma:3}
Let $a>\overline{\lambda}_X + \mu^2_{\tilde{X}}$.
Then
\begin{equation*}
P(\lVert X\rVert^2 >ap ) \leq \exp\left(-\frac{1}{2}\left(p+\sum_{j=1}^p\frac{a}{\lambda_{j,X}} -\sum_{j=1}^p\sqrt{1+2\frac{a}{\lambda_{j,X}}}\right)\right).
\end{equation*}
\end{lemma}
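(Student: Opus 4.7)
The plan is to reduce to a Chernoff/MGF argument on the diagonalized coordinates, mirroring the setup used for the lower tail in Lemma~\ref{lemma:2}. Since the map $\tilde X = \Lambda_X^{1/2} Q_X^T \Sigma_X^{-1/2} X$ simplifies to the orthogonal rotation $\tilde X = Q_X^T X$, we have $\lVert X\rVert^2 = \lVert\tilde X\rVert^2 = \sum_{j=1}^p \tilde X_j^2$ with independent coordinates $\tilde X_j \sim N(\tilde\mu_{j,X},\lambda_{j,X})$. First I would compute the joint moment generating function, which by independence factors as
\[
\E\Bigl[\exp\Bigl(s\sum_{j=1}^p \tilde X_j^2\Bigr)\Bigr] = \prod_{j=1}^p (1-2s\lambda_{j,X})^{-1/2}\exp\Bigl(\frac{s\tilde\mu_{j,X}^2}{1-2s\lambda_{j,X}}\Bigr),\qquad 0<s<\frac{1}{2\max_j\lambda_{j,X}}.
\]

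Next, applying Markov's inequality to $\exp(s\lVert X\rVert^2)$ yields, for any admissible $s$,
\[
P(\lVert X\rVert^2>ap)\leq \exp\Bigl(-sap -\tfrac{1}{2}\sum_{j=1}^p\log(1-2s\lambda_{j,X})+\sum_{j=1}^p \frac{s\tilde\mu_{j,X}^2}{1-2s\lambda_{j,X}}\Bigr).
\]
Using the algebraic identity $1+\beta-\sqrt{1+2\beta}=(\sqrt{1+2\beta}-1)^2/2$, the target exponent in the lemma can be rewritten as $\tfrac{1}{4}\sum_{j=1}^p(\sqrt{1+2a/\lambda_{j,X}}-1)^2$, a coordinatewise sum. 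This structure suggests performing the Cram\'er--Chernoff step coordinatewise, with per-index tilts $s_j$ chosen so that $1-2s_j\lambda_{j,X}=1/\sqrt{1+2a/\lambda_{j,X}}$, and then recombining through independence. The noncentral contribution from $\tilde\mu_{j,X}^2$ should be absorbed using the standing hypothesis $a>\overline{\lambda}_X+\mu^2_{\tilde X}$, which ensures a nonnegative gap between $ap$ and $\E\lVert\tilde X\rVert^2$.

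The main obstacle will be reconciling a \emph{single} Chernoff parameter (required in the standard MGF bound applied to the sum) with the coordinatewise product form of the target exponent. I would try to circumvent this either by (a) a change-of-measure or tilting argument that lets one treat the event $\{\sum_j\tilde X_j^2>ap\}$ through per-coordinate tail bounds, or (b) applying entropy/log-Sobolev techniques for Gaussian quadratic forms from \citet{boucheron2013concentration}, which naturally produce exponents of the form $\sum_j g(a,\lambda_{j,X})$. Keeping careful track of the mean terms, and verifying the hypothesis $a>\overline{\lambda}_X+\mu^2_{\tilde X}$ is exactly what is needed to make the Chernoff optimization nondegenerate and the bound informative, will be the delicate step.
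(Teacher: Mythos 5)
There is a genuine gap. Your setup is exactly the paper's: diagonalize via $\tilde X = Q_X^T X$ (so $\lVert X\rVert^2=\sum_j\tilde X_j^2$ with independent noncentral components), compute the product MGF, and apply Markov's inequality. But the proposal then stops at precisely the step that constitutes the proof — converting the single-parameter Chernoff exponent $-sap-\tfrac12\sum_j\log(1-2s\lambda_{j,X})+\sum_j s\tilde\mu_{j,X}^2/(1-2s\lambda_{j,X})$ into the coordinatewise sum $-\tfrac12\sum_j\bigl(1+a/\lambda_{j,X}-\sqrt{1+2a/\lambda_{j,X}}\bigr)$ — and offers only two unexecuted candidate strategies. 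A plan that names the hard step but does not carry it out is not a proof of the stated inequality.

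For comparison, the paper completes this step as follows: it first bounds each log-term via $-\log(1-u)-u\leq u^2/\{2(1-u)\}$, so the centered exponent becomes $\sum_j\bigl\{\gamma^2\lambda_{j,X}^2/(1-2\gamma\lambda_{j,X})-\gamma t/p\bigr\}$ after distributing $t=ap$ in equal shares $t/p$ across coordinates; it then applies the variational identity $\inf_{\gamma\in(0,1/c)}\bigl\{v\gamma^2/\{2(1-c\gamma)\}-t\gamma\bigr\}=-(v/c^2)h(ct/v)$ with $h(u)=1+u-\sqrt{1+2u}$ coordinatewise (this per-coordinate optimization of a shared tilt is indeed the delicate point you flagged), and finally sends $\gamma\downarrow 0$ to make the noncentrality factors $\exp(\gamma\tilde\mu_{j,X}^2/(1-2\gamma\lambda_{j,X}))$ tend to $1$. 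Note in particular that the mean terms are eliminated by this limit, not "absorbed using the hypothesis $a>\overline{\lambda}_X+\mu^2_{\tilde X}$" as you suggest; that hypothesis only guarantees the deviation is on the correct side of $\E\lVert X\rVert^2$ so the bound is nonvacuous. To turn your proposal into a proof you would need to actually execute the split-and-optimize (or tilting) argument and verify the treatment of the $\tilde\mu_{j,X}^2$ terms.
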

\begin{proof}
By Markov's inequality, for $t >\sum_{j=1}^p\lambda_{j,X}+\tilde{\mu}^2_{j,X}$, we get
\begin{eqnarray*}
&&P\left(\sum_{j=1}^pX_j^2\geq t\right)=P\left(\sum_{j=1}^p\tilde{X}_j^2\geq t\right)\\
&&= P\left[\exp\left(\sum_{j=1}^p\gamma\tilde{X}_j^2-\gamma\lambda_{j,X}-\gamma\tilde{\mu}_{j,X}^2\right)\geq \exp\left(\gamma t -
\gamma\sum_{j=1}^p(\lambda_{j,x}+\tilde{\mu}_{j,X}^2)\right)\right] \\
&&\leq \exp\left(-\gamma \left(t-\sum_{j=1}^p(\tilde{\mu}_{j,X}^2+\lambda_{j,X})\right)\right) \prod_{j=1}^p\E \exp((\gamma\lambda_{j,X})\tilde{X}_j^2/\lambda_{j,x})\\
&&= \exp\left(-\gamma \left(t-\sum_{j=1}^p\tilde{\mu}_{j,X}^2\right)\right)
\prod_{j=1}^p\exp\left(-\gamma\lambda_{j,X}-\frac{1}{2}\log (1-2\gamma \lambda_{j,X})\right)\exp\left(\frac{\gamma\tilde{\mu}_{j,X}^2}{1-2\gamma \lambda_{j,X}}\right).
\end{eqnarray*}
Since for all $u\in (0,1)$,
$-\log (1-u) - u \leq u^2/\{2(1-u)\}$
(see page 28 of \citet{boucheron2013concentration}),
the above display is bounded above by
\begin{equation*}
\exp\left(-\gamma \left(t-\sum_{i=1}^p\tilde{\mu}_{i,X}^2\right)\right)
\prod_{i=1}^p\exp\left(\frac{\gamma^2\lambda_{i,X}^2}{1-2\gamma\lambda_{i,X}}\right)\exp\left(\frac{\gamma\tilde{\mu}_{i,X}^2}{1-2\gamma \lambda_{i,X}}\right).
\end{equation*}
Using the following result from \citet{boucheron2013concentration} 
\begin{equation*}
\label{eqn:ineq2}
\inf_{\gamma\in(0,1/c)}\frac{v\gamma^2}{2(1-c\gamma)} -t\gamma
=-\frac{v}{c^2}h\left(\frac{ct}{v}\right).
\end{equation*}
wherein
$h(u) = 1+u -\sqrt{1+2u}, u>0$,
we further obtain the upper bound
\begin{equation*}
\exp\left(\gamma \sum_{i=1}^p\tilde{\mu}_{i,X}^2\right)
\prod_{i=1}^p\exp\left(-\frac{1}{2}\left(1+\frac{t}{\lambda_{i,X}p} -\sqrt{1+2\frac{t}{\lambda_{i,X}p}}\right)\right)\exp\left(\frac{\gamma\tilde{\mu}_{i,X}^2}{1-2\gamma \lambda_{i,X}}\right).
\end{equation*}
Taking $\gamma\downarrow 0$, the upper bound becomes
\begin{equation*}
\exp\left(-\frac{1}{2}\left(p+\sum_{i=1}^p\frac{t}{\lambda_{i,X}p} -\sum_{i=1}^p\sqrt{1+2\frac{t}{\lambda_{i,X}p}}\right)\right).
\end{equation*}
Choosing $t=ap$, we have
\begin{equation*}
P\left(\sum_{i=1}^p\tilde{X}_i^2\geq ap\right) \leq
\exp\left(-\frac{1}{2}\left(p+\sum_{i=1}^p\frac{a}{\lambda_{i,X}} -\sum_{i=1}^p\sqrt{1+2\frac{a}{\lambda_{i,X}}}\right)\right).
\end{equation*}
Note that $f(u)=(1+2u)^{1/2} \leq u$ for $u\geq 0$ because $f'(0) = 1$ and $f'$ is decreasing for $u>0$.
Thus, $P\left(\sum_{i=1}^p\tilde{X}_i^2\geq ap\right)\rightarrow 0$ as $p\rightarrow \infty$.
\end{proof}

\begin{proof}[Proof of Theorem \ref{thm:clst}]
For simplicity, we present the proof for the case of $K=2$; the proof can be easily generalized to $K>2$.
Let $n_1$ and $n_2$ be the sample sizes for the first and second subpopulations, respectively.
Define
\begin{eqnarray*}
&&E_1 = \left\{\max_{i,j}\lVert X_i-X_j\rVert <\min_{k,l}\lVert X_k-Y_l\rVert\right\},\quad
E_2 = \left\{\max_{i,j}\lVert Y_i-Y_j\rVert <\min_{k,l}\lVert X_k-Y_l\rVert\right\},\\
&&E_3 = \left\{\max_{i,j}\lVert X_i-X_j\rVert^2 <ap\right\},\quad
E_4 = \left\{\max_{i,j}\lVert Y_i-Y_j\rVert^2 <ap\right\},\\
&&E_5 = \left\{\max_{k,l}\lVert X_k-Y_l\rVert^2 >ap\right\}.
\end{eqnarray*}
for a fixed $a >0$ satisfying the assumption.
The intersection $E_1\cap E_2$ is contained in the event that the clustering performs in the way that two subpopulations are joined in the last step.
The intersection  $E_3\cap E_4\cap E_5$ is also contained in $E_1\cap E_2$, or in other words, $P((E_1\cap E_2)^c)\leq P(E_3^c)+P(E_4^c)+P(E_5^c)$. 
Thus, it suffices to show that $P(E_3^c)+P(E_4^c)+P(E_5^c) \rightarrow 0$ as $n,p\rightarrow \infty$.

For $E_3^c$ and $E_4^c$ we have by Lemma \ref{lemma:3} that
\begin{eqnarray*}
P(E_3^c) &\leq&  \sum_{i,j}^nP(\lVert X_i-X_j\rVert^2 >ap) = \frac{n_1(n_1-1)}{2}P(\lVert X_1-X_2\rVert^2>ap)\\
&\leq &\frac{n_1(n_1-1)}{2} \exp\left(-\frac{1}{2}\left(p+\sum_{l=1}^p\frac{a}{2\lambda_{l,X}} -\sum_{l=1}^p\sqrt{1+\frac{a}{\lambda_{l,X}}}\right)\right)\\
&\leq & \exp\left(-\frac{1}{2}\left(p+\sum_{l=1}^p\frac{a}{2\lambda_{l,X}} -\sum_{l=1}^p\sqrt{1+\frac{a}{\lambda_{l,X}}}\right) + 2\log n_1\right)\\
&= & \exp\left(-\frac{p}{2}\left(1+\frac{1}{p}\sum_{l=1}^p\frac{a}{2\lambda_{l,X}} -\frac{1}{p}\sum_{l=1}^p\sqrt{1+\frac{a}{\lambda_{l,X}}}+4\frac{\log n_1}{p}\right) \right)
\end{eqnarray*}
and that
\begin{eqnarray*}
P(E_4^c)
&\leq &
\exp\left(-\frac{p}{2}\left(1+\frac{1}{p}\sum_{l=1}^p\frac{a}{2\lambda_{l,Y}} -\frac{1}{p}\sum_{l=1}^p\sqrt{1+\frac{a}{\lambda_{l,Y}}}+4\frac{\log n_2}{p}\right) \right).
\end{eqnarray*}
for $a$ satisfying
$
a>2\max\{\overline{\lambda}_{X},\overline{\lambda}_{Y}\}$.

Note that $\log n_k/p\rightarrow 0,k=1,2$ as $n_1,n_2,p\rightarrow \infty$.
Moreover $x-\sqrt{1+2x}\geq 0$ for $x>0$.
Thus, $P(E_3^c)\rightarrow 0 $ and $P(E_4^c)\rightarrow 0$ as $n_1,n_2,p\rightarrow \infty$.
For $E_5^c$, we have by Lemma \ref{lemma:2} that
\begin{eqnarray*}
P(E_5^c) &\leq&  \sum_{i,j}P(\lVert X_i-Y_j\rVert^2 <ap) \leq  n_1n_2P(\lVert X_1-Y_1\rVert^2<ap)\\
&& \leq   \exp\left(-\frac{p^2(\mu^2_{\tilde{Z}}+\overline{\lambda}_Z -a)^2}{2\sum_{l=1}^p( \tilde{\mu}_{i,Z}^4 +6\tilde{\mu}_{l,Z}^2\lambda_{l,Z}+3\lambda_{l,Z}^2)}+\log n_1n_2\right)
\end{eqnarray*}
for $a<\mu^2_{\tilde{Z}}+\overline{\lambda}_Z$.
Given the assumption 
$c_{10}\leq \lambda_{j,X}\leq c_{11}$, $c_{10}\leq \lambda_{j,Y}\leq c_{11}$, $\max\{|\mu_{j,X}|,|\mu_{j,Y}|\}\leq c_{11}, j=1,2,\ldots$. Thus, we get $P(E_5^c) \rightarrow 0$ as $n_1,n_2,p\rightarrow \infty$.

Since
$2\overline{\lambda}_X-\lambda_{p,X}-\lambda_{p,Y}
\geq 2\overline{\lambda}_X-\overline{\lambda}_Z$, and
$2\overline{\lambda}_Y-\lambda_{p,X}-\lambda_{p,Y}
\geq 2\overline{\lambda}_Y-\overline{\lambda}_Z$,
the assumption that $\mu^2_{\tilde{Z}}>2\min\{\overline{\lambda}_X,\overline{\lambda}_Y\}-\lambda_{p,X}-\lambda_{p,Y}$ implies that there exists $a$ such that $a<\overline{\mu}_{\tilde{Z}}+\overline{\lambda}_Z$ and $a>2\max\{\overline{\lambda}_X,\overline{\lambda}_Y\}$. This completes the proof.
\end{proof}

\bibliographystyle{plainnat}
\bibliography{grpLap}

\begin{thebibliography}{36}
\providecommand{\natexlab}[1]{#1}
\providecommand{\url}[1]{\texttt{#1}}
\expandafter\ifx\csname urlstyle\endcsname\relax
  \providecommand{\doi}[1]{doi: #1}\else
  \providecommand{\doi}{doi: \begingroup \urlstyle{rm}\Url}\fi

\bibitem[Borysov et~al.(2014)Borysov, Hannig, and Marron]{borysov2014}
Petro Borysov, Jan Hannig, and JS~Marron.
\newblock Asymptotics of hierarchical clustering for growing dimension.
\newblock \emph{Journal of Multivariate Analysis}, 124:\penalty0 465--479,
  2014.

\bibitem[Boucheron et~al.(2013)Boucheron, Lugosi, and
  Massart]{boucheron2013concentration}
St{\'e}phane Boucheron, G{\'a}bor Lugosi, and Pascal Massart.
\newblock \emph{Concentration inequalities: A nonasymptotic theory of
  independence}.
\newblock Oxford University Press, 2013.

\bibitem[Boyd et~al.(2011)Boyd, Parikh, Chu, Peleato, and Eckstein]{BoydPCPE11}
Stephen Boyd, Neal Parikh, Eric Chu, Borja Peleato, and Jonathan Eckstein.
\newblock Distributed optimization and statistical learning via the alternating
  direction method of multipliers.
\newblock \emph{Foundations and Trends in Machine Learning}, 3\penalty0
  (1):\penalty0 1--122, 2011.

\bibitem[Cai et~al.(2011)Cai, Liu, and Luo]{MR2847973}
Tony Cai, Weidong Liu, and Xi~Luo.
\newblock A constrained {$\ell_1$} minimization approach to sparse precision
  matrix estimation.
\newblock \emph{J. Amer. Statist. Assoc.}, 106\penalty0 (494):\penalty0
  594--607, 2011.
\newblock ISSN 0162-1459.

\bibitem[Chung(1997)]{chung1997}
Fan~RK Chung.
\newblock \emph{Spectral graph theory}, volume~92.
\newblock American Mathematical Soc., 1997.

\bibitem[Danaher et~al.(2014)Danaher, Wang, and Witten]{Danaher}
Patrick Danaher, Pei Wang, and Daniela~M Witten.
\newblock The joint graphical lasso for inverse covariance estimation across
  multiple classes.
\newblock \emph{Journal of the Royal Statistical Society: Series B (Statistical
  Methodology)}, 76\penalty0 (2):\penalty0 373--397, 2014.

\bibitem[d'Aspremont et~al.(2008)d'Aspremont, Banerjee, and
  El~Ghaoui]{MR2399568}
Alexandre d'Aspremont, Onureena Banerjee, and Laurent El~Ghaoui.
\newblock First-order methods for sparse covariance selection.
\newblock \emph{SIAM J. Matrix Anal. Appl.}, 30\penalty0 (1):\penalty0 56--66,
  2008.
\newblock ISSN 0895-4798.

\bibitem[Friedman et~al.(2007)Friedman, Hastie, and Tibshirani]{FHTglasso}
Jerome Friedman, Trevor Hastie, and Robert Tibshirani.
\newblock Sparse inverse covariance estimation with the graphical lasso.
\newblock \emph{Biostatistics}, 9\penalty0 (3):\penalty0 432--441, 2007.

\bibitem[Guo et~al.(2011)Guo, Levina, Michailidis, and Zhu]{MR2804206}
Jian Guo, Elizaveta Levina, George Michailidis, and Ji~Zhu.
\newblock Joint estimation of multiple graphical models.
\newblock \emph{Biometrika}, 98\penalty0 (1):\penalty0 1--15, 2011.
\newblock ISSN 0006-3444.

\bibitem[Huang et~al.(2011)Huang, Ma, Li, and Zhang]{MR2893860}
Jian Huang, Shuangge Ma, Hongzhe Li, and Cun-Hui Zhang.
\newblock The sparse {L}aplacian shrinkage estimator for high-dimensional
  regression.
\newblock \emph{Ann. Statist.}, 39\penalty0 (4):\penalty0 2021--2046, 2011.
\newblock ISSN 0090-5364.

\bibitem[Ideker and Krogan(2012)]{ideker:krogan2012}
Trey Ideker and Nevan~J Krogan.
\newblock Differential network biology.
\newblock \emph{Molecular systems biology}, 8\penalty0 (1), 2012.

\bibitem[J{\"o}nsson et~al.(2010)J{\"o}nsson, Staaf, Vallon-Christersson,
  Ringn{\'e}r, Holm, Hegardt, Gunnarsson, Fagerholm, Strand, Agnarsson,
  et~al.]{pmid20576095}
G{\"o}ran J{\"o}nsson, Johan Staaf, Johan Vallon-Christersson, Markus
  Ringn{\'e}r, Karolina Holm, Cecilia Hegardt, Haukur Gunnarsson, Rainer
  Fagerholm, Carina Strand, Bjarni~A Agnarsson, et~al.
\newblock Genomic subtypes of breast cancer identified by array-comparative
  genomic hybridization display distinct molecular and clinical
  characteristics.
\newblock \emph{Breast Cancer Research}, 12\penalty0 (3):\penalty0 1--14, 2010.

\bibitem[Kolar et~al.(2009)Kolar, Song, and Xing]{kolar2009}
Mladen Kolar, Le~Song, and Eric~P Xing.
\newblock Sparsistent learning of varying-coefficient models with structural
  changes.
\newblock In \emph{Advances in Neural Information Processing Systems}, pages
  1006--1014, 2009.

\bibitem[Lauritzen(1996)]{lauritzen1996}
Steffen~L Lauritzen.
\newblock \emph{Graphical models}.
\newblock Oxford University Press, 1996.

\bibitem[Li and Li(2010)]{MR2758338}
Caiyan Li and Hongzhe Li.
\newblock Variable selection and regression analysis for graph-structured
  covariates with an application to genomics.
\newblock \emph{Ann. Appl. Stat.}, 4\penalty0 (3):\penalty0 1498--1516, 2010.
\newblock ISSN 1932-6157.

\bibitem[Li and Zhang(2010)]{LiZhang2010}
Fan Li and Nancy~R Zhang.
\newblock Bayesian variable selection in structured high-dimensional covariate
  spaces with applications in genomics.
\newblock \emph{Journal of the American Statistical Association}, 105\penalty0
  (491):\penalty0 1202--1214, 2010.

\bibitem[Liu et~al.(2011)Liu, Lozano, CHAKRABORTY, and LI]{LiuEtal2011}
F~Liu, AC~Lozano, S~CHAKRABORTY, and F~LI.
\newblock A graph laplacian prior for variable selection and grouping.
\newblock \emph{Biometrika}, 98\penalty0 (1):\penalty0 1--31, 2011.

\bibitem[Liu et~al.(2014)Liu, Chakraborty, Li, Liu, Lozano,
  et~al.]{LiuEtal2014}
Fei Liu, Sounak Chakraborty, Fan Li, Yan Liu, Aurelie~C Lozano, et~al.
\newblock Bayesian regularization via graph laplacian.
\newblock \emph{Bayesian Analysis}, 9\penalty0 (2):\penalty0 449--474, 2014.

\bibitem[Meinshausen and B{\"u}hlmann(2006)]{MR2278363}
Nicolai Meinshausen and Peter B{\"u}hlmann.
\newblock High-dimensional graphs and variable selection with the lasso.
\newblock \emph{Ann. Statist.}, 34\penalty0 (3):\penalty0 1436--1462, 2006.
\newblock ISSN 0090-5364.

\bibitem[Negahban et~al.(2012{\natexlab{a}})Negahban, Ravikumar, Wainwright,
  and Yu]{Negahban}
Sahand~N. Negahban, Pradeep Ravikumar, Martin~J. Wainwright, and Bin Yu.
\newblock A unified framework for high-dimensional analysis of $m$-estimators
  with decomposable regularizers.
\newblock \emph{Stat. Sci.}, 27\penalty0 (4):\penalty0 538--557,
  2012{\natexlab{a}}.

\bibitem[Negahban et~al.(2012{\natexlab{b}})Negahban, Ravikumar, Wainwright,
  and Yu]{NegahbanSupp}
Sahand~N. Negahban, Pradeep Ravikumar, Martin~J. Wainwright, and Bin Yu.
\newblock Supplementary material for ``a unified framework for high-dimensional
  analysis of $m$-estimators with decomposable regularizers''.
\newblock \emph{Stat. Sci.}, 2012{\natexlab{b}}.

\bibitem[Perou et~al.(2000)Perou, S{\o}rlie, Eisen, van~de Rijn, Jeffrey, Rees,
  Pollack, Ross, Johnsen, Akslen, et~al.]{perou2000}
Charles~M Perou, Therese S{\o}rlie, Michael~B Eisen, Matt van~de Rijn,
  Stefanie~S Jeffrey, Christian~A Rees, Jonathan~R Pollack, Douglas~T Ross,
  Hilde Johnsen, Lars~A Akslen, et~al.
\newblock Molecular portraits of human breast tumours.
\newblock \emph{Nature}, 406\penalty0 (6797):\penalty0 747--752, 2000.

\bibitem[Rapaport et~al.(2007)Rapaport, Zinovyev, Dutreix, Barillot, and
  Vert]{RapaportZDBV07}
Franck Rapaport, Andrei Zinovyev, Marie Dutreix, Emmanuel Barillot, and
  Jean-Philippe Vert.
\newblock Classification of microarray data using gene networks.
\newblock \emph{BMC Bioinformatics}, 8, 2007.

\bibitem[Ravikumar et~al.(2011)Ravikumar, Wainwright, Raskutti, and
  Yu]{MR2836766}
Pradeep Ravikumar, Martin~J. Wainwright, Garvesh Raskutti, and Bin Yu.
\newblock High-dimensional covariance estimation by minimizing {$\ell\sb
  1$}-penalized log-determinant divergence.
\newblock \emph{Electron. J. Stat.}, 5:\penalty0 935--980, 2011.
\newblock ISSN 1935-7524.

\bibitem[Rothman et~al.(2008)Rothman, Bickel, Levina, and Zhu]{MR2417391}
Adam~J. Rothman, Peter~J. Bickel, Elizaveta Levina, and Ji~Zhu.
\newblock Sparse permutation invariant covariance estimation.
\newblock \emph{Electron. J. Stat.}, 2:\penalty0 494--515, 2008.
\newblock ISSN 1935-7524.
\newblock \doi{10.1214/08-EJS176}.

\bibitem[Sedaghat et~al.(2014)Sedaghat, Saegusa, Randolph, and
  Shojaie]{sedaghat2014}
Nafiseh Sedaghat, Takumi Saegusa, Timothy Randolph, and Ali Shojaie.
\newblock Comparative study of computational methods for reconstructing genetic
  networks of cancer-related pathways.
\newblock \emph{Cancer Informatics}, 13\penalty0 (Suppl 2):\penalty0 55--66, 09
  2014.

\bibitem[Shojaie and Michailidis(2010)]{ShojaieM10}
Ali Shojaie and George Michailidis.
\newblock Penalized principal component regression on graphs for analysis of
  subnetworks.
\newblock In John~D. Lafferty, Christopher K.~I. Williams, John Shawe-Taylor,
  Richard~S. Zemel, and Aron Culotta, editors, \emph{NIPS}, pages 2155--2163.
  Curran Associates, Inc., 2010.

\bibitem[St{\"a}dler et~al.(2010)St{\"a}dler, B{\"u}hlmann, and Van
  De~Geer]{stadler2010}
Nicolas St{\"a}dler, Peter B{\"u}hlmann, and Sara Van De~Geer.
\newblock $\ell_1$-penalization for mixture regression models.
\newblock \emph{Test}, 19\penalty0 (2):\penalty0 209--256, 2010.

\bibitem[Tibshiranit(1996)]{tibshirani1996}
Robert Tibshiranit.
\newblock Regression shrinkage and selection via the lasso.
\newblock \emph{Journal of the Royal Statistical Society. Series B
  (Methodological)}, 58\penalty0 (1):\penalty0 267--288, 1996.

\bibitem[Wang et~al.(2014)Wang, Sharpnack, Smola, and Tibshirani]{WangEtal2014}
Yu-Xiang Wang, James Sharpnack, Alex Smola, and Ryan~J Tibshirani.
\newblock Trend filtering on graphs.
\newblock \emph{arXiv preprint arXiv:1410.7690}, 2014.

\bibitem[Weinberger et~al.(2006)Weinberger, Sha, Zhu, and
  Saul]{WeinbergerEtal2006}
Kilian~Q Weinberger, Fei Sha, Qihui Zhu, and Lawrence~K Saul.
\newblock Graph laplacian regularization for large-scale semidefinite
  programming.
\newblock In \emph{Advances in neural information processing systems {(NIPS)}},
  pages 1489--1496, 2006.

\bibitem[Yuan(2010)]{MR2719856}
Ming Yuan.
\newblock High dimensional inverse covariance matrix estimation via linear
  programming.
\newblock \emph{J. Mach. Learn. Res.}, 11:\penalty0 2261--2286, 2010.
\newblock ISSN 1532-4435.

\bibitem[Yuan and Lin(2007)]{MR2367824}
Ming Yuan and Yi~Lin.
\newblock Model selection and estimation in the {G}aussian graphical model.
\newblock \emph{Biometrika}, 94\penalty0 (1):\penalty0 19--35, 2007.
\newblock ISSN 0006-3444.

\bibitem[Zhao and Yu(2006)]{ZhaoYu2006}
Peng Zhao and Bin Yu.
\newblock On model selection consistency of lasso.
\newblock \emph{The Journal of Machine Learning Research}, 7:\penalty0
  2541--2563, 2006.

\bibitem[Zhao et~al.(2009)Zhao, Rocha, and Yu]{zhao2009composite}
Peng Zhao, Guilherme Rocha, and Bin Yu.
\newblock The composite absolute penalties family for grouped and hierarchical
  variable selection'.
\newblock \emph{{Annals of Statistics}}, 37\penalty0 (6A):\penalty0 3468--3497,
  2009.

\bibitem[Zhao and Shojaie(2015)]{ZhaoShojaie2015}
Sen Zhao and Ali Shojaie.
\newblock A significance test for graph-constrained estimation.
\newblock \emph{Biometrics \emph{(forthcoming)}}, 2015.

\end{thebibliography}
\end{document}